\title{\textbf{Pure Exploration with Structured Preference Feedback}}
\author{\textbf{Shubham Gupta$^1$, Aadirupa Saha$^1$, and Sumeet Katariya$^2$} \\
    $^1$Indian Institute of Science, Bangalore \\
    $^2$Amazon, Palo Alto, CA \\
    \texttt{[shubhamg, aadirupa]@iisc.ac.in}, \texttt{katsumee@amazon.com}
}
\date{}
\renewcommand{\cite}[1]{\citep{#1}}
\newcommand{\mul}[2]{\langle #1, #2 \rangle}
\newcommand{\ba}{\mathbf{a}}
\newcommand{\bb}{\mathbf{b}}
\newcommand{\be}{\mathbf{e}}
\newcommand{\bg}{\mathbf{g}}
\newcommand{\bM}{\mathbf{M}}
\newcommand{\bu}{\mathbf{u}}
\newcommand{\bx}{\mathbf{x}}
\newcommand{\bX}{\mathbf{X}}
\newcommand{\by}{\mathbf{y}}
\newcommand{\bmmu}{\bm{\mu}}
\newcommand{\bmepsilon}{\bm{\epsilon}}
\newcommand{\bmtheta}{\bm{\theta}}
\newcommand{\rmE}{\mathrm{E}}
\newcommand{\rmKL}[2]{\mathrm{KL}(#1 \, \vert\vert \, #2)}
\newcommand{\rmP}{\mathrm{P}}
\newcommand{\bbR}{\mathbb{R}}
\newcommand{\bbI}[1]{\mathbb{I}\{#1\}}
\newcommand{\calA}{\mathcal{A}}
\newcommand{\calC}{\mathcal{C}}
\newcommand{\calG}{\mathcal{G}}
\newcommand{\calN}{\mathcal{N}}
\newcommand{\htheta}{\hat{\bmtheta}}
\newcommand{\hDelta}{\hat{\Delta}}
\newcommand{\supt}{^{(t)}}
\newcommand{\sups}{^{(s)}}
\newcommand{\supS}{^S}
\newcommand{\sums}{\sum_{s=1}^t}
\newcommand{\sumS}{\sum_{S \subseteq_K \calA}}
\newcommand{\abs}[1]{\vert #1 \vert}
\newcommandx{\norm}[3][2={}, 3={}]{\vert\vert #1 \vert\vert_{#2}^{#3}}
\newcommandx{\normtwo}[1]{\norm{#1}[2]}
\newcommandx{\normtwosq}[1]{\norm{#1}[2][2]}
\newcommandx{\normvtinv}[1]{\norm{#1}[{V\supt}^{-1}]}
\newtheorem{lemma}{Lemma}
\newtheorem{theorem}{Theorem}
\newtheorem{assumption}{Assumption}
\newtheorem{corollary}{Corollary}
\newtheorem*{remark}{Remark}
\DeclareMathOperator*{\argmax}{arg\,max}
\DeclareMathOperator*{\argmin}{arg\,min}
\newcommand{\lambdamax}[1]{\lambda_{\mathrm{max}}(#1)}
\newcommand{\lambdamin}[1]{\lambda_{\mathrm{min}}(#1)}
\mathchardef\mhyphen="2D 
\newcommand{\algname}{{\tt BAI$\mhyphen$Lin$\mhyphen$MNL}\xspace}
\newcommand{\algnameadaptive}{{\tt BAI$\mhyphen$Lin$\mhyphen$MNL$\mhyphen$Adap}\xspace}
\begin{document}
\maketitle

\begin{abstract}
    We consider the problem of pure exploration with subset-wise preference feedback, which contains $N$ arms with features. The learner is allowed to query subsets of size $K$ and receives feedback in the form of a noisy winner. The goal of the learner is to identify the best arm efficiently using as few queries as possible. This setting is relevant in various online decision-making scenarios involving human feedback such as online retailing, streaming services, news feed, and online advertising; since it is easier and more reliable for people to choose a preferred item from a subset than to assign a likability score to an item in isolation. To the best of our knowledge, this is the first work that considers the subset-wise preference feedback model in a structured setting, which allows for potentially infinite set of arms. We present two algorithms that guarantee the detection of the best-arm in $\tilde{O} (\frac{d^2}{K \Delta^2})$ samples with probability at least $1 - \delta$, where $d$ is the dimension of the arm-features and $\Delta$ is the appropriate notion of utility gap among the arms. We also derive an instance-dependent lower bound of $\Omega(\frac{d}{\Delta^2} \log \frac{1}{\delta})$ which matches our upper bound on a worst-case instance. Finally, we run extensive experiments to corroborate our theoretical findings, and observe that our adaptive algorithm stops and requires up to 12x fewer samples than a non-adaptive algorithm.
\end{abstract}


\section{Introduction}
\label{section:introduction}

In the classical multi-armed bandits (MAB) setting, the agent pulls an arm at each time step and receives the corresponding reward \cite{AuerEtAl:2002:FiniteTimeAnalysisOfTheMultiarmedBanditProblem}. However, it is often more convenient for humans to choose a preferred item from a set than to assign a real-valued likability score to an item in isolation. The dueling bandit problem studies a variant of the MAB framework where the agent selects two arms at each step and obtains noisy feedback indicating the winner of a comparison between the two choices \cite{YueEtAl:2012:TheKArmedDuelingBanditsProblem}. This paper considers a more general feedback model, also known as the Multinomial Logit  Model (MNL model) \cite{Marden:1996:AnalyzingAndModelingRankData,SahaEtAl:2019:PACBattlingBanditsInThePlackettLuceModel}, where the agent selects $K \geq 2$ arms at each step and observes a noisy winner as feedback.

We also consider a structured setting where each arm is associated with a $d$-dimensional feature vector \cite{Auer:2002:UsingConfidenceBoundsForExploitationExplorationTradeoffs,LiEtAl:2010:AContextualBanditApproachToPersonalizedNewsArticleRecommendation}. The MNL feedback model with structured arms is a natural choice for several applications like online retailing, streaming services, news feed, and online advertising, which contain a large repository of arms. For example, in online advertising, users click on an ad out of a subset of K ads displayed to them. The features of the ad can be the image and text embedding of its contents learned by an off-the-shelf neural network. The structured feedback setting is well-suited for such applications where the number of arms $N$ is potentially infinite and new arms are constantly added.

We assume that the reward for an arm with feature vector $\bx$ is $\bx^\intercal \bmtheta^\ast$, where $\bmtheta^*$ is an unknown parameter. We study the pure-exploration or best-arm-identification problem of finding the best arm with high confidence \cite{BubeckEtAl:2009:PureExplorationInMultiArmedBanditsProblems,SoareEtAl:2014:BestArmIdentificationInLinearBandits}. This is different from the more commonly studied regret minimization problem. In pure exploration, the goal is to choose the subsets adaptively at each time so as to identify the best arm using as few queries as possible. 

We explain next the challenges in designing a provably-optimal algorithm for this problem, and our contributions to overcome them. As opposed to the standard linear bandits setting, the feedback under the MNL model is a non-linear function of the arm feature vectors (see Section \ref{section:problem_setting}). Moreover, this feedback is vector-valued and the elements of this vector are not independent. This makes it difficult to construct a confidence interval for the unknown parameter $\bmtheta^*$ using existing strategies \cite{LiEtAl:2017:ProvablyOptimalAlgorithmsForGeneralizedLinearContextualBandits,KazerouniEtAl:2019:BestArmIdentificationInGeneralizedLinearBandits}, since they are designed for scalar link functions. Using the mean-value theorem for vector-valued functions, we derive new concentration bounds for terms involving the feedback vectors, where existing strategies fail due to the dependence between the elements (see Section \ref{section:confidence_bound} for a more detailed description). The derived bound (Theorem \ref{theorem:confidence_interval}) can be of independent interest.

We use the confidence interval to design our algorithm \algname{}, which is a static allocation strategy, which means that the sequence of arm pulls is not influenced by the observed rewards \citep{SoareEtAl:2014:BestArmIdentificationInLinearBandits}. In the MNL setting, the number of actions available to the learner grows exponentially with $K$ as every subset of $K$ arms is an action. \algname{} offers an efficient two-layer greedy solution for selecting the subsets across time steps, and the arms within each subset. We prove that this greedy strategy is probably correct (returns the correct arm with failure probability at most $\delta$). We also derive an $\tilde{O}(\frac{d^2}{K \Delta_{\min}^2})$ upper bound on the sample complexity of this greedy strategy. Here, $\Delta_{\min} = \min_{\ba \neq \ba^*} \mul{\bmtheta^*}{\ba^* - \ba}$, where $\ba^*$ is the best arm. We also develop and analyze an adaptive allocation strategy \algnameadaptive{}, that is adaptive in batches. It stops and requires up to \textbf{12x} fewer samples than \algname{} in our experiments!

We then show that our \emph{algorithm and upper bound is minimax optimal by deriving an instance-dependent lower bound} for the sample complexity of any algorithm on a subclass of problems where the arms are linearly independent. We do so using the change-of-measure argument \citep{KaufmannEtAl:2016:OnTheComplexityOfBestArmIdentificationInMultiArmedBanditModels}, which requires the construction of an adversarial problem instance that has a different best-arm and that deviates from the given problem instance specified by $\bmtheta^*$ only on a handful of actions. This is non-trivial under the MNL feedback model where each action corresponds to a subset of K arms, and thus changing an arm changes multiple actions. A second layer of complexity exists in the structured setting since changing $\bmtheta^\ast$ changes the reward of multiple arms. We construct an adversarial problem instance and use it to derive an instance-dependent $\Omega(\frac{d}{\Delta_{\min}^2})$ lower bound in Theorem \ref{theorem:lower_bound}, and show that our proposed algorithm is minimax optimal.

Finally, we conduct experiments to a) verify that the sample complexity indeed scales with parameters $d,K$ as predicted by our upper bound, b) study the robustness of our algorithms to deviations from the MNL feedback model, and c) compare our algorithms to other baselines when $K=2$ (there are no known algorithms for our setting when $K>2$). We observe that our adaptive algorithm stops and requires up to 12x fewer samples than a non-adaptive algorithm.

\textbf{Related work. } 
The best-arm identification problem has been extensively studied in the classical MAB setting \cite{EvenDarEtAl:2006:ActionEliminationAndStoppingConditionsForTheMABandRLProblems,AudibertEtAl:2010:BestArmIdentificationInMultiArmedBandits,KalyanakrishnanEtAl:2012:PACSubsetSelectionInStochasticMultiarmedBandits,BubeckEtAl:2013:MultipleIdentificationsInMultiArmedBandits,JamiesonEtAl:2014:LilUCBAnOptimalExplorationAlgorithmForMAB}. However, as opposed to the case of independent arms, \citet{SoareEtAl:2014:BestArmIdentificationInLinearBandits} note that even pulling known sub-optimal arms may help in identifying the best arm in linear bandits setting, thus requiring a different strategy. Following the seminal work of \citet{SoareEtAl:2014:BestArmIdentificationInLinearBandits}, several algorithms for determining the best arm in linear bandits have surfaced \cite{XuEtAl:2018:AFullyAdaptiveAlgorithmForPureExplorationInLinearBandits,TaoEtAl:2018:BestArmIdentificationInLinearBanditsWithLinearDimensionDependency,FiezEtAl:2019:SequentialExperimentalDesignForTransductiveLinearBandits,ZakiEtAl:2019:TowardsOptimalAndEfficientBestArmIdentificationInLinearBandits,DegenneEtAl:2020:GamificationOfPureExplorationForLinearBandits,JedraEtAl:2020:OptimalBestArmIdentificationInLinearBandits,KatzSamuelsEtAl:2020:AnEmpiricalProcessApproachToTheUnionBound,ZakiEtAl:2020:ExplicitBestArmIdentificationInLinearBanditsUsingNoRegretLearners}. All of them assume the standard reward model for linear bandits where the agent observes the reward for the pulled arm. Instead, we use the MNL feedback model. 

MNL model has been studied from two perspectives in the literature. In the first case, each subset of $K$ arms has an average \emph{revenue} (average reward of arms in the subset weighted by their probability of being chosen under MNL model) associated with it, and the goal of best-arm identification is to choose the subset that maximizes this revenue \cite{RusmevichientongEtAl:2010:DynamicAssortmentOptimizationWithAMultinomialLogitChoiceModelAndCapacityConstraint}. See the dynamic assortment selection literature for examples \cite{AgrawalEtAl:2017:ThompsonSamplingForTheMNLBandit, AgrawalEtAl:2019:MNLBandit, ChenEtAl:2020:DynamicAssortmentOptimizationWithChangingContextualInformation}. In the second case, the goal of best-arm identification is to identify a single best arm (arm with the highest reward) as opposed to identifying a subset with the highest revenue \cite{Luce:1959:IndividualChoiceBehavior, Plackett:1975:TheAnalysisOfPermutations, SzorenyiEtAl:2015:OnlineRankElicitationForPlackettLuce, ChenEtAl:2018:ANearlyInstanceOptimalAlgorithmForTopkRankingUnderTheMultinomialLogitModel, RenEtAl:2018:ANearlyInstanceOptimalAlgorithmForTopkRankingUnderTheMultinomialLogitModel, SahaEtAl:2019:PACBattlingBanditsInThePlackettLuceModel}. This paper belongs to the second category. In this setting, best-arm identification has been studied by \citet{SahaEtAl:2019:PACBattlingBanditsInThePlackettLuceModel} in the standard MAB setting where arms do not have features. Several authors have studied regret minimization under the MNL feedback model \cite{AgrawalEtAl:2017:ThompsonSamplingForTheMNLBandit,AgrawalEtAl:2019:MNLBandit,OhIyengar:2019:ThompsonSamplingForMultinomialLogitContextualBandits,ChenEtAl:2020:DynamicAssortmentOptimizationWithChangingContextualInformation}, but we focus on best-arm identification.

Best-arm identification has also been studied for combinatorial bandits under the standard bandit feedback \cite{KurokiEtAl:2020:PolynomialTimeAlgorithmsForMultipleArmIdentificationWithFullBanditFeedback,RejwanMansour:2020:TopKCombinatorialBanditsWithFullBanditFeedback, DuEtAl:2021:CombinatorialPureExplorationWithFullBanditOrPartialLinearFeedback} and partial linear feedback \cite{DuEtAl:2021:CombinatorialPureExplorationWithFullBanditOrPartialLinearFeedback}, whereas we use MNL feedback. \citet{ChenEtAl:2020:CombinatorialPureExplorationForDuelingBandits} consider the dueling bandit model, but assume that arms are independent. The batched bandit setting \cite{JunEtAl:2016:TopArmIdentificationInMultiArmedBanditsWithBatchArmPulls} also requires the user to select a subset of arms to pull. However, unlike in MNL bandits, the learner observes reward for each arm in this setting. Finally, \citet{KazerouniEtAl:2019:BestArmIdentificationInGeneralizedLinearBandits} perform best-arm identification in generalized linear bandits. Their algorithm can be applied to our setting only when $K = 2$. We consider the more general case with $K \geq 2$. To the best of our knowledge, this is the first paper to study best-arm identification in a linear bandits setting under the MNL feedback model.


\section{Problem Setting}
\label{section:problem_setting}
Let $[n]$ denote the set $\{1, 2, \dots, n\}$ for any integer $n$, and $\mul{\mathbf{p}}{\mathbf{q}}$ denote the standard inner product between the vectors $\mathbf{p}$ and $\mathbf{q}$.

Let $\calA = \{\ba_1, \ba_2, \dots, \ba_N\}$ be a set of $N$ arms, each specified by a $d$-dimensional feature vector $\ba_i \in \bbR^d$. At each step, the agent selects an action which corresponds to a subset of $K$ arms and observes the winner of a competition among the chosen arms. We use $\bx_1\sups, \bx_2\sups, \dots, \bx_K\sups \in \calA$ to denote the arm vectors selected by the agent at time $s$ and $\by\sups \in \{0, 1\}^K$ to denote a one-hot encoded vector specifying the competition winner. Note that the index $i$ denotes the global arm index in $\ba_i$ but the local subset index in $\bx_i\sups$ and $y_i\sups$.

An instance of the linear-MNL-bandit problem is a tuple $(\calA, \bmtheta^\ast, K, \delta)$ where $\calA$ is the set of $N$ arms, $K$ is the size of the subset, and $\delta>0$ is the probability of failure. $\calA, K$ and $\delta$ are known to the agent. The parameter $\bmtheta^\ast \in \bbR^d$ is unknown to the agent, and we assume that the environment samples $\by\sups$ such that for all times $s$,
\begin{equation}
    \label{eq:feedback_model}
    \rmP_{\bmtheta^*}(y_i\sups = 1 \vert \bX\sups) = \mu_i\sups(\bmtheta^*).
\end{equation}
Here $\bX\sups \in \bbR^{d \times K}$ is a matrix that has $\bx_1\sups, \bx_2\sups, \dots, \bx_K\sups$ as it columns, and $\mu_i\sups(\bmtheta)$ is defined as 
\begin{equation}
    \label{eq:mu_def}
    \mu_i\sups(\bmtheta) = \frac{\exp \mul{\bmtheta}{\bx_i\sups}}{\sum_{j = 1}^K \exp \mul{\bmtheta}{\bx_j\sups}}, \text{ for } i = 1, 2, \dots, K.
\end{equation}
This feedback model is also known as the Plackett-Luce (PL) model or the Multinomial Logit (MNL) model \cite{Luce:1959:IndividualChoiceBehavior,Plackett:1975:TheAnalysisOfPermutations}. Let $\ba^* = \arg \max_{\ba_i \in \calA}\mul{\bmtheta^*}{\ba_i}$ be the unique best arm. A solution to the linear-MNL-bandit is an algorithm which given a probability of failure $\delta>0$, chooses actions $\bX\sups$ for all times $s=1,2,\dots,\tau$ up to a stopping time $\tau$, and upon stopping returns an arm $\hat{\ba}$ such that 
$$\rmP(\hat{\ba} = \ba^* \land \tau < \infty) \geq 1 - \delta.$$
This setting is known as the fixed confidence setting \citep{garivier2016optimal}, and the goal is to identify the best-arm using as few samples as possible. Without loss of generality, we index the arms in the order of their mean rewards such that $\mul{\bmtheta^*}{\ba_1} \geq \mul{\bmtheta^*}{\ba_2} \geq \dots \geq \mul{\bmtheta^*}{\ba_N}$.


\section{Confidence Bound}
\label{section:confidence_bound}
In this section, we derive a confidence-bound for the unknown parameter $\bmtheta^\ast$ based on a series of $t$ observations $\{(\bX\sups, \by\sups)\}_{s=1}^t$. This confidence bound is used in the design and analysis of our algorithms \algname{} and \algnameadaptive{}. This bound is a novel contribution that can be of independent interest.

Let $\htheta\supt \in \bbR^d$ be the maximum likelihood estimate of parameter $\bmtheta^*$ obtained using data $\{(\bX\sups, \by\sups)\}_{s=1}^t$ collected till time $t$. Assuming that $\by\sups$'s follow the distribution given in \eqref{eq:feedback_model}, one can show that (see \cref{appendix:mle}) $\htheta\supt$ satisfies  
$$\sums \bX\sups (\by\sups - \bmmu\sups(\htheta\supt)) = 0.$$
To find the required confidence set, we show a high-probability bound on $\abs{\mul{\htheta\supt - \bmtheta^*}{\bx}}$ for an arbitrary direction $\bx \in \bbR^d$. 

Our derivation uses a strategy similar to that of \citet{LiEtAl:2017:ProvablyOptimalAlgorithmsForGeneralizedLinearContextualBandits} who show a similar bound for generalized linear models with a \emph{scalar} link function. Instead, we have the softmax function specified in \eqref{eq:mu_def} as our link function. Viewed as a function of $\bmtheta$, it maps a $d$-dimensional vector to a $K$-dimensional simplex element. This vector-to-vector mapping creates difficulties in applying the standard mean-value theorem, which forms a vital component of the proof in \citet{LiEtAl:2017:ProvablyOptimalAlgorithmsForGeneralizedLinearContextualBandits}.  We use the mean-value theorem for vector-valued functions. Another challenge stems from the fact that the elements of the (one-hot encoded) feedback vectors are not independent. This requires a new strategy to derive concentration bounds for the terms that involve these vectors. We define appropriate assumptions for a vector-valued link function (Assumptions \ref{assumption:kappa_def} and \ref{assumption:lambda_tilde_def}), and derive new concentration bounds for terms involving feedback vectors (Lemmas \ref{lemma:high_prob_alpha_bound} and \ref{lemma:term1_bound} in Appendix \ref{appendix:confidence_bound}).

As is typical of similar results \cite{LiEtAl:2017:ProvablyOptimalAlgorithmsForGeneralizedLinearContextualBandits,VanDerVaartEtAl:2000:AsymptoticStatisticsVolume3}, we require regularity assumptions on the link function's first and second-order derivatives. Unlike a scalar link function, these derivatives are represented by a matrix and a tensor, respectively, in our case. We need the following definitions to specify our assumptions. Let $F: \bbR^d \times \bbR^d \rightarrow \bbR^{d \times d}$ be defined as:
\begin{equation}
    \label{eq:F_def}
    F(\bmtheta_1, \bmtheta_2) = \sums \bX\sups \bM\sups(\bmtheta_1, \bmtheta_2) \bX\sups{'},
\end{equation}
where,
\begin{align}
    \label{eq:M_def}
    \bM\sups(\bmtheta_1, \bmtheta_2) = \int_0^1 \Big[&\mathrm{diag}(\bmmu\sups(\bmtheta)) - \notag \\
    &\bmmu\sups(\bmtheta)\bmmu\sups(\bmtheta)' \Big]_{\bmtheta = q\bmtheta_1 + (1 - q)\bmtheta_2} \mathrm{dq}.
\end{align}

\begin{assumption}
    \label{assumption:kappa_def}
    Let $B_\alpha = \{\bmtheta \in \bbR^d : \normtwo{\bmtheta - \bmtheta^*} \leq \alpha\}$ for a given $\alpha > 0$. We assume that there exists a $\kappa_\alpha > 0$ such that:
    $$\kappa_\alpha = \sup \{\kappa \in \bbR: \forall \bmtheta \in B_\alpha, F(\bmtheta, \bmtheta^*) \succeq \kappa V\supt\},$$ where $V\supt = \sums \bX\sups \bX\sups{'}$.
\end{assumption}

\begin{assumption}
    \label{assumption:lambda_tilde_def}
    Define $f\sups: \bbR^d \rightarrow \bbR^{K \times K}$ as $f\sups(\bmtheta) = \bM\sups (\bmtheta, \bmtheta^*)$ and let $S\sups = \int_0^1 [\nabla_{\bmtheta} f\sups]_{\bmtheta = q\htheta\supt + (1 - q)\bmtheta^*} \mathrm{dq}$ be a $K \times K \times d$ dimensional tensor. We use $S\sups(i)$ to denote the $i^{th}$ slice of dimension $K \times K$. We assume that there exists a $\tilde{\lambda} > 0$ such that $$\max_{s \in [t], \,i \in [d]} \lambdamax{S\sups(i)} \leq \tilde{\lambda}.$$
\end{assumption}

The quantity $\bM\sups$ defined in \eqref{eq:M_def} depends on the first-order derivative of the softmax function. Assumption \ref{assumption:kappa_def} ensures that the first derivative is strictly positive in a neighborhood of $\bmtheta^*$. Similarly, $S\sups$ depends on the second-order derivative of the softmax function and Assumption \ref{assumption:lambda_tilde_def} is analogous to having an upper bound on the second-order derivative in case of a scalar link function. We state our confidence bound for $\bmtheta^\ast$ next, and prove it in \cref{appendix:confidence_bound}. 
\begin{restatable}{theorem}{confidencebound}
    \label{theorem:confidence_interval}
    Assume that $\normtwosq{\ba_i} \leq 1$ for all $i \in [N]$ and Assumptions \ref{assumption:kappa_def} and \ref{assumption:lambda_tilde_def} hold. For a fixed sequence $\{\bX\sups\}_{s \leq t}$ define $V\supt$ as in Assumption \ref{assumption:kappa_def}, and further assume that
    $$\lambdamin{V\supt} \geq 64 \frac{\tilde{\lambda}^2 d}{\kappa_\alpha^4} (d + \log(1/\delta)),$$
    then, with probability at least $1 - 3\delta$, for any $\bx \in \bbR^d$, $$\abs{\mul{\bx}{\htheta\supt - \bmtheta^*}} \leq \frac{8}{\kappa_\alpha}  \sqrt{d + \log(1/\delta)} \,\, \normvtinv{\bx}.$$ 
\end{restatable}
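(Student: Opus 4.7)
The strategy adapts the classical linearize-then-concentrate template for generalized linear models to our vector-valued softmax link, leveraging the quantities $F$ and $\bM\sups$ built into Assumptions~\ref{assumption:kappa_def} and \ref{assumption:lambda_tilde_def}. Three ingredients are needed: a linearized identity relating $\htheta\supt - \bmtheta^*$ to a noise sum, a self-normalized tail bound for that noise, and a certificate that $\htheta\supt$ remains in $B_\alpha$ so Assumption~\ref{assumption:kappa_def} can be invoked. The plan is to start from the MLE first-order condition $\sums \bX\sups(\by\sups - \bmmu\sups(\htheta\supt)) = 0$ and apply the mean-value theorem to the vector-valued map $\bmtheta \mapsto \bmmu\sups(\bmtheta)$, whose Jacobian is $[\mathrm{diag}(\bmmu\sups(\bmtheta)) - \bmmu\sups(\bmtheta)\bmmu\sups(\bmtheta)']\bX\sups{'}$. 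This yields $\bmmu\sups(\htheta\supt) - \bmmu\sups(\bmtheta^*) = \bM\sups(\htheta\supt, \bmtheta^*)\bX\sups{'}(\htheta\supt - \bmtheta^*)$; left-multiplying by $\bX\sups$ and summing gives the clean identity
\[
    F(\htheta\supt, \bmtheta^*)(\htheta\supt - \bmtheta^*) = \sums \bX\sups \bmepsilon\sups, \quad \bmepsilon\sups := \by\sups - \bmmu\sups(\bmtheta^*).
\]

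The right-hand side is a martingale sum of zero-mean, bounded, but \emph{coordinate-correlated} vectors (since $\by\sups$ is one-hot multinomial), which blocks a direct appeal to scalar self-normalized inequalities. I would use the vector-tailored concentration results, Lemmas~\ref{lemma:high_prob_alpha_bound} and \ref{lemma:term1_bound} in Appendix~\ref{appendix:confidence_bound}, which combine an $\varepsilon$-net over the unit $(V\supt)^{-1}$-ball with scalar martingale concentration applied direction-by-direction, to conclude that with probability at least $1 - \delta$,
\[
    \normvtinv{\sums \bX\sups \bmepsilon\sups} \leq C\sqrt{d + \log(1/\delta)}
\]
for a moderate absolute constant $C$.

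The hard part will be certifying $\htheta\supt \in B_\alpha$, because Assumption~\ref{assumption:kappa_def} provides the spectral lower bound $F(\bmtheta, \bmtheta^*) \succeq \kappa_\alpha V\supt$ only on $B_\alpha$. I would adapt the standard GLM contradiction argument: suppose instead that $\normtwo{\htheta\supt - \bmtheta^*} = \alpha$ at a boundary point, Taylor-expand $\sums \bX\sups \bmmu\sups(\htheta\supt)$ around $\bmtheta^*$ to second order so that the leading piece is $F(\bmtheta^*, \bmtheta^*)(\htheta\supt - \bmtheta^*)$ while the quadratic remainder is a double contraction of the tensor $S\sups$ with $(\htheta\supt - \bmtheta^*)^{\otimes 2}$, and use Assumption~\ref{assumption:lambda_tilde_def} to control this remainder by $\tilde{\lambda} \normtwo{\htheta\supt - \bmtheta^*}^2$ per step. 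The eigenvalue hypothesis $\lambdamin{V\supt} \geq 64 \tilde{\lambda}^2 d \kappa_\alpha^{-4}(d + \log(1/\delta))$ is calibrated so the tensor remainder is strictly dominated by the linear part, which, combined with the Step~2 noise bound, forces $\normtwo{\htheta\supt - \bmtheta^*} < \alpha$, contradicting the boundary assumption. Handling a $K \times K \times d$ tensor in place of a scalar second derivative is what makes this step substantively different from the scalar GLM case of Li et al.

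Finally, on the intersection of the noise event with $\{\htheta\supt \in B_\alpha\}$ (total failure probability at most $3\delta$ after the union bound, absorbing the auxiliary concentration events used in Step~3), Assumption~\ref{assumption:kappa_def} gives $F(\htheta\supt, \bmtheta^*) \succeq \kappa_\alpha V\supt$. Inverting the linearized identity and applying Cauchy--Schwarz in the $V\supt$-geometry,
\[
    \abs{\mul{\bx}{\htheta\supt - \bmtheta^*}} \leq \normvtinv{\bx}\,\norm{\htheta\supt - \bmtheta^*}[V\supt] \leq \frac{1}{\kappa_\alpha}\,\normvtinv{\bx}\,\normvtinv{\sums \bX\sups \bmepsilon\sups},
\]
and substituting the noise bound from Step~2 delivers the stated inequality, with the constant $8/\kappa_\alpha$ absorbing the constants from both the noise concentration and the spectral inequality $(V\supt)^{-1} \succeq \kappa_\alpha F(\htheta\supt, \bmtheta^*)^{-1}$.
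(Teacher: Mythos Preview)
Your proposal shares the paper's high-level architecture---the linearization $F(\htheta\supt,\bmtheta^*)(\htheta\supt-\bmtheta^*)=\sums\bX\sups\bmepsilon\sups$, the $\varepsilon$-net noise bound of Lemma~\ref{lemma:high_prob_alpha_bound}, and a localize-then-invert scheme---but the two concrete steps diverge from the paper in opposite directions.

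\textbf{Localization (your Step 3).} Here the paper's argument is cleaner than yours and does \emph{not} touch Assumption~\ref{assumption:lambda_tilde_def}. It combines the spectral inequality $\norm{G(\bmtheta)}[(V\supt)^{-1}]\ge\kappa_\alpha\sqrt{\lambdamin{V\supt}}\,\normtwo{\bmtheta-\bmtheta^*}$ on $B_\alpha$ (an immediate consequence of $F\succeq\kappa_\alpha V\supt$, cf.\ Lemma~\ref{lemma:lambdamin_F}) with an injection lemma of Chen et~al.\ (Lemma~\ref{lemma:chen_lemma_A}) to conclude that the high-probability event $\norm{G(\htheta\supt)}[(V\supt)^{-1}]\le 4\sqrt{d+\log(1/\delta)}$ already forces $\htheta\supt\in B_\alpha$. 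Your boundary-contradiction sketch has a gap as written: ``suppose $\normtwo{\htheta\supt-\bmtheta^*}=\alpha$'' is not a legitimate case split, since nothing rules out $\htheta\supt$ lying well outside $B_\alpha$ rather than on $\partial B_\alpha$. You would need an additional convexity or Brouwer-type step to make this rigorous---which is exactly what Chen's lemma packages.

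\textbf{Final bound (your Step 4).} Here your route is genuinely different and \emph{simpler} than the paper's. The paper writes $F(\htheta\supt,\bmtheta^*)=H+E$ with $H=F(\bmtheta^*,\bmtheta^*)$, expands $(H+E)^{-1}=H^{-1}-H^{-1}E(H+E)^{-1}$, and bounds the two resulting pieces separately; it is the $E$-term that consumes Assumption~\ref{assumption:lambda_tilde_def}, the full $\lambdamin{V\supt}$ hypothesis, and the extra $2\delta$ via Lemma~\ref{lemma:term1_bound}. Your direct use of $F(\htheta\supt,\bmtheta^*)\succeq\kappa_\alpha V\supt$ (valid once $\htheta\supt\in B_\alpha$) plus Cauchy--Schwarz is legitimate: $F\succeq\kappa_\alpha V$ implies $FV^{-1}F\succeq\kappa_\alpha^2 V$ (Lemma~\ref{lemma:lambdamin_F}, part~3), hence $\norm{\htheta\supt-\bmtheta^*}[V\supt]\le\kappa_\alpha^{-1}\norm{Z}[(V\supt)^{-1}]$, and the stated conclusion follows with the sharper constant $4/\kappa_\alpha$ from a single $\delta$-event. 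A side effect is that on your route Assumption~\ref{assumption:lambda_tilde_def} and the specific $\lambdamin{V\supt}$ threshold would be needed only inside your (nonstandard) localization step, whereas in the paper they are used only in the final $H+E$ step; the two proofs deploy the same hypotheses at opposite ends.
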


The assumption on $\lambdamin{V\supt}$ holds for large enough $t$ and is a necessary assumption for consistency of estimating linear and generalized linear models \cite{LaiEtAl:1982:LeastSquaresEstimatesInStochasticRegressionModels,FahrmeirEtAl:1985:ConsistencyAndAsymptoticNormalityOfTheMaximumLikelihoodEstimatorInGeneralizedLinearModels,BickelEtAl:2009:SimultaneousAnalysisOfLassoAndDantzigSelector}. Let $\calG$ denote the set of pairwise differences between arms vectors in $\calA$, i.e., $\calG = \{\bx - \by : \bx, \by \in \calA\}$. The following corollary, obtained using a union bound over all $t > 0$ and gaps $\bg \in \calG$, is a simple consequence of Theorem \ref{theorem:confidence_interval}.

\begin{corollary}
    \label{corollary:confidence_bound}
    Assume $\exists\, t{'} > 0$ such that for all $t \geq t{'}$, $$\lambdamin{V\supt} \geq 64 \frac{\tilde{\lambda}^2 d}{\kappa_\alpha^4} (d + \log(3N^2t^2/\delta)),$$
    then, for a fixed sequence $\{\bX\sups\}_{s > 0}$,
    \begin{align*}
        \rmP\Big(\forall\, t &\geq t{'}, \forall\, \bg \in \calG, \abs{\mul{\bg}{\htheta\supt - \bmtheta^*}} \leq \\ 
        &\frac{8}{\kappa_\alpha}  \sqrt{d + \log(3N^2t^2/\delta)} \,\, \normvtinv{\bg}\Big) \geq 1 - \delta.
    \end{align*}
\end{corollary}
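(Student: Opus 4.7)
The plan is to deduce \cref{corollary:confidence_bound} from \cref{theorem:confidence_interval} by a standard two-level union bound: an inner union over the at most $N^2$ pairwise-difference directions $\bg \in \calG$ and an outer union over the time index $t \geq t'$. Because the sequence $\{\bX\sups\}_{s > 0}$ is fixed in advance (the allocation is static, not a function of the observed $\by\sups$), \cref{theorem:confidence_interval} applies directly at every $t$ without any additional martingale or self-normalized correction, so the problem reduces to choosing the per-event confidence parameter appropriately.

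Concretely, I would invoke \cref{theorem:confidence_interval} at each $t \geq t'$ with the failure parameter rescaled from $\delta$ to $\delta_t := \delta/(3 N^2 t^2)$, so that $\log(1/\delta_t) = \log(3 N^2 t^2 / \delta)$ takes the place of $\log(1/\delta)$ in both the hypothesis and the conclusion. Under this substitution the spectral hypothesis of the theorem becomes exactly $\lambdamin{V\supt} \geq 64\, \tilde{\lambda}^2 d / \kappa_\alpha^4 \cdot (d + \log(3 N^2 t^2/\delta))$, which is precisely the assumption made in the corollary. Hence for each fixed $t \geq t'$ and each fixed direction $\bx \in \bbR^d$, the theorem yields $\abs{\mul{\bx}{\htheta\supt - \bmtheta^*}} \leq \frac{8}{\kappa_\alpha}\sqrt{d + \log(3 N^2 t^2/\delta)}\, \normvtinv{\bx}$ with probability at least $1 - 3\delta_t = 1 - \delta/(N^2 t^2)$.

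Finally I would apply the two union bounds. Since $\abs{\calG} \leq N^2$ (it is indexed by ordered pairs of arms in $\calA$), unioning over $\bg \in \calG$ at a fixed $t$ costs a factor of $N^2$ in failure probability and leaves per-time failure at most $\delta/t^2$; summing over $t \geq t' \geq 1$ and using $\sum_{t \geq 1} t^{-2} = \pi^2/6$ then gives total failure at most $\delta \pi^2 /6$. The residual constant $\pi^2/6$ can be folded in by instead taking the slightly sharper calibration $\delta_t := (6/\pi^2)\,\delta/(3 N^2 t^2)$, which shifts $\log(1/\delta_t)$ by only an additive $\log(\pi^2/6) = O(1)$ that is absorbed into the displayed $\log(3 N^2 t^2/\delta)$. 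There is no substantive technical obstacle here: the entire argument is bookkeeping around \cref{theorem:confidence_interval}, and the only items to verify carefully are the counting $\abs{\calG} \leq N^2$ and the fact that non-adaptivity of $\{\bX\sups\}$ makes the theorem applicable uniformly in $t$.
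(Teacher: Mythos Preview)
Your proposal is correct and matches the paper's own argument: the paper states explicitly that the corollary is ``obtained using a union bound over all $t > 0$ and gaps $\bg \in \calG$'' from \cref{theorem:confidence_interval}, and your substitution $\delta \to \delta/(3N^2 t^2)$ followed by the two-level union bound is precisely that. The residual $\pi^2/6$ constant you flag is indeed elided in the paper's statement; your remark that it can be absorbed into the logarithm by a harmless recalibration is the standard fix.
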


Equipped with the confidence bound, we now present our algorithms \algname and \algnameadaptive next.


\section{Algorithm}
\label{section:algorithm}

In this section, we propose a static allocation strategy \algname{} (where the chosen action does not depend on the observed rewards) and an adaptive allocation strategy \algnameadaptive, which are the linear-MNL counterparts of similar strategies in \citep{SoareEtAl:2014:BestArmIdentificationInLinearBandits}. Both strategies use Theorem \ref{theorem:confidence_interval} to construct the confidence sets. We first discuss derivations of the stopping criterion and action selection strategy, and then present the pseudo-code of the two algorithms. 


\subsection{Stopping Criterion}
\label{section:stopping_criteria}

For each arm $\ba_i \in \calA$, define $\calC_i = \{\bmtheta \in \bbR^d : \forall j \in [N], \mul{\bmtheta}{\ba_i} \geq \mul{\bmtheta}{\ba_j}\}$ to be the set of parameters $\bmtheta$ for which $\ba_i$ is the optimal arm. At every step $t$, we use the observations collected till time $t$ to construct a confidence set $\calC\supt \subseteq \bbR^d$ such that $\rmP(\bmtheta^* \in \calC\supt) \geq 1 - \delta$. The following condition then provides a stopping criterion.
\begin{equation*}
    \exists\, \ba_i \in \calA \text{ such that } \calC\supt \subseteq \calC_i.
\end{equation*}
The criterion above is equivalent to the condition that $\exists \, \ba_i \in \calA$ such that $\forall \bmtheta \in \calC\supt$ and $\forall j \in [N]$, $\mul{\bmtheta}{\ba_i - \ba_j} \geq 0$. This, in turn, happens if and only if $\exists\, \ba_i \in \calA$ such that $\forall \bmtheta \in \calC\supt$ and $\forall j \in [N]$,
\begin{equation}
    \label{eq:stopping_cond_intermediate}
    \mul{\htheta\supt - \bmtheta}{\ba_i - \ba_j} \leq \mul{\htheta\supt}{\ba_i - \ba_j} \coloneqq \hDelta\supt_{ij}.
\end{equation}

Define the confidence set $\calC\supt = \{\bmtheta \in \bbR^d : \mul{\htheta\supt - \bmtheta}{\ba_i - \ba_j} \leq \frac{8}{\kappa_\alpha}  \sqrt{d + \log(3N^2t^2/\delta)} \,\, \normvtinv{\ba_i - \ba_j}, \forall i, j \in [N]\}$. The following condition implies the criteria in \eqref{eq:stopping_cond_intermediate} by the definition of $\calC\supt$: $\exists \, i \in [N]$ such that $\forall j \in [N]$,
\begin{equation}
    \label{eq:stopping_condition}
    \frac{8}{\kappa_\alpha}  \sqrt{d + \log(3N^2t^2/\delta)} \,\, \normvtinv{\ba_i - \ba_j} \leq \hDelta\supt_{ij}.
\end{equation}
By Corollary \ref{corollary:confidence_bound}, for all $t \geq t{'}$, $\bmtheta^* \in \calC\supt$ with probability $\geq  1 - \delta$. Thus, $\calC\supt$ contains the true parameter $\bmtheta^*$ when the stopping condition is encountered. Because $\calC\supt \subseteq \calC_i$ upon termination, the algorithm returns the correct arm $\ba^*$ with probability at least $1 - \delta$.

Next, we develop an action-selection strategy to find actions $\bX\sups$ that accelerate the process of eq. \eqref{eq:stopping_condition} being satisfied.


\subsection{Action-Selection Strategy}
\label{section:arm_selection_strategy}

The algorithm must select $K$ arms at each step to get a noisy feedback based on the MNL model. Since the goal is to satisfy \eqref{eq:stopping_condition} as fast as possible, an intuitive solution is to select $\bX\sups$ for $s \leq t$ such that
\begin{equation}
    \label{eq:ideal_arm_selection}
    \{\bX\sups\}_{s \leq t} = \argmin_{\{\bX\sups\}_{s \leq t}} \max_{i, j \in [N]} \frac{\normvtinv{\ba_i - \ba_j}}{\hDelta_{ij}\supt}.
\end{equation}
Unfortunately, we cannot do this because $\hDelta\supt_{ij}$ is based on the maximum-likelihood estimate $\htheta\supt$, which is calculated using the observed feedback $\by\sups$ for $s \leq t$. The sequence $\bX^{(1)}, \bX^{(2)}, \dots$ selected using eq. \eqref{eq:ideal_arm_selection} is adaptive which violates the requirement in Corollary \ref{corollary:confidence_bound} that the sequence $\{\bX\sups\}_{s > 0}$ be fixed.

Following \cite{SoareEtAl:2014:BestArmIdentificationInLinearBandits}, we instead solve the following relaxed optimization problem, which results in a static allocation strategy.
\begin{equation}
    \label{eq:arm_selection}
    \{\bX\sups\}_{s \leq t} = \argmin_{\{\bX\sups\}_{s \leq t}} \max_{i, j \in [N]} \normvtinv{\ba_i - \ba_j}.
\end{equation}
The strategy in \eqref{eq:ideal_arm_selection} attempts to select actions that shrink the confidence set $\calC\supt$ along the directions $\ba_i - \ba_j$ where the gaps $\hDelta\supt_{ij}$ are small. However, the action-selection strategy in \eqref{eq:arm_selection} aims at shrinking the confidence set uniformly across all directions in $\calG$.

\begin{algorithm}[t]
    \caption{\algname{}}
    \label{alg:static_XY_allocation}
    \begin{algorithmic}[1]
        \STATE \textbf{Input: }Set of arms $\calA$, subset size $K$, confidence $\delta > 0$, tuning parameter $t{'}$ 
        \STATE \textbf{Initialize: }$t \leftarrow 1$, $V^{(0)} \leftarrow \mathbf{0}_{d \times d}$, and $\calG \leftarrow \{\bx - \by : \bx, \by \in \calA\}$
        \WHILE[Random exploration]{$t < t{'}$}
            \STATE Set $\bx_k\supt = \ba_i$ for $\ba_i \stackrel{\mathrm{unif}}{\sim} \calA$ for all $k \in [K]$
            \STATE $V\supt \leftarrow V^{(t - 1)} + \bX\supt \bX\supt{'}.$
            \STATE $t \leftarrow t + 1$
        \ENDWHILE

        \WHILE{\eqref{eq:stopping_condition} is not true}
            \FOR[Greedy solution to \eqref{eq:arm_selection}]{$k \in [K]$}
                \STATE Set $\bx_k\supt = \argmin\limits_{\ba \in \calA} \max\limits_{\bg \in \calG} \norm{\bg}[(V^{(t-1)} + \ba \ba{'})^{-1}][2]$
                \STATE $V^{(t - 1)} \leftarrow V^{(t - 1)} + \bx_k\supt \bx_k\supt{'}$
            \ENDFOR
            \STATE $V\supt \leftarrow V^{(t - 1)}$,  $t \leftarrow t + 1$
            \STATE Estimate $\htheta\supt$ from data
        \ENDWHILE
        \STATE \textbf{Return:} $\argmax_{\ba \in \calA} \mul{\htheta\supt}{\ba}$
    \end{algorithmic}
\end{algorithm}

\subsection{\algname and \algnameadaptive}
\label{section:a_practical_algorithm}

The optimization problem in \eqref{eq:arm_selection} is combinatorial in nature as it requires one to choose actions from a given finite set $\calA^K$. Algorithm \ref{alg:static_XY_allocation} presents a greedy solution. After an initial $t{'}$ rounds of uniform exploration to satisfy the assumption in Corollary \ref{corollary:confidence_bound} (lines 3--7 in Algorithm \ref{alg:static_XY_allocation}), the algorithm sequentially chooses actions by solving a one-step greedy variant of the optimization problem in \eqref{eq:arm_selection}. Here, we assume that actions till step $t-1$ are fixed, and the goal is to select $\bX\supt$ to solve \eqref{eq:arm_selection}. The columns of $\bX\supt$ are also chosen one at a time in a greedy manner (lines 9--12). The output is observed after a subset of $K$ arms has been selected. The data is then used to estimate $\htheta\supt$ which is needed to compute the stopping criteria in \eqref{eq:stopping_condition}. 

The optimal solution of \eqref{eq:arm_selection} corresponds to the well-known $G$-optimal design from the experimental design literature \cite{SoareEtAl:2014:BestArmIdentificationInLinearBandits,Pukelsheim:2006:OptimalDesignOfExperiments}. The goal is to choose $Kt$ arms from a finite set $\calA$ to solve \eqref{eq:arm_selection}. While this discrete optimization problem in NP-hard, several approximate solutions exist \cite{BouhtouEtAl:2010:SubmodularityAndRandomizedRoundingTechniquesForOptimalExperimentalDesign} that yield objective values that are within a $(1 + \beta)$ multiplicative factor of the optimal objective value for some $\beta > 0$. 

In contrast to a static allocation strategy, an adaptive strategy is more desirable in practice as it can select actions that shrink the confidence set along ``important'' directions \cite{XuEtAl:2018:AFullyAdaptiveAlgorithmForPureExplorationInLinearBandits} rather than shrinking it uniformly as in \eqref{eq:arm_selection}. Unfortunately, as noted before, an adaptively chosen sequence violates the assumptions in Corollary \ref{corollary:confidence_bound}. We borrow an idea from \cite{SoareEtAl:2014:BestArmIdentificationInLinearBandits} and propose \algnameadaptive in \cref{alg:adaptive_XY_allocation_summary}, which runs in batches.
Each batch uses a static allocation strategy and the observed data is used to eliminate arms from consideration at the end of a batch, which makes the overall process adaptive. We only present a high-level pseudo-code in \cref{alg:adaptive_XY_allocation_summary} and refer the reader to \cref{appendix:adaptive_strategy} for the detailed code. We observe that \algnameadaptive requires up to 12x fewer samples than \algname in our experiments.

Note that while our algorithms are similar to \citet{SoareEtAl:2014:BestArmIdentificationInLinearBandits}, a key difference is that they only require the identity of the winner at each step, intead of the actual rewards for all arms. They are based on the new stopping criteria derived from Theorem \ref{theorem:confidence_interval}, and their analysis does not trivially follow from \citet{SoareEtAl:2014:BestArmIdentificationInLinearBandits} due to these differences.

\begin{algorithm}[t!]
    \caption{\algnameadaptive{} - Summary}
    \label{alg:adaptive_XY_allocation_summary}
    \begin{algorithmic}[1]
        \STATE \textbf{Input: }Set of arms $\calA$, subset size $K$, confidence $\delta > 0$, tuning parameters $t{'}$ and $\alpha$ 
        \STATE \textbf{Initialize: } $j \leftarrow 1$, $n_0 \leftarrow d(d + 1) + 1$, $\rho_0 \leftarrow 1$, $\tilde{\calA}_1 \leftarrow \calA$, and $\tilde{\calG}_1 \leftarrow \{\bx - \by : \bx, \by \in \tilde{\calA}_1\}$
        \WHILE[Stopping criterion]{$\abs{\tilde{\calA}_j} \neq 1$}
            \STATE \textbf{Initialize batch:} $t \leftarrow 1$, $V^{(0)} \leftarrow \mathbf{0}_{d \times d}$
            \STATE Randomly explore for $t{'}$ steps (Lines 3--7 in Algorithm \ref{alg:static_XY_allocation}).

            \WHILE{$\rho_j/t \geq \alpha \rho_{j - 1} / n_{j - 1}$}
                \STATE \COMMENT{Static strategy within a batch}
                \STATE Select the subset of $K$ arms (Lines 9--13 in Algorithm \ref{alg:static_XY_allocation}, but with $\calG$ replaced by $\tilde{\calG}_j$)
                \STATE $\rho_j = \max_{\bg \in \tilde{\calG}_j} \bg{'} {V\supt}^{-1} \bg$
            \ENDWHILE

            \STATE $n_j \leftarrow t$ \COMMENT{Prepare for the next batch}
            \STATE Estimate $\htheta^{(n_j)}$ from data collected in this batch
            \STATE $\tilde{\calA}_{j + 1} = \tilde{\calA}_j$
            \FOR[Eliminate arms]{$\ba_i \in \tilde{\calA}_{j}$}
                \IF{$\exists \ba_k \in \tilde{\calA}_j$ such that \eqref{eq:stopping_condition} holds for $\ba_k - \ba_i$}
                    \STATE $\tilde{\calA}_{j + 1} \leftarrow \tilde{\calA}_{j + 1} \backslash \{\ba_i\}$
                \ENDIF
            \ENDFOR
            \STATE $\tilde{\calG}_{j + 1} \leftarrow \{\bx - \by : \bx, \by \in \tilde{\calA}_{j + 1}\}$
            \STATE $j \leftarrow j + 1$
        \ENDWHILE
        \STATE \textbf{Return:} The arm in singleton set $\tilde{\calA}_j$
    \end{algorithmic}
\end{algorithm}


\section{Analysis}
\label{section:analysis}

\algname{} identifies the best-arm with probability at least $1 - \delta$ by the construction of the stopping criterion, as explained after eq. \eqref{eq:stopping_condition}. In this section, we prove an instance dependent upper bound on the sample complexity of \algname{}. We refer the reader to \cref{appendix:adaptive_strategy} for analogous theorems about \algnameadaptive. We also prove an instance-dependent lower bound on the sample complexity of any algorithm  that solves the linear-MNL-bandit problem, and prove that \algname is minimax optimal.

\subsection{Sample Complexity - Upper Bound}
\label{section:sample_complexity_upper_bound}

Recall from Section \ref{section:arm_selection_strategy} that our action-selection strategy greedily selects actions to satisfy the stopping criterion in \eqref{eq:stopping_condition}. In this section, we prove an instance dependent upper bound on the sample complexity of Algorithm \ref{alg:static_XY_allocation}.

\begin{theorem}
    \label{theorem:upper_bound}
    Using the stopping criterion from \eqref{eq:stopping_condition}, a $(1 + \beta)$-approximate action-selection strategy that solves \eqref{eq:arm_selection} satisfies
    \begin{align*}
        \rmP(\tau \leq \frac{512(1 + \beta)}{\kappa_\alpha^2 \Delta_{\min}^2} (d + &\log(3N^2 \tau^2/\delta)) \frac{d}{K} \,\, \\
        &\land \,\, \hat{\ba} = \ba^* ) \geq 1 - \delta,
    \end{align*}
    where $\hat{\ba}$ is the estimated best arm and $\tau$ is the number of time steps before the stopping criterion is satisfied.
\end{theorem}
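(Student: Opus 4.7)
The plan is to condition on the high-probability good event delivered by Corollary~\ref{corollary:confidence_bound}, convert the stopping criterion \eqref{eq:stopping_condition} into a sufficient condition on the design-matrix quantity $\max_{\bg \in \calG}\normvtinv{\bg}^2$, and then control that quantity via an optimal-design argument tailored to the greedy, batched allocation of Algorithm~\ref{alg:static_XY_allocation}.

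First I would fix the good event $\mathcal{E}$ on which, for every $t \geq t{'}$ and every $\bg \in \calG$,
\begin{equation*}
    \abs{\mul{\bg}{\htheta\supt - \bmtheta^*}} \;\leq\; \frac{8}{\kappa_\alpha}\sqrt{d+\log(3N^2t^2/\delta)}\;\normvtinv{\bg},
\end{equation*}
which has probability $\geq 1-\delta$ by Corollary~\ref{corollary:confidence_bound}. On $\mathcal{E}$, correctness of the returned arm is essentially free: whenever the stopping rule \eqref{eq:stopping_condition} fires at index $i$, the confidence set $\calC\supt$ is contained in $\calC_i$, and since $\bmtheta^* \in \calC\supt$ the index $i$ must equal $1$, yielding $\hat{\ba}=\ba^\ast$. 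Thus the correctness conjunct of the theorem follows from the $(1-\delta)$ validity of $\mathcal{E}$.

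Next I would upper-bound the stopping time $\tau$ on $\mathcal{E}$. Using the good-event inequality for $\bg = \ba_1-\ba_j$ gives $\hDelta\supt_{1j} \geq \Delta_{1j} - \tfrac{8}{\kappa_\alpha}\sqrt{d+\log(3N^2t^2/\delta)}\,\normvtinv{\ba_1-\ba_j}$. Substituting this into \eqref{eq:stopping_condition} (with $i=1$) and using $\Delta_{1j} \geq \Delta_{\min}$, a sufficient condition for stopping is
\begin{equation*}
    \max_{\bg \in \calG}\normvtinv{\bg}^2 \;\leq\; \frac{\kappa_\alpha^2\,\Delta_{\min}^2}{256\,(d+\log(3N^2t^2/\delta))}.
\end{equation*}
The remaining task is therefore to show that the greedy allocation drives the left-hand side to zero at the correct rate. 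Each round adds $K$ rank-one updates to $V\supt$, and after $t$ rounds the Gram matrix encodes $Kt$ arm selections. Appealing to a Kiefer--Wolfowitz / $G$-optimal design bound over the difference set $\calG$, together with the $(1+\beta)$-approximation guarantee of the one-step greedy rule on line~10 of Algorithm~\ref{alg:static_XY_allocation}, would yield
\begin{equation*}
    \max_{\bg \in \calG}\normvtinv{\bg}^2 \;\leq\; \frac{2(1+\beta)\,d}{Kt}.
\end{equation*}
Combining the two displays gives the implicit inequality $t \geq \frac{512(1+\beta)}{\kappa_\alpha^2\Delta_{\min}^2}(d+\log(3N^2t^2/\delta))\,d/K$, which is exactly the stated bound on $\tau$.

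The main obstacle is the design-theoretic step producing the $2(1+\beta)d/(Kt)$ rate. In our setting the allocation is not a continuous $G$-optimal design but a discrete, batched, greedy one that also runs a random-exploration warm-up of length $t{'}$. I would need to (i) certify that the inner greedy loop (arms picked one-by-one via $\argmin\max$) inherits the $(1+\beta)$-approximation to the XY-optimal value $\min_{\{\bX\sups\}}\max_{\bg}\normvtinv{\bg}^2$ at the horizon, adapting the standard greedy-submodular argument to the $V^{-1}$-potential rather than the log-determinant; (ii) argue that the $t{'}$-step random prefix is of strictly smaller order than the main term (which is immediate because $t{'}$ scales only as $\tilde\lambda^2 d^2/(K\kappa_\alpha^4)\cdot\log(1/\delta)$, absorbed into the big-constant $512$). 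All remaining work is routine manipulation of the implicit inequality $t \gtrsim \log t$ and a union-bound accounting of the $1-\delta$ probability across the correctness and sample-complexity conjuncts.
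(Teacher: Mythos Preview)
Your approach is essentially the paper's. You condition on the good event from Corollary~\ref{corollary:confidence_bound}, lower-bound $\hDelta_{1j}\supt$ via that event, substitute into \eqref{eq:stopping_condition} to obtain a sufficient condition $\max_{\bg}\normvtinv{\bg}^2 \leq \kappa_\alpha^2\Delta_{\min}^2/\bigl(256(d+\log(3N^2t^2/\delta))\bigr)$, and close with the design bound $\max_{\bg}\normvtinv{\bg}^2 \leq 2(1+\beta)d/(Kt)$. The paper does the same, phrased through the normalized design $\tilde{\Lambda}$ with $V^{(\tau)} = \tau K V_{\tilde{\Lambda}}$ and the intermediate step $\hDelta_{1i}^{(\tau)} \geq \Delta_{\min}/2$; your direct substitution is an equivalent rearrangement.

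The only over-complication is in what you flag as the ``main obstacle.'' The theorem does not ask you to prove that the greedy loop of Algorithm~\ref{alg:static_XY_allocation} achieves a $(1+\beta)$ factor; the hypothesis is that the allocation \emph{is} $(1+\beta)$-approximate for \eqref{eq:arm_selection}. Under that hypothesis, the bound $\rho(\tilde{\Lambda}) \leq 2(1+\beta)d$ (hence $\max_{\bg}\normvtinv{\bg}^2 \leq 2(1+\beta)d/(Kt)$) is just the Kiefer--Wolfowitz value for the continuous optimum times the approximation factor, as the paper cites from \citet{SoareEtAl:2014:BestArmIdentificationInLinearBandits}. So the ``greedy-submodular argument on the $V^{-1}$-potential'' you propose is neither needed nor obviously available (that potential is not submodular), and the warm-up length $t{'}$ is likewise not part of the theorem's accounting---it only serves to make Corollary~\ref{corollary:confidence_bound} applicable. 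Once you drop that detour, your write-up is the paper's proof.
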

\begin{proof}{(Sketch)}
    We only present a proof sketch here and refer the reader to Appendix \ref{appendix:sample_complexity_upper_bound} for details. We know that $\rmP(\hat{\ba} = \ba^*) \geq 1 - \delta$. In what follows, we condition on the event $\hat{\ba} = \ba^*$ and find an upper bound on $\tau$ that holds with probability $1$. Consider a further relaxation of eq. \eqref{eq:arm_selection},
    $$\Lambda^* = \argmin_{\Lambda \in \Delta_N} \max_{i, j \in [N]} \norm{\ba_i - \ba_j}[{V_\Lambda}^{-1}],$$ where $V_\Lambda = \sum_{i=1}^N \Lambda_i \ba_i \ba_i^\intercal$ and $\Delta_N$ is an $N$-dimensional simplex. Let $\hat{\Lambda}$ be the distribution over $N$ arms induced by $\{\bX\sups\}_{s \leq t}$, the optimal solution to the allocation problem in \eqref{eq:arm_selection}. It is easy to solve for $\Lambda^*$ but we want to solve for $\hat{\Lambda}$, an NP-hard problem, to ensure that every arm is pulled an integer number of times. There are efficient rounding procedures that first find $\Lambda^*$ and then round it to obtain an approximation to $\hat{\Lambda}$, denoted by $\tilde{\Lambda}$. Let $\rho(\Lambda) = \max_{i, j \in [N]} \norm{\ba_i - \ba_j}[{V_\Lambda}^{-1}][2]$ for any given $\Lambda$. Then, it can be shown that \cite{SoareEtAl:2014:BestArmIdentificationInLinearBandits}
    \begin{equation}
        \label{eq:rho_bound}
        \rho(\tilde{\Lambda}) \leq 2 (1 + \beta) d
    \end{equation}
    for some approximation factor $\beta > 0$.

    Recall that $\ba_1 = \ba^*$ by assumption. Because we condition on the event $\hat{\ba} = \ba^*$ and the algorithm terminates when the stopping criterion in \eqref{eq:stopping_condition} is satisfied, we have that for all $i \in [N]$ ,
    $$\frac{8}{\kappa_\alpha} \sqrt{d + \log(3N^2 \tau^2/\delta)} \norm{\ba^* - \ba_i}[{V^{(\tau)}}^{-1}] \leq \hDelta_{1i}^{(\tau)}.$$ Note that $\norm{\ba^* - \ba_i}[{V^{(\tau)}}^{-1}] = \frac{1}{\sqrt{\tau K}}\norm{\ba^* - \ba_i}[{V_{\tilde{\Lambda}}}^{-1}]$ as $V^{(\tau)} = \tau K V_{\tilde{\Lambda}}$. Because $\rho(\tilde{\Lambda}) \geq \norm{\ba^* - \ba_i}[{V_{\tilde{\Lambda}}}^{-1}][2]$ for any $i \in [N]$, the algorithm will have stopped if 
    \begin{equation}
        \label{eq:sufficient_stopping_criterion}
        \frac{64}{\kappa_\alpha^2} (d + \log(3N^2 \tau^2/\delta)) \frac{\rho(\tilde{\Lambda})}{\tau K} \leq (\hDelta_{1i}^{(\tau)})^2.
    \end{equation}
    Using \cref{corollary:confidence_bound}, we show in \cref{appendix:sample_complexity_upper_bound} that when eq. \eqref{eq:sufficient_stopping_criterion} holds, 
    $$\hDelta_{1i}^{(\tau)} \geq \frac{\Delta_{\min}}{2},$$
    where $\Delta_{\min} = \min_{i = 2, \dots, N} \mul{\bmtheta^*}{\ba^* - \ba_i}$.
    Thus an upper bound on the sample complexity is a $\tau$ that satisfies
    $$\frac{64}{\kappa_\alpha^2} (d + \log(3N^2 \tau^2/\delta)) \frac{\rho(\tilde{\Lambda})}{\tau K} = \frac{\Delta_{\min}^2}{4}.$$
    The desired result follows after rearranging terms and using eq. \eqref{eq:rho_bound}. 
\end{proof}


\subsection{Sample Complexity - Lower Bound}
\label{section:sample_complexity_lower_bound}

In this section, we derive an information-theoretic lower bound on the sample complexity of any algorithm that solves the linear-MNL-bandit. We consider a subclass of problems where $N = d$ and where $\ba_1, \dots, \ba_N$ are linearly independent but not necessarily orthogonal. Our lower bound matches the upper bound from Theorem \ref{theorem:upper_bound} on a worst-case problem instance.

The parameter $\bmtheta^* \in \bbR^d$ used in \eqref{eq:feedback_model} specifies a problem instance. Let $\tilde{\bmtheta} \in \bbR^d$ specify an alternate problem instance where $\ba_1$ is no longer the best arm (recall that $\ba_1$ is the best arm under $\bmtheta^*$). Let $E$ be the event that Algorithm \ref{alg:static_XY_allocation} returns $\ba_1$ as the best arm. Then, $P(E) \geq 1 - \delta$ under $\bmtheta^*$ and $P(E) \leq \delta$ under $\tilde{\bmtheta}$. The following change of measure lemma directly follows from Lemma 1 in \citet{KaufmannEtAl:2016:OnTheComplexityOfBestArmIdentificationInMultiArmedBanditModels}.

\begin{lemma}
    \label{lemma:change_of_measure}
    Let $\bmtheta^*$ and $\tilde{\bmtheta}$ be $d$-dimensional parameter vectors as specified above, and $N_S(\tau)$ be the number of times subset $S \subseteq_K \calA$ was chosen\footnote{Notational remark: $X \subseteq_K Y$ denotes that $X$ is a subset of $Y$ such that $\abs{X} = K$.} in the first $\tau$ time steps, where $\tau$ is an almost-surely finite stopping time. Also, let $\bmmu\supS(\bmtheta) \in \Delta_K$ denote the probability distribution over elements in $S$ under parameter $\bmtheta$ calculated using \eqref{eq:mu_def}. Then, $$\sumS \rmE_{\bmtheta^*}[N_S(\tau)] \, \rmKL{\bmmu\supS(\bmtheta^*)}{\bmmu\supS(\tilde{\bmtheta})} \geq \log \frac{1}{2.4 \delta}.$$
\end{lemma}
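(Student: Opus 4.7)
The plan is to follow the canonical change-of-measure argument of Kaufmann et al.\ (2016), specialised to the MNL feedback channel. The proof proceeds in three movements: (i) rewrite the expected log-likelihood ratio between $\rmP_{\bmtheta^*}$ and $\rmP_{\tilde{\bmtheta}}$ as the weighted KL sum on the left-hand side; (ii) invoke the data-processing inequality to lower-bound that ratio by a binary KL; and (iii) plug in the $(1-\delta)$-correctness constraints and massage constants.

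First I would fix notation. Let $\rmP_{\bmtheta}$ denote the joint law of the entire history $\{(\bX\sups, \by\sups)\}_{s \le \tau}$ induced by any fixed learner facing parameter $\bmtheta$, and let $E$ be the event that the learner outputs $\ba_1$ upon termination. By $(1-\delta)$-correctness under $\bmtheta^*$ and the fact that under $\tilde{\bmtheta}$ the best arm is no longer $\ba_1$, we have $\rmP_{\bmtheta^*}(E) \ge 1-\delta$ and $\rmP_{\tilde{\bmtheta}}(E) \le \delta$. For step (i), I would expand the log-likelihood ratio: since the subset $S\sups$ selected at time $s$ is a measurable function of the past, it has the same conditional law under both measures, and only the conditional likelihood of $\by\sups$ given $\bX\sups$ contributes. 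Using \eqref{eq:feedback_model}--\eqref{eq:mu_def}, telescoping, and then taking $\rmE_{\bmtheta^*}$-expectation (with Wald's identity justified by $\tau < \infty$ a.s.\ and uniformly bounded per-step log-ratios), then regrouping the resulting sum by the identity of the queried subset $S$, yields exactly
\begin{equation*}
  \rmE_{\bmtheta^*}\!\Bigl[\log \tfrac{d\rmP_{\bmtheta^*}}{d\rmP_{\tilde{\bmtheta}}}\Bigr]
  = \sumS \rmE_{\bmtheta^*}[N_S(\tau)] \, \rmKL{\bmmu\supS(\bmtheta^*)}{\bmmu\supS(\tilde{\bmtheta})}.
\end{equation*}

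For step (ii), the data-processing inequality applied to the two-cell partition $\{E, E^c\}$ gives
\begin{equation*}
  \rmE_{\bmtheta^*}\!\Bigl[\log \tfrac{d\rmP_{\bmtheta^*}}{d\rmP_{\tilde{\bmtheta}}}\Bigr]
  \ge \mathrm{kl}\bigl(\rmP_{\bmtheta^*}(E),\, \rmP_{\tilde{\bmtheta}}(E)\bigr),
\end{equation*}
where $\mathrm{kl}(p,q) = p\log(p/q) + (1-p)\log((1-p)/(1-q))$ is the binary KL. Monotonicity of $\mathrm{kl}$ in its two arguments together with the bounds on $\rmP_{\bmtheta^*}(E)$ and $\rmP_{\tilde{\bmtheta}}(E)$ yields $\mathrm{kl}(1-\delta,\delta) \ge \log\bigl(1/(2.4\delta)\bigr)$, the standard elementary estimate that produces the $2.4$ constant appearing in the lemma. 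Chaining the three bounds gives the claim.

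The main obstacle is the handling of the random stopping time $\tau$: one must invoke Wald's identity for the log-likelihood ratio martingale, which requires $\tau$ almost-surely finite (which is given) plus integrability of the single-step log-ratio. The latter holds here because the components of $\bmmu\supS(\bmtheta)$ are bounded away from $0$ for any finite $\bmtheta$, so the per-step log-ratio is uniformly bounded in the realised action. Everything else is routine once the MNL per-step KL is identified as $\rmKL{\bmmu\supS(\bmtheta^*)}{\bmmu\supS(\tilde{\bmtheta})}$, so the argument goes through verbatim as in Kaufmann et al.\ without MNL-specific complications.
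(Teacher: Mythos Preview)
Your proposal is correct and follows exactly the approach the paper intends: the paper does not give an independent proof but states that the lemma ``directly follows from Lemma~1 in \citet{KaufmannEtAl:2016:OnTheComplexityOfBestArmIdentificationInMultiArmedBanditModels}'', noting only that each size-$K$ subset $S$ plays the role of an arm so that the sum ranges over $S \subseteq_K \calA$. Your write-up simply unpacks that citation---the likelihood-ratio decomposition, the data-processing step to the binary KL, and the $\mathrm{kl}(1-\delta,\delta) \ge \log(1/(2.4\delta))$ estimate---so there is no methodological difference to report.
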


Note that the actions in our context correspond to selecting a subset of $K$ arms at each step. Thus, our setting is as if we have ${N \choose K}$ arms, each corresponding to a subset $S \subseteq_K \calA$. The reward for the action associated with a subset $S$ is drawn from the distribution $\bmmu\supS(\bmtheta)$. Hence, as opposed to Lemma 1 in \citet{KaufmannEtAl:2016:OnTheComplexityOfBestArmIdentificationInMultiArmedBanditModels}, the summation in our Lemma \ref{lemma:change_of_measure} runs over all subsets $S \subseteq_K \calA$. 

The challenge in deriving strong lower bounds lies in identifying an appropriate $\tilde{\bmtheta}$ that specifies an alternative problem instance. A common strategy in the classical MAB setting is to choose a $\tilde{\bmtheta}$ that changes the reward distribution of only one arm (i.e., one action), thus eliminating all but one term in the summation in Lemma \ref{lemma:change_of_measure} \cite{KaufmannEtAl:2016:OnTheComplexityOfBestArmIdentificationInMultiArmedBanditModels}. Doing so is harder under the MNL model because each arm is part of many subsets and affects the reward distribution of several actions (see the proof of Theorem \ref{theorem:lower_bound}). The challenge is exacerbated in linear bandits since a change in $\bmtheta$ changes the mean reward of multiple arms. \citet{FiezEtAl:2019:SequentialExperimentalDesignForTransductiveLinearBandits} obtain $\tilde{\bmtheta}$ by solving an optimization problem that makes a given arm $\ba_j \neq \ba^*$ the best arm while making the smallest perturbation to the original $\bmtheta^*$. However, unlike our lower bound in Theorem \ref{theorem:lower_bound}, the expression they derive does not explicitly show the dependence of sample complexity on parameters like $d$ and $K$.

Without loss of generality, assume that $\ba_1$ is the best arm under $\bmtheta^*$. Define $\bmtheta^j$ for $j = 2, \dots, d$ as,
\begin{align}
    \label{eq:theta_j_opt_problem}
    \bmtheta^j = \argmin_{\bmtheta \in \bbR^d} \,\,& \normtwosq{\bmtheta^* - \bmtheta} \notag \\
    \text{s.t.} \,\,& \mul{A_j}{\bmtheta^* - \bmtheta} = 0 \notag \\
    & \mul{\ba_1 - \ba_j}{\bmtheta^* - \bmtheta} \geq \epsilon + \Delta_{1j},
\end{align}
where $A_j \in \bbR^{d \times d - 1}$ contains $\ba_1, \dots, \ba_{j - 1}, \ba_{j + 1}, \dots, \ba_d$ as its columns, $\Delta_{1j} = \mul{\bmtheta^*}{\ba_1 - \ba_j}$, and $\epsilon > 0$ is a small constant. The equality constraint ensures that $\mul{\bmtheta^j}{\ba_i} = \mul{\bmtheta^*}{\ba_i}$ for all $i \in [d] \backslash \{j\}$, and the inequality constraint requires $\mul{\bmtheta^j}{\ba_j} \geq \mul{\bmtheta^j}{\ba_1} + \epsilon$. Hence, $\ba_1$ is no longer the best arm under parameter $\bmtheta^j$. Defining $F_j = I - A_j \mul{A_j}{A_j}^{-1} A_j^\intercal$, it is easy to see that the solution to \eqref{eq:theta_j_opt_problem} is given by $\bmtheta^j = \bmtheta^* - \delta^j$, where
\begin{equation}
    \label{eq:delta_j_def}
    \delta^j = \frac{\epsilon + \Delta_{1j}}{\norm{\ba_1 - \ba_j}[F_j][2]} F_j (\ba_1 - \ba_j).
\end{equation}
The following theorem uses Lemma \ref{lemma:change_of_measure} and an upper bound on $\rmKL{\bmmu\supS(\bmtheta^*)}{\bmmu\supS(\bmtheta^j)}$ for all $j = 2, \dots, d$.

\begin{restatable}{theorem}{lowerbound}
    \label{theorem:lower_bound}
    Let $N = d$ and $\ba_1, \dots, \ba_N \in \bbR^d$ span a $d$-dimensional subspace. Assume without loss of generality that $\mul{\bmtheta^*}{\ba_1} \geq \mul{\bmtheta^*}{\ba_i}$ for all $i = 2, \dots, N$. Define $\Delta_{1i} = \mul{\bmtheta^*}{\ba_1 - \ba_i}$ and let $\tau$ be the almost-surely finite stopping time before the stopping condition is satisfied. Then, for every $\epsilon > 0$ such that $\Delta_{1i} + \epsilon \leq 1$ for all $i \in [d] \backslash \{1\}$, $$\sumS \rmE_{\bmtheta^*} [N_S(\tau)] \geq \frac{1 - 1/K}{e} \sum_{j = 2}^d \frac{1}{(\Delta_{1j} + \epsilon)^2} \log\frac{1}{2.4 \delta},$$ where $\delta > 0$ is the error probability. 
\end{restatable}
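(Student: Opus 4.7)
The plan is to apply the change-of-measure bound of Lemma~\ref{lemma:change_of_measure} with each of the $d-1$ alternative instances $\bmtheta^j$ $(j = 2,\ldots,d)$ defined in~\eqref{eq:theta_j_opt_problem}, and then combine the resulting inequalities. The first step is a structural observation about $\bmtheta^j$: by the equality constraint $A_j^\intercal(\bmtheta^* - \bmtheta^j) = 0$ together with the closed form~\eqref{eq:delta_j_def}, the perturbation $\delta^j = \bmtheta^* - \bmtheta^j$ satisfies $\mul{\delta^j}{\ba_i} = 0$ for every $i \in [d] \setminus \{j\}$ and $\mul{\delta^j}{\ba_j} = -(\Delta_{1j} + \epsilon)$. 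Plugging this into~\eqref{eq:mu_def} yields $\bmmu\supS(\bmtheta^*) = \bmmu\supS(\bmtheta^j)$ whenever $\ba_j \notin S$, so the KL summand vanishes on those subsets and the sum in Lemma~\ref{lemma:change_of_measure} collapses to the subsets that contain $\ba_j$.

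For a subset $S \ni \ba_j$, I would then compute the KL explicitly. Writing $p_j^S := \mu_j^S(\bmtheta^*)$ and $u_j := \Delta_{1j} + \epsilon$, the orthogonality above reduces the KL to the one-parameter form
\[
    \rmKL{\bmmu\supS(\bmtheta^*)}{\bmmu\supS(\bmtheta^j)} = \log\bigl(1 + p_j^S(e^{u_j} - 1)\bigr) - p_j^S u_j,
\]
i.e.\ the Bernoulli KL divergence of the event ``$\ba_j$ wins'' under the two models. The $\chi^2$ upper bound on Bernoulli KL gives $\rmKL \le p_j^S(1-p_j^S)(e^{u_j}-1)^2/e^{u_j}$, and a Taylor estimate bounds the last factor by a constant multiple of $u_j^2$ on $u_j \in [0,1]$ (available by the assumption $\Delta_{1j}+\epsilon \le 1$). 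Plugging the resulting bound into Lemma~\ref{lemma:change_of_measure} and summing the $d-1$ inequalities, the right-hand side becomes a constant multiple of $\sum_{j=2}^{d} u_j^2 \sum_{S \ni \ba_j} \rmE_{\bmtheta^*}[N_S(\tau)]\, p_j^S(1-p_j^S)$. Swapping the order of summation and invoking the simplex identity $\sum_{j \in S} p_j^S(1-p_j^S) = 1 - \sum_{j \in S}(p_j^S)^2 \le 1 - 1/K$ (by Cauchy--Schwarz) then produces a factor of $1-1/K$ multiplying $\sumS \rmE_{\bmtheta^*}[N_S(\tau)]$ on the right, and rearranging gives the claimed inequality.

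The main obstacle is that, under the MNL model, a naive perturbation of $\bmtheta$ alters the softmax probabilities of \emph{every} subset containing the affected arm ($\binom{d-1}{K-1}$ of them), so the change-of-measure sum in Lemma~\ref{lemma:change_of_measure} would otherwise involve many incomparable KL terms, and the linear structure in $\bmtheta$ further couples several arms simultaneously. The equality constraint $A_j^\intercal \delta^j = 0$ in~\eqref{eq:theta_j_opt_problem} is exactly what forces $\delta^j$ to act only through the single coefficient $\mul{\delta^j}{\ba_j}$; this simultaneously kills the KL on subsets avoiding $\ba_j$ and collapses the surviving KL to the tractable Bernoulli form above, which is what ultimately yields the quadratic scaling $1/(\Delta_{1j}+\epsilon)^2$ together with the tight $(1-1/K)$ simplex constant.
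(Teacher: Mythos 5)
Your proposal is correct and reaches the stated bound, but the two key steps --- bounding the per-subset KL and aggregating over $j$ --- are done genuinely differently from the paper. The setup is identical: both arguments use the orthogonality $\mul{\delta^j}{\ba_i}=0$ for $i\neq j$ to kill the KL on subsets avoiding $\ba_j$ and to collapse the surviving KL to $f_{u_j}(p^S_j)=\log(1+p^S_j(e^{u_j}-1))-p^S_j u_j$ with $u_j=\Delta_{1j}+\epsilon$ and $p^S_j=\mu_j\supS(\bmtheta^*)$. From there the paper argues that $f_{u_j}$ is increasing on $[0,x^*]$ with $x^*=\frac{1}{u_j}-\frac{1}{e^{u_j}-1}\geq\frac14$, shows $p^S_j\leq\frac{e}{K-1}$, and needs ``$K$ large enough'' so that $\frac{e}{K-1}\leq\frac14$ (i.e.\ $K\gtrsim 12$, a restriction not visible in the theorem statement) to conclude $\rmKL{\bmmu\supS(\bmtheta^*)}{\bmmu\supS(\bmtheta^j)}\leq\frac{e}{K-1}u_j^2$ uniformly; it then handles double counting via $\sum_{j}\bbI{j\in S}\leq K$ and divides by $K$. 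Your route keeps the $p^S_j$-dependence: the $\chi^2$ bound gives $\rmKL{\bmmu\supS(\bmtheta^*)}{\bmmu\supS(\bmtheta^j)}\leq p^S_j(1-p^S_j)(e^{u_j}-1)^2/e^{u_j}\leq C\,p^S_j(1-p^S_j)u_j^2$ with $C=e-2+e^{-1}\approx1.09$ on $u_j\in[0,1]$, and the double counting is absorbed by $\sum_{j\in S}p^S_j(1-p^S_j)=1-\sum_{j\in S}(p^S_j)^2\leq1-1/K$. This buys two things: it removes the hidden large-$K$ requirement, and it yields the constant $\frac{1}{C(1-1/K)}\geq\frac{1}{e}\geq\frac{1-1/K}{e}$, i.e.\ a slightly stronger bound. (The observation that the categorical KL here equals the Bernoulli KL of the event ``$\ba_j$ wins'' is easily checked and is what licenses the Bernoulli $\chi^2$ formula; you should verify it explicitly, as the paper does, by computing the ratio of normalizing constants.)

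One bookkeeping slip to fix: divide the $j$-th change-of-measure inequality by $u_j^2$ \emph{before} summing over $j=2,\dots,d$. As written, you ``sum the $d-1$ inequalities'' and leave $u_j^2$ inside the resulting double sum; swapping the order of summation and applying the simplex identity would then only yield something like $(d-1)\log\frac{1}{2.4\delta}\leq C\,(\max_j u_j^2)(1-1/K)\sumS\rmE_{\bmtheta^*}[N_S(\tau)]$, not the weighted sum $\sum_j u_j^{-2}$ the theorem asserts. Dividing first gives
\begin{equation*}
    \sum_{j=2}^d \frac{1}{u_j^{2}}\log\frac{1}{2.4\delta}\;\leq\; C\sum_{S \subseteq_K \calA}\rmE_{\bmtheta^*}[N_S(\tau)]\sum_{j\in S\setminus\{1\}}p^S_j(1-p^S_j)\;\leq\; C\Big(1-\frac{1}{K}\Big)\sumS\rmE_{\bmtheta^*}[N_S(\tau)],
\end{equation*}
which rearranges to the claim since $C\leq e$.
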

\begin{proof} (Sketch)
    This is a brief proof sketch; see Appendix \ref{appendix:sample_complexity_lower_bound} for details. We overload the notation and use $S$ to denote both a set of arm vectors $\{\ba_{i_1}, \ba_{i_2}, \dots, \ba_{i_K}\} \subseteq_K \calA$ and the corresponding indices $\{i_1, i_2, \dots, i_K\}$. Further, $\mu_i\supS(\bmtheta)$ denotes the entry of $\bmmu\supS(\bmtheta)$ corresponding to the element $i \in S$. Using the constraints from the optimization problem in \eqref{eq:theta_j_opt_problem}, one can show that if $\ba_j \in S$, $$\rmKL{\bmmu\supS(\bmtheta^*)}{\bmmu\supS(\bmtheta^j)} = f_{\Delta_{1j} + \epsilon}(\mu_j\supS(\bmtheta^*)),$$ where $f_\alpha(x) \coloneqq \log (1 + x(\exp(\alpha) - 1)) - x\alpha$. Further, $f_\alpha(x) \leq f_\alpha(\bar{x})$ if $x \in [0, \bar{x}]$ for some $\bar{x} \leq \frac{1}{\alpha} - \frac{1}{\exp(\alpha) - 1}$. It can be shown that for large enough $K$,
    \begin{align*}
        \mu\supS_j(\bmtheta^*) \leq \frac{e}{K - 1} \leq 
        \frac{1}{\Delta_{1j} + \epsilon} - \frac{1}{\exp(\Delta_{1j} + \epsilon) - 1}.
    \end{align*}
    Thus, $f_{\Delta_{1j} + \epsilon}(\mu_j\supS(\bmtheta^*)) \leq  f_{\Delta_{1j} + \epsilon}(\frac{e}{K - 1})$. This provides an upper bound on $\rmKL{\bmmu\supS(\bmtheta^*)}{\bmmu\supS(\bmtheta^j)}$. Using this bound in Lemma \ref{lemma:change_of_measure} gives a lower bound on $\sum_{j \in S}\rmE_{\bmtheta^*}[N_S(\tau)]$. Summing over $j = 2, \dots, d$ and dividing by $K$ to account for the double-counting yields the desired result.
\end{proof}

\begin{remark}
    \label{remark:worst_case_optimality}
    Note that $K \leq N$ in general. Hence, $K \leq d$ for the problem instance in Theorem \ref{theorem:lower_bound}. Consider the case where $K = N = d$ and $\Delta_{1j} = \Delta_{\min}$ for all $j = 2, 3, \dots, N$. This results in an $\Omega(\frac{d}{\Delta_{\min}^2})$ lower bound using Theorem \ref{theorem:lower_bound} which matches the $\tilde{O}(\frac{d^2}{K \Delta_{\min}^2})$ upper bound from Theorem \ref{theorem:upper_bound} up to logarithmic factors.
\end{remark}


\section{Experiments}
\label{section:experiments}

We perform four types of experiments. First, we study the dependence of the stopping time on $d$ and $K$ and verify that it matches the predictions of our upper bound. Second, we study the arm-pulls profile of \algname{} and \algnameadaptive{}. Third, we test the robustness of our algorithms by using a feedback model different from the MNL feedback model (eq. \eqref{eq:feedback_model}). Finally, we compare \algname{} and \algnameadaptive{} with the fully adaptive allocation strategy in \citet{KazerouniEtAl:2019:BestArmIdentificationInGeneralizedLinearBandits} for the case when $K = 2$, as this is the only case when their algorithm can be used.

Throughout this section, we consider a problem setting where $N = d + 1$ and $\ba_i = \be_i \in \bbR^d$ for all $i \in [d]$. Here, $\be_i$ is the $i^{th}$ standard basis vector. The $(d + 1)^{th}$ arm vector is given by $\ba_{d + 1} = [\cos{\omega}, \sin{\omega}, 0, 0, \dots, 0]$ for $\omega = 0.01$. We set $\bmtheta^* = [2, 0, 0, \dots, 0]$, making $\ba_1$ the best arm and $\ba_{d + 1}$ a close second-best arm. This is the setting studied by most papers on best arm identification in the linear setting \cite{SoareEtAl:2014:BestArmIdentificationInLinearBandits,XuEtAl:2018:AFullyAdaptiveAlgorithmForPureExplorationInLinearBandits,FiezEtAl:2019:SequentialExperimentalDesignForTransductiveLinearBandits}. 


\begin{figure}[t]
    \centering
    \subfloat[][]{
        \label{fig:stopping_time_synthetic}
        \includegraphics[width=0.23\textwidth]{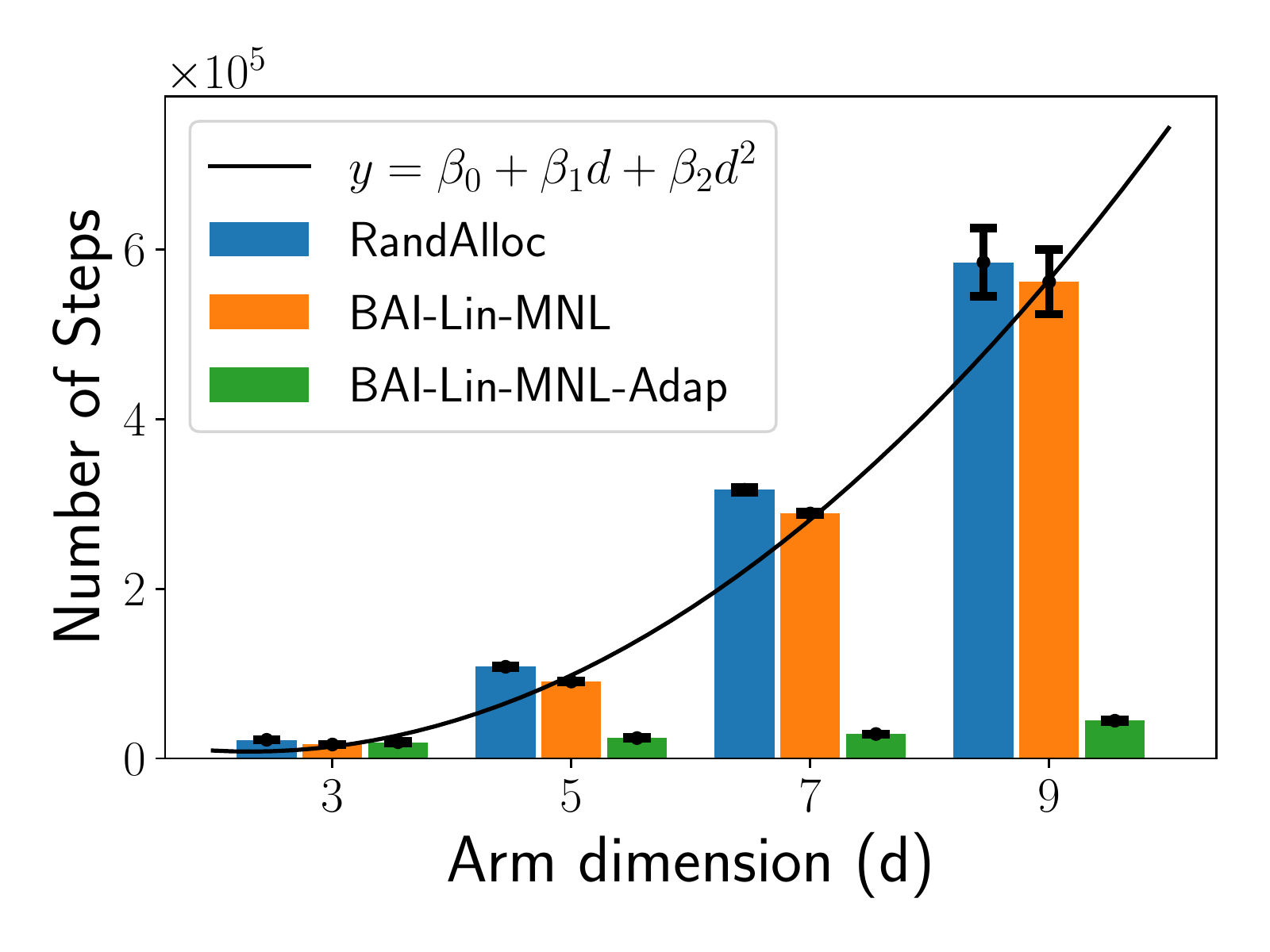}
    }%
    \subfloat[][]{
        \label{fig:K_ablation}
        \includegraphics[width=0.23\textwidth]{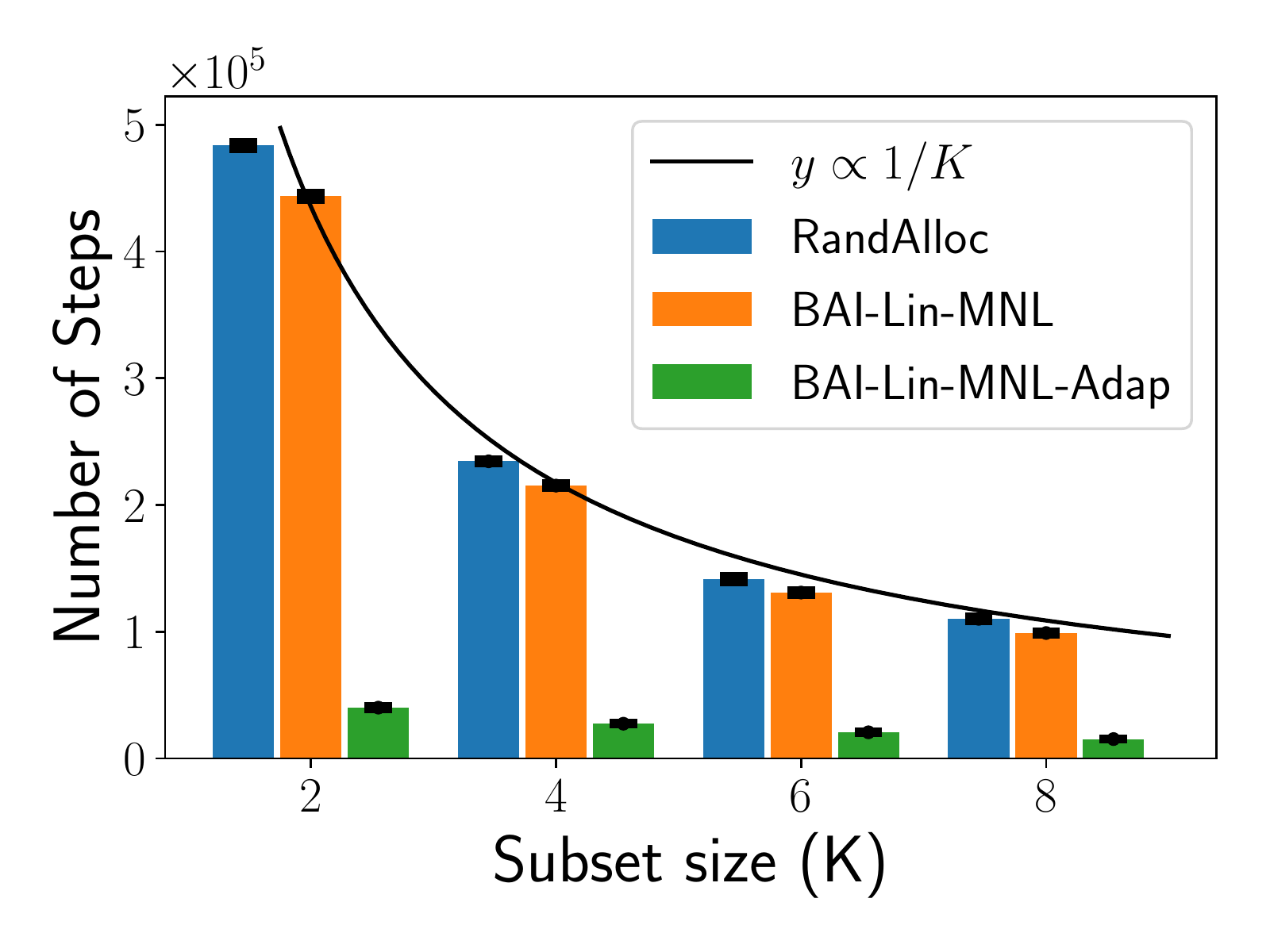}
    }
    \caption{Variation of mean stopping time with arm-dimension $d$ and subset-size $K$. Plot \subref{fig:stopping_time_synthetic} uses $K = 3$ and plot \subref{fig:K_ablation} uses $d = 7$. The stopping time of \algname{} increases as $d^2$ and decreases as $1/K$ as predicted by Theorem \ref{theorem:upper_bound}. \algnameadaptive{} performs significantly better than the other two strategies.}
\end{figure}

\subsection{Sample Complexity Dependence on $d$ and $K$}
\label{section:variation_of_sample_complexity}

We study the stopping time dependence on arm-dimension $d$ and subset-size $K$ for three algorithms: RandAlloc (a random allocation strategy that selects actions randomly), \algname, and \algnameadaptive. Figures \ref{fig:stopping_time_synthetic} and \ref{fig:K_ablation} show the variation in stopping time as a function of $d$ and $K$ respectively. Each strategy was independently run $10$ times. The plots validate Theorem \ref{theorem:upper_bound} which predicts that the sample complexity of \algname{} increases as $d^2$ and decreases as $1/K$. We also see that \algnameadaptive{} significantly outperforms the other two strategies (up to 12x fewer samples). Note that static allocation strategies do not perform as well, even in the linear bandits case \cite{SoareEtAl:2014:BestArmIdentificationInLinearBandits,XuEtAl:2018:AFullyAdaptiveAlgorithmForPureExplorationInLinearBandits}. Ours is the first algorithm for best-arm identification under MNL feedback in the linear bandits setting, and paves way for better algorithms and tighter analysis in future.


\begin{figure}
    \centering
    \includegraphics[width=0.32\textwidth]{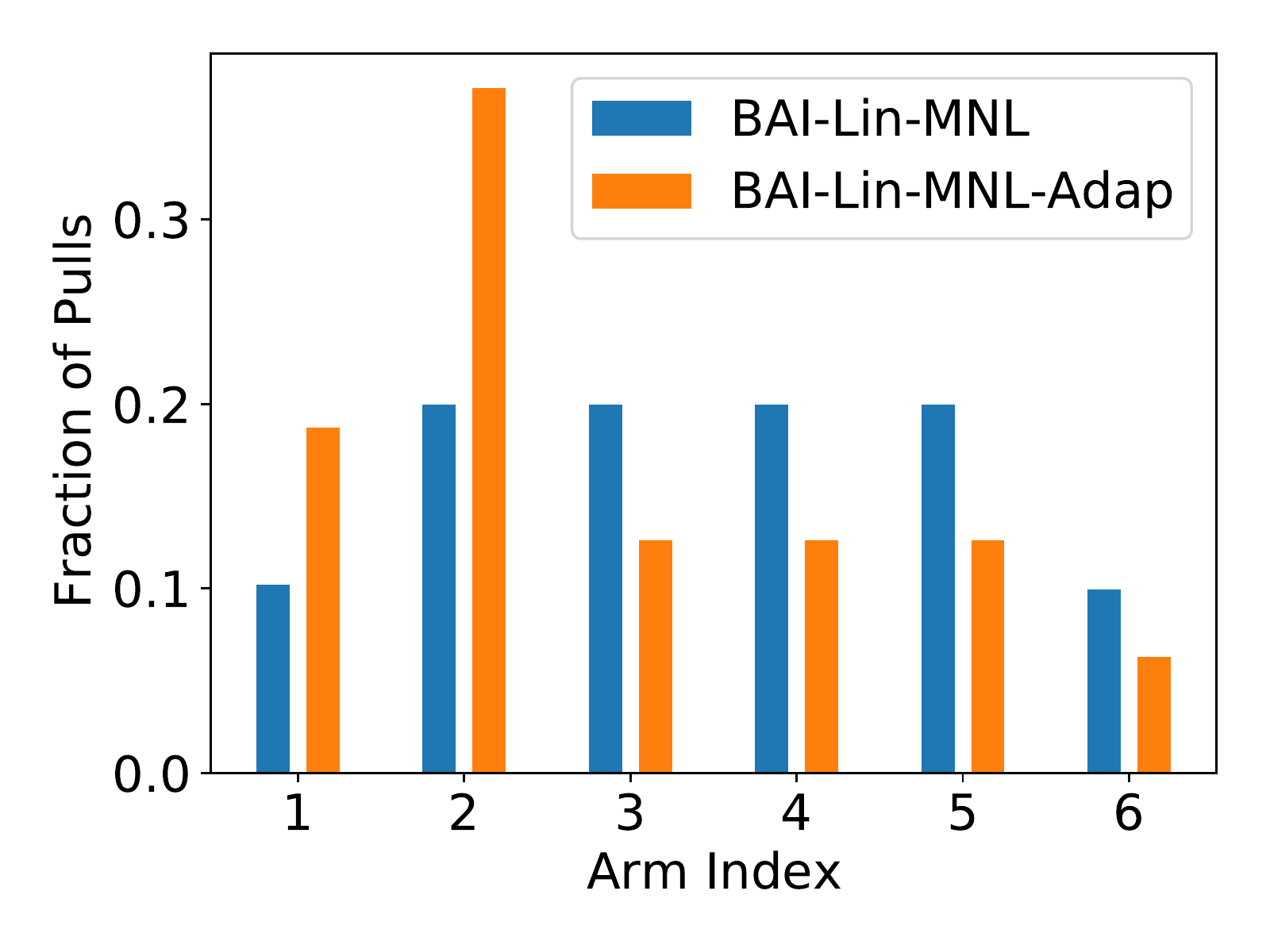}
    \caption{Fraction of times each arm was pulled by \algname{} and \algnameadaptive{}. \algnameadaptive{} pulls arm $\ba_2$ more often as it helps in differentiating $\ba_1$ from $\ba_6$. $d=5$ and $K=3$}
    \label{fig:arm_counts}
    \vskip -5mm
\end{figure}

\subsection{Profile of Arm Pulls}
\label{section:static_vs_adaptive}
Arms $\ba_1$ and $\ba_{d + 1}$ are the top two arms, and hence the estimate of $\bmtheta^*$ must be improved along $\ba_1 - \ba_{d + 1}$ to differentiate between these arms. A static allocation strategy such as \algname{} explores all directions uniformly. On the other hand, \algnameadaptive{} eliminates unimportant directions through successive batches. We verify this behavior in Figure \ref{fig:arm_counts} which shows the fraction of times each arm was selected by \algname{} and \algnameadaptive{} when $d=5$ and $K=3$. We see that $\ba_2$ is selected more often by \algnameadaptive{}, as arm $\ba_2$ is most aligned with $\ba_1 - \ba_{d + 1}$ among all arms.


\subsection{Robustness}
\label{section:robustness}

Our analysis assumes that the winner is chosen according to eq.\eqref{eq:feedback_model} at each step. However, our algorithms can be applied even when the winner is chosen according to a different model. The MNL feedback model in \eqref{eq:feedback_model} is an instance of a class of choice models known as Random Utility Models (RUM) \cite{AzariEtAl:2012:RandomUtilityTheoryForSocialChoice,SoufianiEtAl:2013:GeneralizedRandomUtilityModelsWithMultipleTypes}. We experimented with another RUM where the winner at each step is chosen as $\argmax_{\ba \in S} (\mul{\bmtheta^*}{\ba} + \eta_\ba)$, where $\eta_\ba \sim \calN(0, \sigma^2)$ are chosen i.i.d. for some constant $\sigma > 0$ which we set to $1.0$ in our experiment. Table \ref{table:alternative_feedback_model} compares the performance of RandAlloc, \algname{}, and \algnameadaptive{} for $K = 3$ and $d = 7$. Once again, \algnameadaptive{} outperforms both strategies while also returning the correct best-arm.

\begin{table}
    \centering
    \begin{tabular}{l r@{$\pm$}l}
        \toprule
        \textbf{Strategy} & \multicolumn{2}{c}{\textbf{Stopping time}} \\
        \midrule
        RandAlloc & $181370$ & $1085$ \\
        \algname{} & $166761$ &  $893$ \\
        \algnameadaptive{} & $\mathbf{23916}$ & $\mathbf{713}$ \\
        \bottomrule
    \end{tabular}
    \caption{Robustness: Stopping time of various strategies under a different Random Utility Model described in Section \ref{section:robustness}.}
    \label{table:alternative_feedback_model}
\end{table}


\begin{figure}
    \centering
    \includegraphics[width=0.37\textwidth]{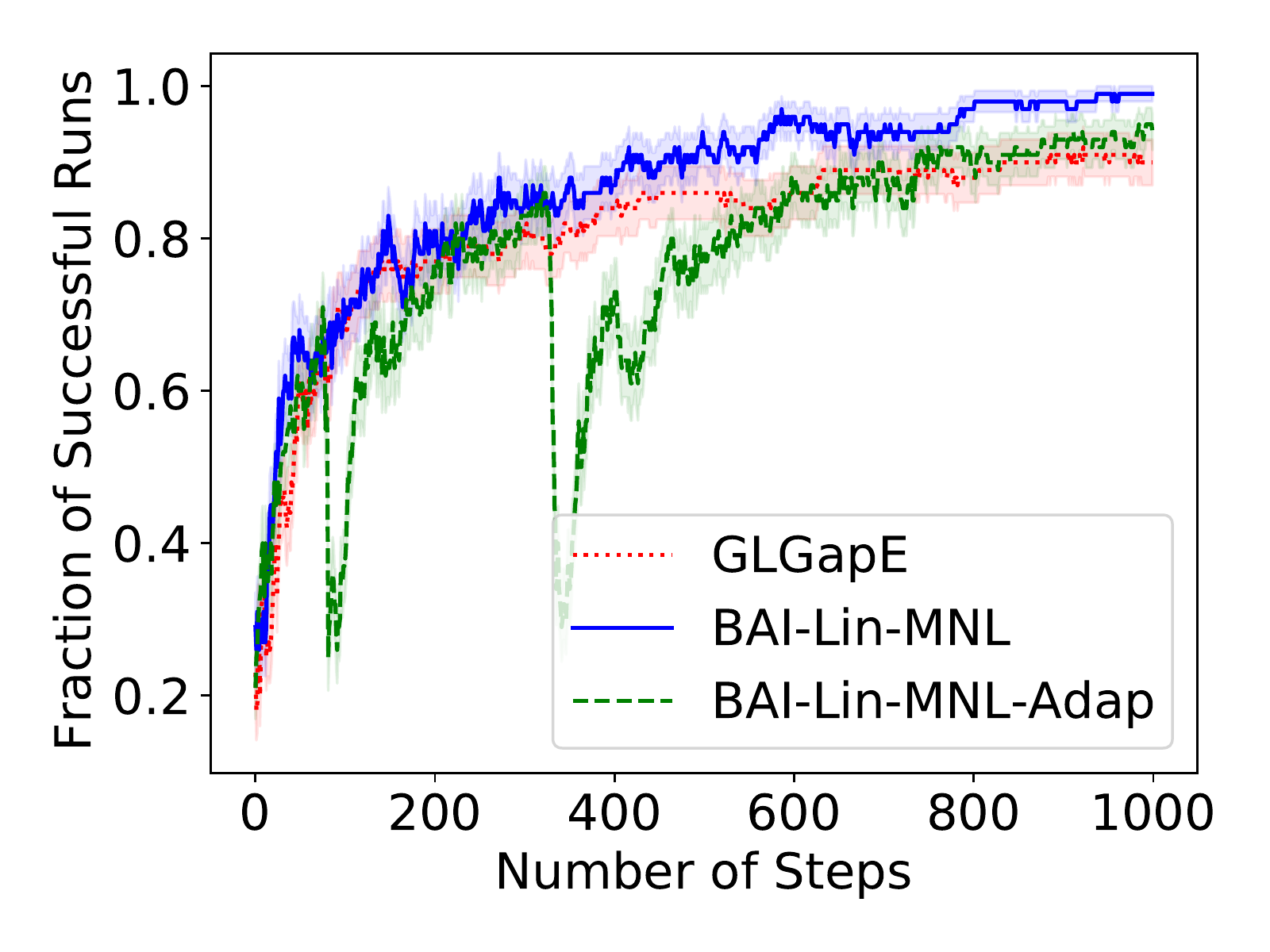}
    \caption{Comparing \algname{} with GLGapE \cite{KazerouniEtAl:2019:BestArmIdentificationInGeneralizedLinearBandits}. $d = 8$, $K = 2$.}
    \label{fig:comparison_with_GLGapE}
    \vskip -5mm
\end{figure}

\subsection{Comparison with a Fully Adaptive Strategy}
\label{section:comparison_with_a_fully_adaptive_strategy}

\algnameadaptive{} must discard the data from previous batches to ensure that the $\{\bX\sups\}$ sequence within a batch is a fixed non-adaptive sequence and the assumption in Corollary \ref{corollary:confidence_bound} is satisfied. To avoid discarding data, approaches that use confidence bounds for adaptive sequences $\{\bX\sups\}_{s \leq t}$ have been proposed \cite{XuEtAl:2018:AFullyAdaptiveAlgorithmForPureExplorationInLinearBandits,KazerouniEtAl:2019:BestArmIdentificationInGeneralizedLinearBandits}. While \citet{XuEtAl:2018:AFullyAdaptiveAlgorithmForPureExplorationInLinearBandits} study linear bandits and their algorithm cannot be used in our setting, \citet{KazerouniEtAl:2019:BestArmIdentificationInGeneralizedLinearBandits} study best-arm identification when the feedback is generated according to a generalized linear model, and this feedback can be simulated using our setting when $K=2$ as we explain next. Given arms $\calA=\{\ba_i\}_{i=1}^N$ in the linear-MNL setting, define arms $\bb_{ij} \in \bbR^d$ as $\bb_{ij} = \ba_i - \ba_j$ for all $i, j \in [N]$ in the generalized linear setting. The feedback for arm $\bb_{ij}$ in the generalized linear setting is $1$ with probability $\sigma(\mul{\bmtheta^*}{\bb_{ij}})$ and $0$ otherwise, where $\sigma(x) = 1 / (1 + \exp(-x))$ is the logistic sigmoid function. This feedback can be simulated by playing the subset $\{\ba_i, \ba_j\}$ in the linear-MNL model and drawing the winner according to eq. \eqref{eq:feedback_model}.

We compare the performance (over $100$ simulations) of our algorithms to the algorithm GLGapE proposed by \citet{KazerouniEtAl:2019:BestArmIdentificationInGeneralizedLinearBandits} in \cref{fig:comparison_with_GLGapE}. At each step $t$, we use the estimated $\htheta\supt$ to identify the best-arm, and plot the fraction of simulations that correctly estimated the best arm at any given time $t$. Both \algname{} and \algnameadaptive{} identify the best arm with the same probability or higher than GLGapE, although not significantly for \algnameadaptive. The fluctuations in \algnameadaptive{} correspond to batch resets where all previous data is discarded.

\vspace{-0.3cm}
\paragraph{Implementation notes:} While our theoretical results do not consider regularization while computing the maximum likelihood estimate $\htheta$, we use it in our experiments with regularization coefficient set to $\lambda = 10^{-4}$. Moreover, as is the common practice \cite{LiEtAl:2017:ProvablyOptimalAlgorithmsForGeneralizedLinearContextualBandits,KazerouniEtAl:2019:BestArmIdentificationInGeneralizedLinearBandits}, we ignore the condition on $\lambdamin{V^{(t)}}$ required by Corollary \ref{corollary:confidence_bound} in our implementation, and execute a fixed number of random exploration steps (set of $5$). We also add $\mu I$ ($\mu = 10^{-4}$) to $V^{(t - 1)}$ in Line 10 in Algorithm \ref{alg:static_XY_allocation} to ensure that it is invertible. Because any subset $S \subseteq_K \calA$ in which all the $K$ arms are same does not provide any information under the MNL feedback model, we discard such subsets for all strategies and replace them by the second best solution in Line 10. This ensures that at least one arm in each selected subset is different. Finally, we set $\kappa_\alpha = 0.5$ without tuning and use $\delta = 0.05$.


\section{Conclusion}
\label{section:conclusion}

In this paper, we study the problem of best-arm identification under structured preference feedback. We derive a confidence bound for the unknown parameter $\bmtheta^*$ under the MNL feedback model, develop static and adaptive algorithms, analyze their sample complexity, and prove that they are minimax optimal. To the best of our knowledge, this is the first work that studies best arm identification under structured preference feedback. Devising a fully adaptive strategy in this setting is a promising direction for future work. Another interesting problem is bridging the gap between the upper and lower bounds in the general case. We hypothesize that this can be achieved by improving the confidence bound in Theorem \ref{theorem:confidence_interval}. 


\bibliography{biblio}
\bibliographystyle{icml2020}


\onecolumn
\appendix

\begin{center}
  \Large{\textsc{Supplementary Material}}\textbf{\Large{\\ \vspace{0.2cm} Pure Exploration with Structured Preference Feedback}} \\
  \line(1, 0){450}
\end{center}

\section{Maximum Likelihood Estimation}
\label{appendix:mle}

Let $\mathcal{D}\supt = \{(\bX\sups, \by\sups)\}_{s \in [t]}$ be the set of observations till time $t$. The maximum likelihood estimate $\htheta \supt$ at time $t$ is given by:
\begin{equation*}
    \htheta\supt = \argmax_{\bmtheta \in \bbR^d} \ell(\bmtheta; \mathcal{D}\supt),
\end{equation*}
where $\ell(\bmtheta, \mathcal{D}\supt)$ is the log-likelihood function defined as:
\begin{equation*}
    \ell(\bmtheta, \mathcal{D}\supt) = \sum_{s = 1}^t \sum_{i = 1}^K y_i\sups \log \mu_i\sups(\bmtheta),
\end{equation*}
where $\mu_i\sups(\bmtheta)$ is defined in \eqref{eq:mu_def}. The derivative of $\ell$ with respect to $\bmtheta$ is given by
\begin{equation*}
    \nabla_{\bmtheta} \ell = \sums \sum_{i = 1}^K y_i \sups (\bx_i\sups - \bX\sups \bmmu\sups(\bmtheta)) = \sums \bX\sups (\by\sups - \bmmu\sups(\bmtheta)).
\end{equation*}
The maximum likelihood solution $\htheta\supt$ satisfies $[\nabla_{\bmtheta} \ell]_{\bmtheta = \htheta\supt} = 0$ as it maximizes $\ell(\bmtheta, \mathcal{D}\supt)$.


\section{Confidence Bound}
\label{appendix:confidence_bound}

Recall Theorem \ref{theorem:confidence_interval}:

\confidencebound*

\begin{proof}
We prove Theorem \ref{theorem:confidence_interval} by a series of technical lemmas. The proofs for these lemmas are given in Appendix \ref{appendix:confidence_bound_technical_lemmas}. We will use $\htheta \coloneqq \htheta\supt$ for the remainder of this section. Recall that the objective is to show a high probability bound on $\abs{\mul{\bx}{\htheta - \bmtheta^*}}$ for any $\bx \in \bbR^d$. Define the error function $G: \bbR^d \rightarrow \bbR^d$ as:
\begin{equation*}
    G(\bmtheta) = \sums \bX\sups (\bmmu\sups(\bmtheta) - \bmmu\sups(\bmtheta^*)).
\end{equation*}
Note that $G(\bmtheta^*) = 0$ and $G(\htheta)$ is given by:
\begin{equation*}
    G(\htheta) = \sums \bX\sups \bmepsilon\sups,
\end{equation*}
where, we use the fact that $[\nabla_{\bmtheta} \ell]_{\bmtheta = \htheta} = 0$ and write the observed output $\by\sups$ as $\by\sups = \bmmu\sups(\bmtheta^*) + \bmepsilon\sups$. Note that $\bmepsilon\sups \in [0, 1]^K$ and $\epsilon\sups_i = -\mu_i\sups(\bmtheta^*)$ if $y_i\sups = 0$ and $\epsilon\sups_i = 1 - \mu\sups_i({\bmtheta^*})$ otherwise. It is easy to see that $\rmE[\epsilon_i\sups] = 0$ for all $s \in [t]$ and  $i \in [K]$. Using the mean-value theorem for vector valued functions\footnote{\href{https://en.wikipedia.org/wiki/Mean_value_theorem}{Wikipedia article: \texttt{https://en.wikipedia.org/wiki/Mean\_value\_theorem}}} we get:
\begin{equation}
    \label{eq:mvt_G}
    G(\bmtheta_1) - G(\bmtheta_2) = \Big[ \int_0^1 [\nabla_{\bmtheta} G]_{\bmtheta = q\bmtheta_1 + (1 - q)\bmtheta_2} \mathrm{dq} \Big] (\bmtheta_1 - \bmtheta_2).
\end{equation}
Using the chain rule of derivatives, we can compute $\nabla_{\bmtheta} G$ as follows
\begin{align*}
    \nabla_{\bmtheta} G &= \nabla_{\bmtheta} \sums \Big[\bX\sups \bmmu\sups(\bmtheta) \Big] \notag \\
    &= \sums \Big[ \nabla_{\bmtheta} \bmmu\sups(\bmtheta) \Big] \bX\sups{'} \notag \\
    &= \sums \bX\sups \Big[ \mathrm{diag}(\bmmu\sups(\bmtheta)) - \bmmu\sups(\bmtheta)\bmmu\sups(\bmtheta){'} \Big] \bX\sups{'}.
\end{align*}
Recall the definition of $F(\bmtheta_1, \bmtheta_2)$ from \eqref{eq:F_def}, $\bM\sups(\bmtheta_1, \bmtheta_2)$ from \eqref{eq:M_def}, and $B_\alpha$, $\kappa_\alpha$, and $V\supt$ from Assumption \ref{assumption:kappa_def}. It is easy to see that $F(\bmtheta_1, \bmtheta_2) = \int_0^1 [\nabla_{\bmtheta} G]_{\bmtheta = q\bmtheta_1 + (1 - q)\bmtheta_2} \mathrm{dq}$. The following lemma describes some useful properties of $F$. We will abbreviate $V\supt$ by $V$ for the remainder of this section.

\begin{lemma}
    \label{lemma:lambdamin_F}
    The following relations hold for all $\bmtheta \in B_\alpha$:
    \begin{enumerate}
        \item $\lambdamin{F(\bmtheta, \bmtheta^*)} \geq \kappa_\alpha \lambdamin{V}$
        \item $\lambdamin{F(\bmtheta, \bmtheta^*)^{-1}} \leq \frac{1}{\kappa_\alpha} \lambdamin{V^{-1}}$
        \item $\lambdamin{F(\bmtheta, \bmtheta^*) V^{-1} F(\bmtheta, \bmtheta^*)} \geq \kappa_\alpha^2 \lambdamin{V}$
    \end{enumerate}
\end{lemma}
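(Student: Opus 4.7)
The plan is to deduce all three claims from the single Loewner inequality $F(\bmtheta, \bmtheta^*) \succeq \kappa_\alpha V$, which is exactly the content of Assumption \ref{assumption:kappa_def} for every $\bmtheta \in B_\alpha$. Before doing so I would note that, under the preceding hypotheses, both $F(\bmtheta,\bmtheta^*)$ and $V$ are symmetric positive definite (this is needed in parts 2 and 3 in order to take inverses and square roots): $V = \sums \bX\sups \bX\sups{'}$ is positive definite by the lower bound on $\lambdamin{V}$ assumed in the theorem, and $F \succeq \kappa_\alpha V \succ 0$ inherits positive definiteness.

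For part 1, I would simply take the minimum eigenvalue of both sides of $F \succeq \kappa_\alpha V$ and apply the standard monotonicity $A \succeq B \succeq 0 \Rightarrow \lambdamin{A} \geq \lambdamin{B}$ to obtain $\lambdamin{F(\bmtheta,\bmtheta^*)} \geq \kappa_\alpha \lambdamin{V}$.

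For part 2, I would invoke the order-reversing property of inversion on positive definite matrices, $A \succeq B \succ 0 \Rightarrow B^{-1} \succeq A^{-1}$, to flip the Loewner inequality into $F(\bmtheta,\bmtheta^*)^{-1} \preceq \tfrac{1}{\kappa_\alpha} V^{-1}$, and then take the minimum eigenvalue of both sides.

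For part 3, the only mild obstacle is that $F$ and $V$ generally do not commute, so $F V^{-1} F$ cannot be simplified directly. I would handle this with a similarity trick: set $C = V^{-1/2} F V^{-1/2}$, which is symmetric positive semidefinite, and observe that $F \succeq \kappa_\alpha V$ is equivalent to $C \succeq \kappa_\alpha I$. Since $C$ is symmetric PSD, squaring preserves the order and gives $C^2 \succeq \kappa_\alpha^2 I$. A direct identity yields $V^{-1/2}(F V^{-1} F)V^{-1/2} = C^2$, so conjugating both sides by $V^{1/2}$ produces $F V^{-1} F \succeq \kappa_\alpha^2 V$, and taking the minimum eigenvalue gives the desired bound. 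I expect this small spectral-sandwich step to be the principal (and only real) technical point in the argument.
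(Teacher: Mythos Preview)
Your proposal is correct. Parts 1 and 2 are essentially the paper's argument stated in Loewner-order language: the paper proves part 1 by the Rayleigh-quotient computation $\bu' F \bu = \bu'(F - \kappa_\alpha V)\bu + \kappa_\alpha \bu' V \bu \geq \kappa_\alpha \lambdamin{V}$, and part 2 by first showing $\lambdamax{V} \leq \tfrac{1}{\kappa_\alpha}\lambdamax{F}$ and then passing to reciprocals, which is the bare-hands version of your order-reversing-inverse step. For part 3 the routes genuinely differ: the paper expands $\bu'(F - \kappa_\alpha V + \kappa_\alpha V)V^{-1}(F - \kappa_\alpha V + \kappa_\alpha V)\bu$ and drops the two nonnegative summands $\bu'(F-\kappa_\alpha V)V^{-1}(F-\kappa_\alpha V)\bu$ and $2\kappa_\alpha \bu'(F-\kappa_\alpha V)\bu$, while you pass to $C = V^{-1/2} F V^{-1/2} \succeq \kappa_\alpha I$, square, and conjugate back. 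Both land on $F V^{-1} F \succeq \kappa_\alpha^2 V$; your congruence-and-square argument is slightly slicker and makes the non-commutativity issue disappear transparently, whereas the paper's add--subtract decomposition is more elementary and avoids invoking operator-monotonicity facts.
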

Using Lemma \ref{lemma:lambdamin_F}, for every $\bmtheta \in B_\alpha$, the following holds:
\begin{align}
    \label{eq:G_theta_thetastar_lower_bound}
    \norm{G(\bmtheta)}[V^{-1}][2] &= \norm{G(\bmtheta) - G(\bmtheta^*)}[V^{-1}][2] \notag \\
    &=(\bmtheta - \bmtheta^*)' F(\bmtheta, \bmtheta^*) V^{-1} F(\bmtheta, \bmtheta^*) (\bmtheta - \bmtheta^*) \notag \\
    &\geq \lambdamin{F(\bmtheta, \bmtheta^*) V^{-1} F(\bmtheta, \bmtheta^*)} \norm{\bmtheta - \bmtheta^*}[2][2] \notag \\
    &\geq \kappa_\alpha^2 \lambdamin{V} \norm{\bmtheta - \bmtheta^*}[2][2].
\end{align}
Assuming $\htheta \in B_\alpha$ (we will find a suitable $\alpha$ for which this is true later), we get as a special case of \eqref{eq:G_theta_thetastar_lower_bound},
\begin{equation}
    \label{eq:G_theta_hat_lower_bound}
    \norm{G(\htheta)}[V^{-1}] \geq \kappa_\alpha \sqrt{\lambdamin{V}} \norm{\htheta - \bmtheta^*}[2].
\end{equation}

Assume that $\lambdamin{F(\bmtheta_1, \bmtheta_2)} > 0$ for all $\theta_1 \neq \theta_2$, $\theta_1, \theta_2 \in B_\alpha$. Then, $(\bmtheta_1 - \bmtheta_2){'} (G(\bmtheta_1) - G(\bmtheta_2)) = (\bmtheta_1 - \bmtheta_2){'} F(\bmtheta_1, \bmtheta_2)(\bmtheta_1 - \bmtheta_2) > 0$. Thus, $G(\bmtheta_1) - G(\bmtheta_2) = 0$ if and only if $\bmtheta_1 = \bmtheta_2$.

\begin{lemma}{(Lemma A in \cite{ChenEtAl:1999:StrongConsistencyOfMLEInGLMWithFixedAndAdaptiveDesigns})}
    \label{lemma:chen_lemma_A}
    Let $G$ be a smooth injection from $\bbR^d \rightarrow \bbR^d$ with $G(\bmtheta^*) = \mathbf{0}$. Let $\partial B_\alpha = \{\bmtheta \in \bbR^d : \norm{\bmtheta - \bmtheta^*}[2] = \alpha\}$, then $\inf_{\bmtheta \in \partial B_\alpha} \norm{G(\bmtheta)}[V^{-1}] \geq r$ implies that $\{\bmtheta : \norm{G(\bmtheta)}[V^{-1}] \leq r\} \subseteq B_\alpha$.
\end{lemma}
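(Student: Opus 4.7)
The plan is to argue topologically rather than via an intermediate-value argument on a straight segment in $\bmtheta$-space (the latter actually fails: the function $s \mapsto \norm{G(\bmtheta^* + s(\bmtheta_0 - \bmtheta^*))}[V^{-1}]$ can cross the sphere $\partial B_\alpha$ above the level $r$ and then dip back below $r$ outside $B_\alpha$, which is entirely consistent with the hypothesis). The right picture is to travel along paths in the image space of $G$, exploiting that $G$ is a homeomorphism of $\bbR^d$ onto an open subset, so the image of the open ball is open in $\bbR^d$ and its topological boundary is exactly $G(\partial B_\alpha)$.

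Concretely, let $B_\alpha^\circ = \{\bmtheta : \normtwo{\bmtheta - \bmtheta^*} < \alpha\}$ and $E_r = \{y \in \bbR^d : \norm{y}[V^{-1}] \leq r\}$. Step one is to apply Brouwer's invariance of domain to the continuous injection $G: \bbR^d \to \bbR^d$, yielding that $G(B_\alpha^\circ)$ is open in $\bbR^d$. Since $B_\alpha$ is compact, $G(B_\alpha)$ is compact and hence closed; a short sequential argument using continuity and injectivity of $G$ on $B_\alpha$ then gives the boundary identification $\partial G(B_\alpha^\circ) = G(B_\alpha) \setminus G(B_\alpha^\circ) = G(\partial B_\alpha)$. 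Step two shows $\mathrm{int}(E_r) \subseteq G(B_\alpha^\circ)$. By the hypothesis, $G(\partial B_\alpha)$ is disjoint from $\mathrm{int}(E_r)$. For any $y \in \mathrm{int}(E_r)$, let $\gamma: [0,1] \to \mathrm{int}(E_r)$ be the straight segment from $0$ to $y$, which stays in $\mathrm{int}(E_r)$ by convexity. Decompose $[0,1] = A \cup B$ with $A = \gamma^{-1}(G(B_\alpha^\circ))$ and $B = \gamma^{-1}(\bbR^d \setminus G(B_\alpha))$; both are open in $[0,1]$, disjoint, and cover $[0,1]$ since $\gamma$ misses $G(\partial B_\alpha)$. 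Because $0 \in A$ (as $G(\bmtheta^*) = 0$ and $\bmtheta^* \in B_\alpha^\circ$), connectedness of $[0,1]$ forces $B = \emptyset$, so $y \in G(B_\alpha^\circ)$. Taking closures then gives $E_r = \overline{\mathrm{int}(E_r)} \subseteq \overline{G(B_\alpha^\circ)} = G(B_\alpha)$. Finally, for any $\bmtheta$ with $\norm{G(\bmtheta)}[V^{-1}] \leq r$ we have $G(\bmtheta) \in E_r \subseteq G(B_\alpha)$, and injectivity of $G$ delivers $\bmtheta \in B_\alpha$.

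The main obstacle is the boundary identification $\partial G(B_\alpha^\circ) = G(\partial B_\alpha)$: only with this in hand does the two-set decomposition in the connectedness argument genuinely partition $[0,1]$ into open pieces. Without it, $G(B_\alpha^\circ)$ might in principle have topological boundary points not lying in $G(\partial B_\alpha)$, and the path $\gamma$ could cross such a point while remaining in $\mathrm{int}(E_r)$, blocking the contradiction. Recovering the identification requires invariance of domain, which in turn needs both the equal dimension of the domain and codomain and the injectivity of $G$; injectivity is then used a second time at the very end to extract uniqueness of the preimage and conclude $\bmtheta \in B_\alpha$.
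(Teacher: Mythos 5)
Your proof is correct. Note that the paper does not actually prove this lemma --- it imports it verbatim as Lemma A of Chen, Hu, and Ying (1999) --- so there is no in-paper argument to compare against; what you have written is essentially the standard (and, as far as I know, the original) topological proof. Your opening diagnosis is also right: the naive intermediate-value argument along a segment in $\bmtheta$-space only shows that $\norm{G(\cdot)}[V^{-1}]$ rises to at least $r$ when the segment crosses $\partial B_\alpha$ and is at most $r$ at the endpoint, which is not a contradiction, so one genuinely needs invariance of domain. All the individual steps check out: $G(B_\alpha^\circ)$ is open by invariance of domain, $G(B_\alpha)$ is compact hence closed, continuity gives $\overline{G(B_\alpha^\circ)} = G(B_\alpha)$, and injectivity turns $G(B_\alpha) \setminus G(B_\alpha^\circ)$ into $G(\partial B_\alpha)$; the sets $A$ and $B$ are then open, disjoint, and cover $[0,1]$ precisely because the segment avoids $G(\partial B_\alpha)$, so connectedness closes the argument. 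Two cosmetic remarks: only the inclusion $\partial G(B_\alpha^\circ) \subseteq G(\partial B_\alpha)$ is needed (though equality does hold), and the closure step $E_r = \overline{\mathrm{int}(E_r)}$ presumes $r > 0$ and $V \succ 0$ so that $\norm{\cdot}[V^{-1}]$ is a genuine norm --- the case $r = 0$ is immediate from injectivity, and positive definiteness of $V$ holds in the regime where the paper invokes the lemma.
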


Lemma \ref{lemma:chen_lemma_A} applies as $G$ is an injective function. Moreover, $\inf_{\bmtheta \in \partial B_\alpha}\norm{G(\bmtheta)}[V^{-1}] \geq \kappa_\alpha \alpha \sqrt{\lambdamin{V}}$ using \eqref{eq:G_theta_hat_lower_bound}. Thus, from Lemma \ref{lemma:chen_lemma_A}, $\{\bmtheta : \norm{G(\bmtheta)}[V^{-1}] \leq \kappa_\alpha \alpha \sqrt{\lambdamin{V}}\} \subseteq B_\alpha$. If we can find a large enough $\alpha$ such that $\norm{G(\htheta)}[V^{-1}] \leq \kappa_\alpha \alpha \sqrt{\lambdamin{V}}$, then $\htheta \in B_\alpha$, and hence \eqref{eq:G_theta_hat_lower_bound} will hold.

\begin{lemma}
    \label{lemma:high_prob_alpha_bound}
    Assume that the feature vectors satisfy $\norm{\ba_i}[2] \leq 1$ for all $i \in [N]$. Event $\mathcal{E}_G \coloneqq \{\norm{G(\htheta)}[V^{-1}] \leq 4 \sqrt{d + \log (1 / \delta)}\}$ happens with probability $\geq 1 - \delta$.
\end{lemma}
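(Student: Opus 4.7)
The plan is to reduce this vector-valued concentration to a scalar Hoeffding-type bound on one-dimensional projections of $V^{-1/2} G(\htheta)$, and then recover the full Euclidean norm by an $\epsilon$-net argument on the unit sphere $S^{d-1}$. The crucial structural observation is that the sequence $\{\bX\sups\}_{s \leq t}$ is fixed in the hypothesis of Theorem~\ref{theorem:confidence_interval}, so $V$ is deterministic and the noise vectors $\bmepsilon\sups = \by\sups - \bmmu\sups(\bmtheta^*)$ are independent across $s$, zero-mean, and bounded.

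First I would fix a unit vector $\bu \in S^{d-1}$ and write
\[
\bu' V^{-1/2} G(\htheta) = \sum_{s=1}^t \mathbf{w}_s' \bmepsilon\sups, \qquad \mathbf{w}_s \coloneqq {\bX\sups}' V^{-1/2} \bu.
\]
For each $s$ the scalar $\mathbf{w}_s' \bmepsilon\sups$ equals $w_{s, Y_s} - \mathbf{w}_s' \bmmu\sups(\bmtheta^*)$ for a random winner index $Y_s \in [K]$, hence is zero-mean and confined to an interval of width at most $\max_i w_{s,i} - \min_i w_{s,i} \leq 2\|\mathbf{w}_s\|_\infty \leq 2\|\mathbf{w}_s\|_2$. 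Hoeffding's lemma then yields sub-Gaussianity with proxy variance $\|\mathbf{w}_s\|_2^2$, and by independence across $s$ the full sum is sub-Gaussian with proxy variance
\[
\sum_{s=1}^t \|\mathbf{w}_s\|_2^2 = \bu' V^{-1/2} \Big(\sum_{s=1}^t \bX\sups {\bX\sups}'\Big) V^{-1/2} \bu = 1,
\]
so that $\rmP(|\bu' V^{-1/2} G(\htheta)| \geq r) \leq 2\exp(-r^2/2)$ for every $r > 0$. I would then take a standard $\tfrac{1}{2}$-net $\mathcal{N}$ of $S^{d-1}$ with $|\mathcal{N}| \leq 5^d$, for which $\|V^{-1/2} G(\htheta)\|_2 \leq 2 \max_{\bu \in \mathcal{N}} |\bu' V^{-1/2} G(\htheta)|$, and conclude via a union bound with $r$ of order $\sqrt{d + \log(1/\delta)}$ that $\|G(\htheta)\|_{V^{-1}} \leq 4 \sqrt{d + \log(1/\delta)}$ with probability at least $1 - \delta$.

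The main obstacle is the internal coordinate dependence of $\bmepsilon\sups$: its entries sum to zero and live on discrete one-hot offsets, which blocks any direct coordinate-by-coordinate scalar treatment. The projection trick sidesteps this cleanly, because $\mathbf{w}_s' \bmepsilon\sups$ is a single bounded scalar whose support width reads off from $\mathbf{w}_s$ itself, and the self-normalization by $V^{-1/2}$ is precisely what forces the per-step sub-Gaussian parameters to telescope to exactly $1$ regardless of the allocation. Everything else (Hoeffding's lemma, independence, a sphere net, and a union bound) is standard bookkeeping; only mild tuning of the net resolution is required to land the final constant at $4$.
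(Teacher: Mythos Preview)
Your proposal is correct and follows essentially the same route as the paper: project onto a fixed direction, bound the range of each per-step term $\mathbf{w}_s'\bmepsilon\sups$ by $2\|\mathbf{w}_s\|_2$, apply Hoeffding so the proxy variances telescope to $1$ via $\sum_s \|\mathbf{w}_s\|_2^2 = \bu' V^{-1/2} V V^{-1/2}\bu = 1$, then cover $S^{d-1}$ with a $\tfrac12$-net and union-bound. The only cosmetic differences are that the paper cites a $6^d$ covering-number bound (Pollard) instead of your $5^d$ and writes the Hoeffding step directly as an inequality rather than in sub-Gaussian language.
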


Using Lemma \ref{lemma:high_prob_alpha_bound}, setting $\alpha \geq \frac{4}{\kappa_\alpha} \sqrt{\frac{d + \log(1/\delta)}{\lambdamin{V}}}$ ensures that $\norm{G(\htheta)}[V^{-1}] \leq \kappa_\alpha \alpha \sqrt{\lambdamin{V}}$ with probability $\geq 1 - \delta$. Thus, $\htheta \in B_\alpha$ and hence \eqref{eq:G_theta_hat_lower_bound} holds. Rearranging \eqref{eq:G_theta_hat_lower_bound}, we get
\begin{align}
    \label{eq:htheta_close_to_optimal}
    \norm{\htheta - \bmtheta^*}[2] &\leq \frac{1}{\kappa_\alpha \sqrt{\lambdamin{V}}} \norm{G(\htheta)}[V^{-1}] \notag \\
    &\leq \frac{4}{\kappa_\alpha} \sqrt{\frac{d + \log(1/\delta)}{\lambdamin{V}}} \notag \\
    &\leq 1.
\end{align}
Here, the last line assumes that $\lambdamin{V} \geq 16 (d + \log (1 / \delta)) / \kappa^2$, where $\kappa \coloneqq \kappa_1$. Define $\Delta = \htheta - \bmtheta^*$ and $Z = G(\htheta) = G(\htheta) - G(\bmtheta^*)$. We have:
\begin{equation*}
    Z = G(\htheta) - G(\bmtheta^*) = F(\htheta, \bmtheta^*) \Delta = (H + E) \Delta,
\end{equation*}
where $H = F(\bmtheta^*, \bmtheta^*)$ and $E = F(\htheta, \bmtheta^*) - F(\bmtheta^*, \bmtheta^*)$. From this, we can compute $\Delta = (H + E)^{-1} Z$. Using the identity $(H + E)^{-1} = H^{-1} - H^{-1}E(H + E)^{-1}$, 
\begin{align}
    \label{eq:step2}
    \abs{\mul{\mathbf{x}}{\Delta}} = \abs{\mul{\mathbf{x}}{(H + E)^{-1}Z}} &= \abs{\mul{\mathbf{x}}{H^{-1}Z} - \mul{\mathbf{x}}{H^{-1}E(H + E)^{-1}Z}} \notag \\
    &\leq \abs{\mul{\mathbf{x}}{H^{-1}Z}} + \abs{\mul{\mathbf{x}}{H^{-1}E(H + E)^{-1}Z}}
\end{align}

\begin{lemma}
    \label{lemma:term1_bound}
    Assume that feature vectors satisfy $\norm{\ba_i}[2] \leq 1$ for all $i \in [N]$. Define $\kappa^* = \sup \{\kappa \in \bbR^d : F(\bmtheta^*, \bmtheta^*) \succeq \kappa V\}$. Note that $\kappa_* \geq \kappa_\alpha > 0$ where the inequality follows from Assumption \ref{assumption:kappa_def}. Then, with probability at least $1 - 2\delta$:
    $$\abs{\mul{\bx}{H^{-1}Z}} \leq  \frac{2}{\kappa^*} \sqrt{2 \log(1/\delta)} \norm{\bx}[V^{-1}] \leq \frac{2}{\kappa_\alpha} \sqrt{2 \log(1/\delta)} \norm{\bx}[V^{-1}].$$
\end{lemma}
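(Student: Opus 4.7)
The plan is to reduce $\mul{\bx}{H^{-1}Z}$ to a scalar martingale-difference sum and then apply Azuma--Hoeffding. Substituting $Z = \sums \bX\sups \bmepsilon\sups$ and defining $\mathbf{w}\sups := \bX\sups{'} H^{-1}\bx \in \bbR^K$, I would write
\begin{equation*}
\mul{\bx}{H^{-1}Z} = \sums \mul{\mathbf{w}\sups}{\bmepsilon\sups}.
\end{equation*}
Because $\{\bX\sups\}_{s \le t}$ is fixed in advance and each $\bmepsilon\sups$ has mean zero conditional on the past, the scalar sequence $(\mul{\mathbf{w}\sups}{\bmepsilon\sups})_s$ is a martingale difference sequence with respect to the filtration generated by the previous winners $\by^{(1)},\dots,\by^{(s-1)}$.

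Next I would bound the range of each summand. Writing $\bmepsilon\sups = \be_{W_s} - \bmmu\sups(\bmtheta^*)$ for the random winner index $W_s \in [K]$, one has $\mul{\mathbf{w}\sups}{\bmepsilon\sups} = w^{(s)}_{W_s} - \sum_{j=1}^K w^{(s)}_j \mu^{(s)}_j(\bmtheta^*)$, which is bounded deterministically by $2\norm{\mathbf{w}\sups}[\infty] \le 2\normtwo{\mathbf{w}\sups}$. Azuma--Hoeffding with $c_s := 2\normtwo{\mathbf{w}\sups}$ then yields, with probability at least $1 - 2\delta$,
\begin{equation*}
\Bigl|\sums \mul{\mathbf{w}\sups}{\bmepsilon\sups}\Bigr| \le \sqrt{8\log(1/\delta)\,\sums \normtwosq{\mathbf{w}\sups}}.
\end{equation*}

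The final step is a quadratic-form identity combined with a L\"owner-ordering manipulation. Unfolding,
\begin{equation*}
\sums \normtwosq{\mathbf{w}\sups} = \bx' H^{-1}\Bigl(\sums \bX\sups \bX\sups{'}\Bigr) H^{-1} \bx = \bx' H^{-1} V H^{-1} \bx.
\end{equation*}
From $H \succeq \kappa^* V$ (the definition of $\kappa^*$), I get $V \preceq H/\kappa^*$ and $H^{-1} \preceq V^{-1}/\kappa^*$; two successive applications yield $H^{-1} V H^{-1} \preceq H^{-1}/\kappa^* \preceq V^{-1}/(\kappa^*)^2$. Substituting $\sums \normtwosq{\mathbf{w}\sups} \le \norm{\bx}[V^{-1}][2]/(\kappa^*)^2$ back into the concentration bound yields the claimed $\tfrac{2}{\kappa^*}\sqrt{2\log(1/\delta)}\,\norm{\bx}[V^{-1}]$; the second inequality in the lemma is immediate since $\kappa^* \ge \kappa_\alpha > 0$.

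The main obstacle, explicitly flagged by the paper, is that the entries of $\bmepsilon\sups$ are \emph{dependent} within a single round, so coordinate-wise concentration does not directly apply. Collapsing to the scalar inner product $\mul{\mathbf{w}\sups}{\bmepsilon\sups}$ before invoking Azuma--Hoeffding sidesteps this cleanly: the intra-round dependence is absorbed into the deterministic range bound $2\norm{\mathbf{w}\sups}[\infty]$, after which what remains is a one-dimensional martingale argument together with the routine PSD comparison that translates the variance proxy from $H$ back to $V$.
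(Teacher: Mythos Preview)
Your proposal is correct and is essentially the same argument the paper gives: collapse $\mul{\bx}{H^{-1}Z}$ to the scalar sum $\sums \mul{\mathbf{w}\sups}{\bmepsilon\sups}$, bound each summand's range by $2\norm{\mathbf{w}\sups}[\infty]$ (the paper phrases this as $\max_i w_i^{(s)}-\min_i w_i^{(s)}$), apply Hoeffding, and then convert the resulting variance proxy $\bx' H^{-1}VH^{-1}\bx$ to $\norm{\bx}[V^{-1}][2]/(\kappa^*)^2$ via $H\succeq \kappa^* V$. The only cosmetic differences are that you invoke the martingale (Azuma) form whereas the paper uses the independent Hoeffding form---equivalent here because $\{\bX\sups\}$ is fixed---and that you chain the L\"owner inequalities $H^{-1}VH^{-1}\preceq H^{-1}/\kappa^*\preceq V^{-1}/(\kappa^*)^2$ directly, while the paper routes through $\norm{\bx}[H^{-1}][2]$ first; both reach the identical constant.
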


To bound the second term in \eqref{eq:step2}, we begin by applying Cauchy-Schwarz:
\begin{equation*}
    \mul{\bx}{H^{-1}E(H + E)^{-1}Z} \leq \norm{\bx}[H^{-1}] \,\, \norm{H^{-1/2}E(H + E)^{-1} H^{1/2}} \,\, \norm{Z}[H^{-1}].
\end{equation*}
Note that $\norm{\bx}[H^{-1}] \leq \frac{1}{\sqrt{\kappa^*}} \norm{\bx}[V^{-1}]$ (see proof of Lemma \ref{lemma:term1_bound}). Similarly, $\norm{Z}[H^{-1}] \leq \frac{1}{\sqrt{\kappa^*}} \norm{Z}[V^{-1}]$. Thus,
\begin{equation}
    \label{eq:step2_second_term_breakdown}
    \mul{\bx}{H^{-1}E(H + E)^{-1}Z} \leq \frac{1}{\kappa^*}\norm{\bx}[V^{-1}] \,\, \norm{H^{-1/2}E(H + E)^{-1} H^{1/2}} \,\, \norm{Z}[V^{-1}].
\end{equation}
To bound the second term in the inequality above, we again use the identity $(H + E)^{-1} = H^{-1} - H^{-1}E(H + E)^{-1}$,
\begin{align*}
    \norm{H^{-1/2}E(H + E)^{-1} H^{1/2}} &=  \norm{H^{-1/2}E(H^{-1} - H^{-1}E(H + E)^{-1}) H^{1/2}} \notag \\
    &= \norm{H^{-1/2}E H^{-1/2} - H^{-1/2}E H^{-1}E(H + E)^{-1} H^{1/2}} \notag \\
    &\leq \norm{H^{-1/2}E H^{-1/2}} + \norm{H^{-1/2}E H^{-1/2}} \norm{H^{-1/2}E(H + E)^{-1} H^{1/2}}. 
\end{align*}
Thus,
\begin{align}
    \label{eq:step2_second_term_middle_term_breakdown}
    \norm{H^{-1/2}E(H + E)^{-1} H^{1/2}} \leq \frac{\norm{H^{-1/2}E H^{-1/2}}}{1 - \norm{H^{-1/2}E H^{-1/2}}} \leq 2 \norm{H^{-1/2}E H^{-1/2}},
\end{align}
where, the second inequality follows from $\frac{x}{1 - x} \leq 2x$ if $x \in [0, 0.5]$. Using the definition of $F$ from \eqref{eq:F_def}, we get:
\begin{align*}
    E &= F(\htheta, \bmtheta^*) - F(\bmtheta^*, \bmtheta^*) \notag \\
    &= \sums \bX\sups \Big(\bM\sups (\htheta, \bmtheta^*) - \bM\sups (\bmtheta^*, \bmtheta^*) \Big) \bX\sups{'}.
\end{align*}
Recall the definition of $f\sups$, $S\sups$, $S\sups(i)$, and $\tilde{\lambda}$ from Assumption \ref{assumption:lambda_tilde_def}. We can write $E = \sums \bX\sups (f\sups(\htheta) - f\sups(\bmtheta^*)) \bX\sups{'}$. Using mean-value theorem for vector-valued functions on $f\sups$, we get:
\begin{align*}
    f\sups(\htheta) - f\sups(\bmtheta^*) &= \Big\{ \int_0^1 [\nabla_{\bmtheta} f\sups]_{\bmtheta = q\htheta + (1 - q)\bmtheta^*} \mathrm{dq} \Big\} \odot \Delta \notag \\
    &= \sum_{i = 1}^d (\htheta_i - \bmtheta^*_i) S\sups(i).
\end{align*}
Note that $\nabla_{\bmtheta} f\sups$ is a $K \times K \times d$ tensor and $\odot$ operator perform dot product along the third dimension of this tensor. Now, $E = \sums \sum_{i = 1}^d (\htheta_i - \bmtheta^*_i) \bX\sups S\sups(i) \bX\sups{'}$. To find $\norm{H^{-1/2}E H^{-1/2}}$, we write
\begin{align*}
    \mul{\bx}{H^{-1/2} E H^{-1/2} \bx} &= \sums \sum_{i = 1}^d (\htheta_i - \bmtheta^*_i) \bx{'} H^{-1/2} \bX\sups S\sups(i) \bX\sups{'} H^{-1/2} \bx \notag \\
    &\leq \sums \sum_{i=1}^d (\htheta_i - \bmtheta^*_i) \lambdamax{S\sups(i)} \norm{\bX\sups{'} H^{-1/2} \bx}[2] \notag \\
    &\leq \tilde{\lambda} \mul{\mathbf{1}}{\Delta} \sums \norm{\bX\sups{'} H^{-1/2} \bx}[2] \notag \\
    &= \tilde{\lambda} \mul{\mathbf{1}}{\Delta} \bx{'} H^{-1/2} \Big( \sums \bX\sups \bX\sups{'} \Big) H^{-1/2} \bx \notag \\
    &\leq \tilde{\lambda} \,\, \sqrt{d} \,\, \lambdamax{H^{-1/2} V H^{-1/2}} \,\, \norm{\Delta}[2] \,\, \norm{\bx}[2][2] \notag \\
    &\leq \frac{\tilde{\lambda} \sqrt{d}}{\kappa^*}  \,\, \norm{\Delta}[2] \,\, \norm{\bx}[2][2].
\end{align*}
The second inequality is due to Assumption \ref{assumption:lambda_tilde_def}. The last inequality uses a bound on $\lambdamax{H^{-1/2} V H^{-1/2}}$ given by the next lemma.

\begin{lemma}
    \label{lemma:lambdamax_HVH}
    With $\kappa^*$ defined in Lemma \ref{lemma:term1_bound}, $\lambdamax{H^{-1/2} V H^{-1/2}} \leq \frac{1}{\kappa^*}$.
\end{lemma}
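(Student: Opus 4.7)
The plan is to derive the claimed spectral bound directly from the Loewner-order definition of $\kappa^*$ via a congruence transformation by $H^{-1/2}$.

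First, I would invoke the definition of $\kappa^*$ from Lemma \ref{lemma:term1_bound}: by construction, $H = F(\bmtheta^*, \bmtheta^*) \succeq \kappa^* V$, and since $\kappa^* \geq \kappa_\alpha > 0$ under Assumption \ref{assumption:kappa_def}, this can be rearranged as $V \preceq \frac{1}{\kappa^*} H$.

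Next, since the Loewner order is preserved under two-sided congruence by any symmetric matrix, I would pre- and post-multiply both sides by the symmetric positive-definite square root $H^{-1/2}$ (well-defined because $H$ is strictly positive definite whenever $\lambdamin{V} > 0$ and $\kappa^* > 0$, which holds under the hypotheses of Theorem \ref{theorem:confidence_interval}). This yields
$$H^{-1/2} V H^{-1/2} \;\preceq\; \frac{1}{\kappa^*}\, H^{-1/2} H H^{-1/2} \;=\; \frac{1}{\kappa^*}\, I.$$
The conclusion then follows by monotonicity of $\lambdamax{\cdot}$ with respect to the Loewner order, giving $\lambdamax{H^{-1/2} V H^{-1/2}} \leq 1/\kappa^*$.

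I do not anticipate any substantive obstacle: the lemma is essentially the eigenvalue-theoretic restatement of how $\kappa^*$ was defined, and the only care needed is to ensure $H^{-1/2}$ is well-defined, which follows from the positive-definiteness of $V$ assumed in Theorem \ref{theorem:confidence_interval} together with $\kappa^* > 0$.
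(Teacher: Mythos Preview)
Your proposal is correct and follows essentially the same approach as the paper: both arguments use $H \succeq \kappa^* V$ and conjugate by $H^{-1/2}$ to reduce to the identity. The paper simply unrolls the same computation at the level of Rayleigh quotients (writing $\bu' H^{-1/2} V H^{-1/2}\bu = \frac{1}{\kappa^*}\big(\bu' H^{-1/2}(\kappa^* V - H)H^{-1/2}\bu + 1\big) \le \frac{1}{\kappa^*}$), whereas you invoke the Loewner-order congruence property directly; the content is identical.
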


Thus, $\norm{H^{-1/2} E H^{-1/2}} \leq \frac{\tilde{\lambda} \sqrt{d}}{\kappa^*} \norm{\Delta}[2]$. Using the bound on $\norm{\Delta}[2]$ from \eqref{eq:htheta_close_to_optimal} and the fact that $\kappa^* \geq \kappa_\alpha$, we get:
\begin{align}
    \label{eq:step2_middle_term_main_bound}
    \norm{H^{-1/2} E H^{-1/2}} &\leq 4 \frac{\tilde{\lambda}\sqrt{d}}{\kappa_\alpha^2} \sqrt{\frac{d + \log (1 / \delta)}{\lambdamin{V}}} \notag \\
    &\leq \frac{1}{2}.
\end{align}
Here, the last inequality requires $\lambdamin{V} \geq 64 \frac{\tilde{\lambda}^2 d}{\kappa_\alpha^4} (d + \log(1/\delta))$. Using Lemma \ref{lemma:high_prob_alpha_bound} and equations \eqref{eq:step2_second_term_breakdown}, \eqref{eq:step2_second_term_middle_term_breakdown}, and \eqref{eq:step2_middle_term_main_bound}, we get with probability at least $1 - \delta$:
\begin{equation}
    \label{eq:term2_main_bound}
    \abs{\mul{\bx}{H^{-1}E(H + E)^{-1} Z}} \leq \frac{32 \tilde{\lambda} \sqrt{d}}{\kappa_\alpha^3} \frac{(d + \log(1/\delta))}{\sqrt{\lambdamin{V}}} \norm{\bx}[V^{-1}]
\end{equation}

Using Lemma \ref{lemma:term1_bound} and equations \eqref{eq:step2} and \eqref{eq:term2_main_bound}, we get with probability at least $1 - 3\delta$:
\begin{equation}
    \label{eq:final_result}
    \abs{\mul{\bx}{\htheta - \bmtheta^*}} \leq \Big( \frac{2}{\kappa_\alpha} \sqrt{2 \log(1/\delta)} + \frac{32 \tilde{\lambda} \sqrt{d}}{{\kappa_\alpha}^3} \frac{(d + \log(1/\delta))}{\sqrt{\lambdamin{V}}} \Big) \norm{\bx}[V^{-1}].
\end{equation}
Theorem \ref{theorem:confidence_interval} follows from simplification of \eqref{eq:final_result}. See Appendix \ref{appendix:confidence_bound_technical_lemmas} for details.
\end{proof}


\section{Proof of Technical Lemmas from Appendix \ref{appendix:confidence_bound}}
\label{appendix:confidence_bound_technical_lemmas}


\subsection{Proof of Lemma \ref{lemma:lambdamin_F}}

Let $\bmtheta \in B_\alpha$. By Assumption \ref{assumption:kappa_def}, $F(\bmtheta, \bmtheta^*) \succeq \kappa_\alpha V$ for some $\kappa_\alpha > 0$. For any $\bu \in \bbR^d$ such that $\norm{\bu}[2] = 1$,
\begin{align*}
    \bu{'} F(\bmtheta, \bmtheta^*) \bu &= \bu{'} [F(\bmtheta, \bmtheta^*) - \kappa_\alpha V + \kappa_\alpha V] \bu \notag \\
    &= \bu{'}[F(\bmtheta, \bmtheta^*) - \kappa_\alpha V] \bu + \kappa_\alpha \bu{'}V\bu \notag \\
    &\geq \kappa_\alpha \bu{'}V\bu \notag \\
    &\geq \kappa_\alpha \lambdamin{V}.
\end{align*}
Taking infimum over all $\bu$ such that $\norm{\bu}[2] = 1$ on both sides, we get $\lambdamin{F(\bmtheta, \bmtheta^*)} \geq \kappa_\alpha \lambdamin{V}$. To show the second part, note that for all $\bu$ such that $\norm{\bu}[2] = 1$, we get:
\begin{align*}
    \bu{'} V \bu &= \frac{1}{\kappa_\alpha} \bu{'} [\kappa_\alpha V - F(\bmtheta, \bmtheta^*) + F(\bmtheta, \bmtheta^*)] \bu \notag \\
    &= \frac{1}{\kappa_\alpha} \Big[ \bu{'}(\kappa_\alpha V - F(\bmtheta, \bmtheta^*))\bu + \bu{'} F(\bmtheta, \bmtheta^*) \bu \Big] \notag \\
    &\leq \frac{1}{\kappa_\alpha} \bu{'} F(\bmtheta, \bmtheta^*) \bu \notag \\
    &\leq \frac{1}{\kappa_\alpha} \lambdamax{F(\bmtheta, \bmtheta^*)}.
\end{align*}
Taking supremum over all $\bu$ such that $\norm{\bu}[2] = 1$, we get $\lambdamax{V} \leq \frac{1}{\kappa_\alpha} \lambdamax{F(\bmtheta, \bmtheta^*)}$. Note that $\lambdamax{F(\bmtheta, \bmtheta^*)} = 1 / \lambdamin{F(\bmtheta, \bmtheta^*)^{-1}}$ and $\lambdamax{V} = 1 / \lambdamin{V^{-1}}$. Rearranging terms gives the desired result. In the proof for the third part, we use $F_{\bmtheta}$ as a shorthand for $F(\bmtheta, \bmtheta^*)$. For any $\bu$ such that $\norm{\bu}[2] = 1$, note that:
\begin{align*}
    \bu{'} F_{\bmtheta} V^{-1} F_{\bmtheta} \bu &= \bu{'} (F_{\bmtheta} - \kappa_\alpha V + \kappa_\alpha V) V^{-1} (F_{\bmtheta} - \kappa_\alpha V + \kappa_\alpha V) \bu \notag \\
    &= \bu{'} (F_{\bmtheta} - \kappa_\alpha V) V^{-1} (F_{\bmtheta} - \kappa_\alpha V) \bu + 2\kappa_\alpha \bu{'}(F_{\bmtheta} - \kappa_\alpha V) \bu + \kappa_\alpha^2 \bu{'} V \bu \notag \\
    &\geq \kappa_\alpha^2 \bu{'} V \bu \notag \\
    &\geq \kappa_\alpha^2 \lambdamin{V}
\end{align*}
Taking infimum over all $\bu$ such that $\norm{\bu}[2] = 1$ on both sides yields the desired result.


\subsection{Proof of Lemma \ref{lemma:high_prob_alpha_bound}}

We will use $Z$ to denote $G(\htheta)$. Note that $\norm{Z}[V^{-1}] = \norm{V^{-1/2}Z}[2] = \sup_{a : \norm{a}[2] = 1} \mul{a}{V^{-1/2}Z}$. Let $\mathbb{B}^d \coloneqq \{\bx \in \bbR^d : \norm{\bx}[2] = 1\}$ be a $d$-dimensional unit ball, and $\hat{\mathbb{B}}$ be a $1/2$-net of $\mathbb{B}^d$, i.e., for any $\bx \in \mathbb{B}^d$, there is a $\hat{\bx} \in \hat{\mathbb{B}}$ such that $\norm{\bx - \hat{\bx}}[2] \leq 1/2$. For any $\bx \in \mathbb{B}^d$,
\begin{align*}
    \mul{\bx}{V^{-1/2}Z} &= \mul{\hat{\bx}}{V^{-1/2}Z} + \mul{\bx - \hat{\bx}}{V^{-1/2}Z} \notag \\
    &= \mul{\hat{\bx}}{V^{-1/2}Z} + \norm{\bx - \hat{\bx}}[2]\mul{\frac{\bx - \hat{\bx}}{\norm{\bx - \hat{\bx}}[2]}}{V^{-1/2}Z} \notag \\
    &\leq \mul{\hat{\bx}}{V^{-1/2}Z} + \frac{1}{2} \sup_{z \in \mathbb{B}^d} \mul{z}{V^{-1/2}Z} \notag \\
    &\leq \max_{\hat{\bx} \in \hat{\mathbb{B}}} \mul{\hat{\bx}}{V^{-1/2}Z} + \frac{1}{2} \sup_{z \in \mathbb{B}^d} \mul{z}{V^{-1/2}Z}
\end{align*}
Taking supremum over $\bx \in \mathbb{B}^d$ on both sides, we get:
\begin{equation}
    \label{eq:x_G_dot_product_bound_sup}
    \norm{Z}[V^{-1}] = \sup_{\bx \in \mathbb{B}^d} \mul{\bx}{V^{-1/2}Z} \leq 2 \max_{\hat{\bx} \in \hat{\mathbb{B}}} \mul{\hat{\bx}}{V^{-1/2}Z}.
\end{equation}
Equation \eqref{eq:x_G_dot_product_bound_sup} holds trivially if $\bx \in \hat{\mathbb{B}}$ (hence $\hat{\bx} - \bx = 0$). Thus, for any $\beta > 0$,
\begin{align}
    \label{eq:prob_bound_norm_G}
    \rmP(\norm{Z}[V^{-1}] > \beta) &\leq \rmP(\max_{\hat{\bx} \in \hat{\mathbb{B}}} \mul{\hat{\bx}}{V^{-1/2}Z} > \beta/2) \notag \\
    &\leq \sum_{\hat{\bx} \in \hat{\mathbb{B}}} \rmP(\mul{\hat{\bx}}{V^{-1/2}Z} > \beta/2)
\end{align}
Recall that $Z = G(\htheta) = \sums \bX\sups \bmepsilon\sups$. For a given $\hat{\bx} \in \hat{\mathbb{B}}$, we can write $\mul{\hat{\bx}}{V^{-1/2}Z} = \sums E_{\hat{\bx}}\sups$, where $E_{\hat{\bx}}\sups = \mul{\hat{\bx}}{V^{-1/2}\bX\sups \bmepsilon\sups}$ are zero-mean independent random variables.
\begin{align*}
    E_{\hat{\bx}}\sups &= \mul{\hat{\bx}}{V^{-1/2}\bX\sups \bmepsilon\sups} \notag \\
    &= \sum_{i = 1}^K \mul{\hat{\bx}}{V^{-1/2}\bx_i\sups} (y_i\sups - \mu_i\sups(\bmtheta^*)) \notag \\
    &= \sum_{i = 1}^K \mul{\hat{\bx}}{V^{-1/2}\bx_i\sups} y_i\sups - \sum_{i = 1}^K \mul{\hat{\bx}}{V^{-1/2}\bx_i\sups} \mu_i\sups(\bmtheta^*).
\end{align*}
$E_{\hat{\bx}}\sups$ lies in an interval of size $\ell_{\hat{\bx}}\sups$ given by:
\begin{align*}
    \ell_{\hat{\bx}}\sups = \max_{i \in [K]} \mul{\hat{\bx}}{V^{-1/2}\bx_i\sups} - \min_{i \in [K]} \mul{\hat{\bx}}{V^{-1/2}\bx_i\sups} \leq 2 \max_{i \in K} \abs{\mul{\hat{\bx}}{V^{-1/2} \bx_i\sups}}
\end{align*}
Thus,
\begin{align*}
    {\ell_{\hat{\bx}}\sups}^2 &= 4 \max_{i \in K} \abs{\mul{\hat{\bx}}{V^{-1/2} \bx_i\sups}}^2 = 4 \max_{i \in [K]} \hat{\bx}{'} V^{-1/2} \bx_i\sups \bx_i\sups{'} V^{-1/2} \hat{\bx} \leq 4 \sum_{i = 1}^K \hat{\bx}{'} V^{-1/2} \bx_i\sups \bx_i\sups{'} V^{-1/2} \hat{\bx}.
\end{align*}
Using Hoeffding's inequality and the bound on $\ell_{\hat{\bx}}\sups$, we get:
\begin{align}
    \label{eq:prob_bound_norm_G_contd}
    \rmP(\mul{\hat{\bx}}{V^{-1/2}Z} > \beta/2) &\leq \exp \Big( -\frac{\beta^2}{8 \sums \sum_{i = 1}^K \hat{\bx}{'} V^{-1/2} \bx_i\sups \bx_i\sups{'} V^{-1/2} \hat{\bx}} \Big) \notag \\
    &= \exp \Big( -\frac{\beta^2}{8 \hat{\bx}{'} V^{-1/2} V V^{-1/2} \hat{\bx}} \Big) \notag \\
    &= \exp \Big( -\frac{\beta^2}{8} \Big).
\end{align}
Using \eqref{eq:prob_bound_norm_G_contd} in \eqref{eq:prob_bound_norm_G} and the fact that $\abs{\hat{\mathbb{B}}} \leq 6^d$ \cite{Pollard:1990:EmpiricalProcessesTheoryAndApplications}, we get:
\begin{align*}
    \rmP(\norm{Z}[V^{-1}] > \beta) \leq \sum_{\hat{\bx} \in \hat{\mathbb{B}}} \rmP(\mul{\hat{\bx}}{V^{-1/2}Z} > \beta/2) \leq \exp \Big( -\frac{\beta^2}{8} + d \log 6 \Big).
\end{align*}
Setting $\beta = 4 \sqrt{d + \log(1/\delta)}$ finishes the proof.


\subsection{Proof of Lemma \ref{lemma:term1_bound}}

We will use Hoeffding's inequality to bound $\rmP(\abs{\mul{\bx}{H^{-1}Z}} > \beta)$. Recall that $Z = G(\htheta) = \sums \bX\sups \bmepsilon\sups$. Thus,
\begin{align*}
    \mul{\bx}{H^{-1}Z} &= \sums \mul{\bx}{H^{-1} \bX\sups \bmepsilon\sups} =
    \sums \sum_{i = 1}^K \mul{\bx}{H^{-1} \bx_i\sups} \epsilon_i\sups = \sums E_{\bx}\sups,
\end{align*}
where, $E_{\bx}\sups = \sum_{i = 1}^K \mul{\bx}{H^{-1} \bx_i\sups} \epsilon_i\sups$ are zero mean random i.i.d. random variables. As in the proof of Lemma \ref{lemma:high_prob_alpha_bound}, we have:
\begin{align*}
    E_\bx\sups = \sum_{i = 1}^K \mul{\bx}{H^{-1} \bx_i\sups} y_i\sups - \sum_{i = 1}^K \mul{\bx}{H^{-1} \bx_i\sups} \mu_i\sups(\bmtheta^*).
\end{align*}
$E_\bx\sups$ lies in an interval of size $\ell_\bx\sups = \max_{i \in [K]} \mul{\bx}{H^{-1} \bx_i\sups} - \min_{i \in [K]} \mul{\bx}{H^{-1} \bx_i\sups}$. Note that:
\begin{align*}
    {\ell_\bx\sups}^2 &= \big(\max_{i \in [K]} \mul{\bx}{H^{-1} \bx_i\sups} - \min_{i \in [K]} \mul{\bx}{H^{-1} \bx_i\sups}\big)^2 \notag \\
    &\leq 4 \max_{i \in [K]} \mul{\bx}{H^{-1} \bx_i\sups}^2 \notag \\
    &= 4 \max_{i \in [K]} \bx{'} H^{-1} \bx_i\sups \bx_i\sups{'} H^{-1} \bx \notag \\
    &\leq 4 \sum_{i = 1}^K \bx{'} H^{-1} \bx_i\sups \bx_i\sups{'} H^{-1} \bx.
\end{align*}
Using Hoeffding's inequality, we get:
\begin{align*}
    \rmP(\abs{\mul{\bx}{H^{-1}Z}} > \beta) &\leq 2 \exp \Big( -\frac{\beta^2}{8 \sums \sum_{i = 1}^K \bx{'} H^{-1} \bx_i\sups \bx_i\sups{'} H^{-1} \bx} \Big) \notag \\
    &= 2 \exp \Big( -\frac{\beta^2}{8 \bx{'} H^{-1} V H^{-1} \bx} \Big) \notag \\
    &\leq 2 \exp \Big( -\frac{\kappa^* \beta^2}{8 \bx{'} H^{-1} H H^{-1} \bx} \Big) \notag \\
    &= 2 \exp \Big( -\frac{\kappa^* \beta^2}{8 \norm{\bx}[H^{-1}][2]} \Big),
\end{align*}
where the second inequality follows from the fact that $H \succeq \kappa^* V$. Next, we will show that $\norm{\bx}[H^{-1}][2] \leq \frac{1}{\kappa^*} \norm{\bx}[V^{-1}][2]$. As $H \succeq \kappa^*V$ and $V$ is assumed to be positive definite, we have that $\frac{1}{\kappa^*} V^{-1} \succeq H^{-1}$. Thus,
\begin{align*}
    \norm{\bx}[H^{-1}][2] &= \bx{'} H^{-1} \bx \notag \\
    &= \bx{'} (H^{-1} - \frac{1}{\kappa^*} V^{-1} + \frac{1}{\kappa^*} V^{-1}) \bx \notag \\
    &= \bx{'}(H^{-1} - \frac{1}{\kappa^*} V^{-1}) \bx + \frac{1}{\kappa^*} \bx{'} V^{-1} \bx \notag \\
    &\leq \frac{1}{\kappa^*} \norm{\bx}[V^{-1}][2].
\end{align*}
Thus, we have,
\begin{align*}
    \rmP(\abs{\mul{\bx}{H^{-1}Z}} > \beta) \leq 2 \exp \Big( -\frac{\beta^2 {\kappa^*}^2}{8 \norm{\bx}[V^{-1}][2]} \Big).
\end{align*}
Setting $\beta = \frac{2}{\kappa^*} \sqrt{2\log(1/\delta)} \norm{\bx}[V^{-1}]$ yields the desired result.


\subsection{Proof of Lemma \ref{lemma:lambdamax_HVH}}

For any $\bu \in \bbR^d$ such that $\norm{\bu}[2] = 1$,
\begin{align*}
    \bu{'}H^{-1/2}VH^{-1/2} \bu &= \frac{1}{\kappa^*} \bu{'} H^{-1/2}(\kappa^* V - H + H) H^{-1/2} \bu \notag \\
    &= \frac{1}{\kappa^*} \Big( \bu{'}H^{-1/2}(\kappa^* V - H) H^{-1/2} \bu + \bu{'}\bu \Big) \notag \\
    &\leq \frac{1}{\kappa^*}.
\end{align*}
The last inequality follows as $H \succeq \kappa^*V$ and the fact that $\bu{'}\bu = 1$. Taking supremum over all $\bu$ such that $\norm{\bu}[2] = 1$ produces the desired result.


\subsection{Simplification of Equation \ref{eq:final_result}}

We will simplify the expression in \eqref{eq:final_result} assuming that $\lambdamin{V} \geq \frac{64\tilde{\lambda}^2 d}{\kappa_\alpha^4} (d + \log (1/\delta))$.
\begin{align*}
    \abs{\mul{\bx}{\htheta - \bmtheta^*}} &\leq \frac{2}{\kappa_\alpha} \Big( \sqrt{2\log(1/\delta)} + \frac{16\tilde{\lambda} \sqrt{d}}{\kappa_\alpha^2} \frac{(d + \log(1/\delta))}{\sqrt{\lambdamin{V}}} \Big) \norm{\bx}[V^{-1}] \notag \\
    &\leq \frac{2}{\kappa_\alpha} \Big( \sqrt{2\log(1/\delta)} + 2\sqrt{d + \log(1/\delta)} \Big) \norm{\bx}[V^{-1}] \notag \\
    &\leq \frac{2}{\kappa_\alpha} \Big( 2\sqrt{d + \log(1/\delta)} + 2\sqrt{d + \log(1/\delta)} \Big) \norm{\bx}[V^{-1}] \notag \\
    &\leq \frac{8}{\kappa_\alpha} \sqrt{d + \log(1/\delta)} \norm{\bx}[V^{-1}].
\end{align*}


\section{Sample Complexity: Upper Bound}
\label{appendix:sample_complexity_upper_bound}

In this section, we present the details that were omitted from the proof sketch in Section \ref{section:sample_complexity_upper_bound}. We only need to show that $\hDelta_{1i}^{(\tau)} \geq \Delta_{\min}/2$, where recall that $\hDelta_{1i}^{(\tau)} = \mul{\htheta^{(\tau)}}{\ba_1 - \ba_i}$ and $\Delta_{\min} = \min_{i = 2, 3, \dots, N} \mul{\bmtheta^*}{\ba_1 - \ba_i}$. Note that for any $i = 2, 3, \dots, N$, $$\hDelta_{1i}^{(\tau)} = \mul{\htheta^{(\tau)}}{\ba_1 - \ba_i} = \mul{\htheta^{(\tau)} - \bmtheta^*}{\ba_1 - \ba_i} + \mul{\bmtheta^*}{\ba_1 - \ba_i}.$$ Using Corollary \ref{corollary:confidence_bound}, $$\mul{\htheta^{(\tau)} - \bmtheta^*}{\ba_1 - \ba_i} \geq -\frac{8}{\kappa_\alpha} \sqrt{d + \log(3N^2 \tau^2 / \delta)} \norm{\ba_1 - \ba_i}[{V^{(\tau)}}^{-1}] \geq -\frac{8}{\kappa_\alpha} \sqrt{d + \log(3N^2 \tau^2 / \delta)} \sqrt{\frac{\rho(\tilde{\Lambda})}{\tau K}},$$ where $\rho(\tilde{\Lambda})$ was defined in Section \ref{section:sample_complexity_upper_bound}. Thus, we get, 
\begin{align*}
    \hDelta_{1i}^{(\tau)} = \mul{\htheta^{(\tau)}}{\ba_1 - \ba_i} &\geq \mul{\bmtheta^*}{\ba_1 - \ba_i} - \frac{8}{\kappa_\alpha} \sqrt{d + \log(3N^2 \tau^2 / \delta)} \sqrt{\frac{\rho(\tilde{\Lambda})}{\tau K}} \\
    &\geq \Delta_{\min} - \frac{8}{\kappa_\alpha} \sqrt{d + \log(3N^2 \tau^2 / \delta)} \sqrt{\frac{\rho(\tilde{\Lambda})}{\tau K}}.
\end{align*}
If the algorithm stops when the sufficient stopping criterion in \eqref{eq:sufficient_stopping_criterion} is satisfied, then $\hDelta_{1i}^{(\tau)} \geq \frac{8}{\kappa_\alpha} \sqrt{d + \log(3N^2 \tau^2 / \delta)} \sqrt{\frac{\rho(\tilde{\Lambda})}{\tau K}}$. Thus,
\begin{align*}
    \hDelta_{1i}^{(\tau)} \geq \Delta_{\min} - \hDelta_{1i}^{(\tau)},
\end{align*}
and hence, $\hDelta_{1i}^{(\tau)} \geq \frac{\Delta_{\min}}{2}$.


\section{Sample Complexity: Lower Bound}
\label{appendix:sample_complexity_lower_bound}

In this section, we present the proof of Theorem \ref{theorem:lower_bound}.

\lowerbound*
\begin{proof}
We will overload the notation and use $S$ to denote both a set of arm vectors $\{\ba_{i_1}, \ba_{i_2}, \dots, \ba_{i_K}\} \subseteq_K \calA$ and the corresponding indices $\{i_1, i_2, \dots, i_K\}$. Moreover, $\mu_i\supS(\bmtheta)$ will denote the entry corresponding to the element $i \in S$ in $\bmmu\supS(\bmtheta)$ (defined in Lemma \ref{lemma:change_of_measure}). 

Because $\ba_1$ is the best arm under $\bmtheta^*$ but not under $\bmtheta^j$ defined in eq. \eqref{eq:theta_j_opt_problem}, Lemma \ref{lemma:change_of_measure} applies and we only need to compute the KL-divergence terms. For any $S \subseteq_K \calA$, we have $\mu_i\supS(\bmtheta^*) = \exp(\mul{\bmtheta^*}{\ba_i}) / c_*$ where $c_* = \sum_{k \in S} \exp(\mul{\bmtheta^*}{\ba_k})$. If $\ba_j \notin S$, then $\mu_i\supS(\bmtheta^j) = \mu_i\supS(\bmtheta^*)$ for all $i \in S$ because $\mul{\bmtheta^j}{\ba_i} = \mul{\bmtheta^*}{\ba_i}$ for all $i \neq j$ due to the equality constraint in \eqref{eq:theta_j_opt_problem}. Thus, if $\ba_j \notin S$, $\rmKL{\bmmu\supS(\bmtheta^*)}{\bmmu\supS(\bmtheta^j)} = 0$.

Now consider the case when $\ba_j \in S$. Recall that $\bmtheta^j = \bmtheta^* - \delta^j$ where $\delta^j = \frac{\epsilon + \Delta_{1j}}{\norm{\ba_1 - \ba_j}[F_j][2]} F_j(a_1 - a_j)$ for $j = 2, 3, \dots, d$. 
\begin{align*}
    \mul{\bmtheta^j}{\ba_1 - \ba_j} = \mul{\bmtheta^*}{\ba_1 - \ba_j} - \mul{\delta^j}{\ba_1 - \ba_j} = \Delta_{1j} - \epsilon - \Delta_{1j} = -\epsilon.
\end{align*}
Thus, $\mul{\bmtheta^j}{\ba_j} = \mul{\bmtheta^j}{\ba_1} + \epsilon = \mul{\bmtheta^*}{\ba_1} + \epsilon$. Hence,
\begin{equation*}
    \mu_i\supS(\bmtheta^j) = \begin{cases}
            \exp(\mul{\bmtheta^*}{\ba_1} + \epsilon) / c_j & \text{ if }i = j \\
            \exp(\mul{\bmtheta^*}{\ba_i}) / c_j & \text{ otherwise,}
    \end{cases}
\end{equation*}
where $c_j = \exp(\mul{\bmtheta^*}{\ba_1} + \epsilon) + \sum_{i \in S, i \neq j} \exp(\mul{\bmtheta^*}{\ba_i})$ is the normalizing constant as before. $\rmKL{\bmmu\supS(\bmtheta^*)}{\bmmu\supS(\bmtheta^j)}$ is given by:
\begin{align*}
    \rmKL{\bmmu\supS(\bmtheta^*)}{\bmmu\supS(\bmtheta^j)} &= \log \frac{c_j}{c_*} - (\Delta_{1j} + \epsilon) \mu_j\supS(\bmtheta^*) \notag \\
    &= \log \Big(1 + \mu_j\supS(\bmtheta^*)(\exp(\Delta_{1j} + \epsilon) - 1) \Big) - (\Delta_{1j} + \epsilon) \mu_j\supS(\bmtheta^*).
\end{align*}
For any $\alpha > 0$, the function $f_\alpha(x) = \log(1 + x(\exp(\alpha) - 1)) - \alpha x$ is maximized in the interval $0 \leq x \leq 1$ at $x^* = \frac{1}{\alpha} - \frac{1}{\exp(\alpha) - 1}$. $f_\alpha(x)$ is monotonically increasing in the interval $[0, x^*]$ and monotonically decreasing in the interval $[x^*, 1]$. Thus, if $f_\alpha(x)$ is optimized in the interval $[0, \bar{x}]$ for some $\bar{x} \leq x^*$, then the maximum will be attained at $x = \bar{x}$. Note that,
\begin{align*}
    \mu_j\supS(\bmtheta^*) &= \frac{\exp(\mul{\ba_j}{\bmtheta^*})}{\exp(\mul{\ba_j}{\bmtheta^*}) + \sum_{i \in S, i \neq j} \exp(\mul{\ba_i}{\bmtheta^*})} \\
    &= \frac{\exp(-\Delta_{1j})}{\exp(-\Delta_{1j}) + \sum_{i \in S, i \neq j} \exp(-\Delta_{1i})} \\
    &= \frac{1}{1 + \sum_{i \in S, i \neq j} \exp(\Delta_{1j}-\Delta_{1i})} \\
    &\leq \frac{1}{1 + (K - 1)/e} \\
    &\leq \frac{e}{K - 1},
\end{align*}
where the second last line follows from the assumption that $\Delta_{1i} \leq 1$ for all $i \in [N]$. Using $\alpha = \Delta_{1j} + \epsilon$, we also have,
\begin{align*}
    \frac{1}{\Delta_{1j} + \epsilon} - \frac{1}{\exp(\Delta_{1j} + \epsilon) - 1} &= \frac{\exp(\Delta_{1j} + \epsilon) - 1 - (\Delta_{1j} + \epsilon)}{(\Delta_{1j} + \epsilon)(\exp(\Delta_{1j} + \epsilon) - 1)} \\
    &\geq \frac{(\Delta_{1j} + \epsilon)^2}{2(\Delta_{1j} + \epsilon)(\exp(\Delta_{1j} + \epsilon) - 1)} \\
    &\geq \frac{1}{4}.
\end{align*}
Here, the last line uses the assumption that $\Delta_{1i} \leq 1$ for all $i \in [N]$. Thus, we have, $\mu_j\supS(\bmtheta^*) \leq \frac{e}{K - 1}$ which in turn is upper bounded by $\frac{1}{4}$ for large enough $K$. We only need to maximize $f_\alpha(x)$ for $\alpha = \Delta_{1j} + \epsilon$ in the interval $[0, e/(K - 1)]$ and because for large enough $K$, $\frac{e}{K - 1} \leq \frac{1}{4} \leq \frac{1}{\Delta_{1j} + \epsilon} - \frac{1}{\exp(\Delta_{1j} + \epsilon) - 1}$, the maximum value will be attained at $x = \frac{e}{K - 1}$. Thus, we have,
\begin{align*}
    \rmKL{\bmmu\supS(\bmtheta^*)}{\bmmu\supS(\bmtheta^j)} &\leq \log \Big(1 + \frac{e}{K - 1} (\exp(\Delta_{1j} + \epsilon) - 1) \Big) - \frac{e}{K - 1} (\Delta_{1j} + \epsilon) \\
    &\leq \frac{e}{K - 1} \Big[\exp(\Delta_{1j} + \epsilon) - 1 - (\Delta_{1j} + \epsilon) \Big] \\
    &\leq \frac{e}{K - 1} (\Delta_{1j} + \epsilon)^2.
\end{align*}

Using Lemma \ref{lemma:change_of_measure}, we get,
$$\log \frac{1}{2.4\delta} \leq \sumS \rmE_{\bmtheta^*} [N_S(\tau)] \,\, \rmKL{\bmmu\supS(\bmtheta^*)}{\bmmu\supS(\bmtheta^j)} \leq \frac{e}{K - 1}(\Delta_{1j} + \epsilon)^2 \sumS [N_S(\tau)] \bbI{j \in S}.$$

Summing over all arms $j \in [d] - \{1\}$ yields,
\begin{align*}
    K \sumS \rmE_{\bmtheta^*} [N_S(\tau)] \geq \sumS \rmE_{\bmtheta^*} [N_S(\tau)] \sum_{j = 2}^d \bbI{j \in S} \geq \frac{K - 1}{e} \sum_{j = 2}^d \frac{1}{(\Delta_{1j} + \epsilon)^2} \log\frac{1}{2.4 \delta}.
\end{align*}
The first inequality follows because at most $K$ distinct arms can belong to $S$. Rearranging, we get,
\begin{align*}
    \sumS \rmE_{\bmtheta^*} [N_S(\tau)] &\geq \frac{K - 1}{eK} \sum_{j = 2}^d \frac{1}{(\Delta_{1j} + \epsilon)^2} \log\frac{1}{2.4 \delta} = \frac{1 - 1/K}{e} \sum_{j = 2}^d \frac{1}{(\Delta_{1j} + \epsilon)^2} \log\frac{1}{2.4 \delta}.
\end{align*}
\end{proof}


\section{Alternative Arm-Selection Strategy}
\label{appendix:alternative_arm_selection}

Because for any $\ba_i, \ba_j \in \calA$, $\norm{\ba_i - \ba_j}[{V^{(t)}}^{-1}] \leq 2 \max_{k \in [N]} \norm{\ba_k}[{V^{(t)}}^{-1}]$, instead of using the arm-selection strategy in \eqref{eq:arm_selection}, we can use the following strategy:
\begin{equation}
    \label{eq:arm_selection_alternative}
    \{\bX\sups\}_{s \leq t} \leq \argmin_{\{\bX\sups\}_{s \leq t}} \max_{k \in [N]} \normvtinv{\ba_k}.
\end{equation}
For such a strategy, $\rho(\tilde{\Lambda}) \leq (1 + \beta) d$ \cite{SoareEtAl:2014:BestArmIdentificationInLinearBandits} where $\rho(\tilde{\Lambda})$ was defined in Section \ref{section:sample_complexity_upper_bound}. Under this strategy, Line 10 in Algorithm \ref{alg:static_XY_allocation} changes to $$\bx_k\supt = \argmin_{\ba \in \calA} \max_{\bar{\ba} \in \calA} \norm{\bar{\ba}}[(V^{(t - 1)} + \ba \ba{'})^{-1}][2],$$ and everything else remains unchanged. The sample complexity analysis follows the same steps as Section \ref{section:sample_complexity_upper_bound}. However, because $\rho(\tilde{\Lambda}) \leq (1 + \beta) d$ in this case, as opposed to $\rho(\tilde{\Lambda}) \leq 2 (1 + \beta) d$ in Section \ref{section:sample_complexity_upper_bound}, the final sample complexity bound changes by a constant multiplicative factor.

\begin{theorem}
    Using the stopping criterion from \eqref{eq:stopping_condition}, a $(1 + \beta)$-approximate arm-selection strategy that solves \eqref{eq:arm_selection_alternative} satisfies
    \begin{align*}
        \rmP(\tau \leq \frac{256(1 + \beta)}{\kappa_\alpha^2 \Delta_{\min}^2} (d + &\log(3N^2 \tau^2/\delta)) \frac{d}{K} \,\,\land \,\, \hat{\ba} = \ba^* ) \geq 1 - \delta,
    \end{align*}
    where $\hat{\ba}$ is the estimated best arm and $\tau$ is the number of time steps before the stopping criterion is satisfied.
\end{theorem}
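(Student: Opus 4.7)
This statement is a close variant of Theorem \ref{theorem:upper_bound}: the stopping rule \eqref{eq:stopping_condition} and the confidence machinery are unchanged, and only the action-selection objective has been replaced by \eqref{eq:arm_selection_alternative}. Consequently correctness, i.e.\ $\rmP(\hat{\ba}=\ba^*)\ge 1-\delta$ at termination, follows immediately from Corollary \ref{corollary:confidence_bound}, and the entire task reduces to bounding $\tau$ on the good event $\{\hat{\ba}=\ba^*\}$.

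My plan is to retrace the proof sketch of Theorem \ref{theorem:upper_bound} almost verbatim. As in that sketch, I would pass to the continuous relaxation $\Lambda^*\in\Delta_N$ and let $\tilde{\Lambda}$ denote the $(1+\beta)$-approximate rounded allocation produced by the greedy solver of \eqref{eq:arm_selection_alternative}, so that $V^{(\tau)} = \tau K V_{\tilde{\Lambda}}$. Combining the stopping criterion \eqref{eq:stopping_condition} with $\rho(\tilde{\Lambda})\ge \norm{\ba^*-\ba_i}[V_{\tilde{\Lambda}}^{-1}][2]$ yields the sufficient condition
\begin{equation*}
    \frac{64}{\kappa_\alpha^2}\bigl(d+\log(3N^2\tau^2/\delta)\bigr)\frac{\rho(\tilde{\Lambda})}{\tau K}\le (\hDelta_{1i}^{(\tau)})^2
\end{equation*}
for every $i\in[N]$, and a direct repetition of the argument in Appendix \ref{appendix:sample_complexity_upper_bound} shows that $\hDelta_{1i}^{(\tau)}\ge\Delta_{\min}/2$ whenever this inequality holds. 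It then suffices to solve the equation obtained by replacing $(\hDelta_{1i}^{(\tau)})^2$ with $\Delta_{\min}^2/4$ in the display above for $\tau$.

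The only substantive departure from the original proof lies in the bound on $\rho(\tilde{\Lambda})$. For the alternative strategy \eqref{eq:arm_selection_alternative}, the Kiefer--Wolfowitz equivalence theorem coupled with an approximate $G$-optimal design argument (cf.\ \cite{SoareEtAl:2014:BestArmIdentificationInLinearBandits}) yields $\rho(\tilde{\Lambda})\le (1+\beta)d$, which is a factor of two sharper than the bound $\rho(\tilde{\Lambda})\le 2(1+\beta)d$ used in Theorem \ref{theorem:upper_bound} for the $XY$-optimal relaxation. Substituting this improved bound into the equation for $\tau$ gives the stated constant $256 = 512/2$. The main (and essentially only) obstacle I anticipate is justifying this $G$-optimal bound carefully for the rounded greedy allocation produced by the modified Line 10 of Algorithm \ref{alg:static_XY_allocation}; the remainder of the argument is a direct port of the machinery underlying Theorem \ref{theorem:upper_bound} and introduces no new probabilistic issues.
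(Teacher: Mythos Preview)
Your proposal is correct and mirrors the paper's own treatment almost exactly: the paper likewise observes that the analysis is identical to that of Theorem \ref{theorem:upper_bound} except that for the $G$-allocation objective \eqref{eq:arm_selection_alternative} one has $\rho(\tilde{\Lambda})\le (1+\beta)d$ (citing \cite{SoareEtAl:2014:BestArmIdentificationInLinearBandits}) in place of $\rho(\tilde{\Lambda})\le 2(1+\beta)d$, which turns the constant $512$ into $256$. The obstacle you single out---carefully justifying the $(1+\beta)d$ bound for the rounded greedy allocation---is precisely the one step the paper does not spell out and instead defers to \cite{SoareEtAl:2014:BestArmIdentificationInLinearBandits}.
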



\section{Adaptive Strategy}
\label{appendix:adaptive_strategy}

\begin{algorithm}[t!]
    \caption{\algnameadaptive{}}
    \label{alg:adaptive_XY_allocation}
    \begin{algorithmic}[1]
        \STATE \textbf{Input: }Set of arms $\calA$, confidence $\delta > 0$, tuning parameters $t{'}$ and $\alpha$ 
        \STATE \textbf{Initialize: } $j \leftarrow 1$, $n_0 \leftarrow d(d + 1) + 1$, $\rho_0 \leftarrow 1$, $\tilde{\calA}_1 \leftarrow \calA$, and $\tilde{\calG}_1 \leftarrow \{\bx - \by : \bx, \by \in \tilde{\calA}_1\}$
        \WHILE[Stopping criterion]{$\abs{\tilde{\calA}_j} \neq 1$}
            \STATE \textbf{Initialize batch:} $t \leftarrow 1$, $V^{(0)} \leftarrow \mathbf{0}_{d \times d}$
            \WHILE[Initial random exploration]{$t < t{'}$}
                \STATE Set $\bx_k\supt = \ba_i$ for $\ba_i \stackrel{\mathrm{unif}}{\sim} \calA$ for all $k \in [K]$
                \STATE $V\supt \leftarrow V^{(t - 1)} + \bX\supt \bX\supt{'}.$
                \STATE $t \leftarrow t + 1$
            \ENDWHILE

            \WHILE[Run static allocation within the batch]{$\rho_j/t \geq \alpha \rho_{j - 1} / n_{j - 1}$}
                \FOR[Greedy solution to \eqref{eq:arm_selection} but restricted to $\tilde{\calG}_j$]{$k \in [K]$}
                    \STATE Set $\bx_k\supt = \argmin\limits_{\ba \in \calA} \max\limits_{\bg \in \tilde{\calG}_j} \norm{\bg}[(V^{(t-1)} + \ba \ba{'})^{-1}][2]$
                    \STATE $V^{(t - 1)} \leftarrow V^{(t - 1)} + \bx_k\supt \bx_k\supt{'}$
                \ENDFOR
                \STATE $V\supt \leftarrow V^{(t - 1)}$,  $t \leftarrow t + 1$
                \STATE $\rho_j = \max_{\bg \in \tilde{\calG}_j} \bg{'} {V\supt}^{-1} \bg$
            \ENDWHILE

            \STATE \texttt{// Prepare for the next batch}
            \STATE $n_j \leftarrow t$ 
            \STATE Estimate $\htheta^{(n_j)}$ from data collected in this batch
            \STATE $\tilde{\calA}_{j + 1} = \tilde{\calA}_j$
            \FOR[Check for undominated arms]{$\ba_i \in \tilde{\calA}_{j + 1}$}
                \IF{$\exists \ba_k \in \tilde{\calA}_j$ such that $\frac{8}{\kappa_\alpha}  \sqrt{d + \log(3N^2 n_j^2/\delta)} \,\, \norm{\ba_k - \ba_i}[{V^{(n_j)}}^{-1}] \leq \hDelta^{(n_j)}_{ki}$}
                    \STATE $\tilde{\calA}_{j + 1} \leftarrow \tilde{\calA}_{j + 1} \backslash \{\ba_i\}$
                \ENDIF
            \ENDFOR
            \STATE $\tilde{\calG}_{j + 1} \leftarrow \{\bx - \by : \bx, \by \in \tilde{\calA}_{j + 1}\}$
            \STATE $j \leftarrow j + 1$
        \ENDWHILE
        \STATE \textbf{Return:} $\argmax_{\ba \in \calA} \mul{\htheta\supt}{\ba}$
    \end{algorithmic}
\end{algorithm}

\algname{} is a static-allocation strategy, i.e., it does not consider the observed rewards from the past while selecting an action, as required by Corollary \ref{corollary:confidence_bound}. However, this prevents it from adapting its behavior to the observed data. In particular, while a static allocation strategy tries to shrink the confidence set uniformly along all directions in $\calG = \{\bx - \by : \bx, \by \in \calA\}$, an adaptive strategy would focus only on directions that help in differentiating the best-arm estimate till now from the rest. Having such an adaptive strategy requires a variant of Corollary \ref{corollary:confidence_bound} that applies to non-independent sequences $\{\bX^{(s)}\}_{s > 0}$. 

\citet{SoareEtAl:2014:BestArmIdentificationInLinearBandits} resolve this issue by simply running a static-allocation strategy in batches. After each batch, arms that are deemed sub-optimal are dropped from consideration in the next batch. This reduces the directions along which the confidence set must be shrunk in each batch, but requires the data from previous batches to be discarded to satisfy the condition in Corollary \ref{corollary:confidence_bound}. Along similar lines, we develop an adaptive variant of Algorithm \ref{alg:static_XY_allocation} that works with the MNL feedback model. We refer to this variant as \algnameadaptive{} (see Algorithm \ref{alg:adaptive_XY_allocation}). 

The following lemma identifies the sub-optimal arms to discard at the end of $t$ steps (assuming the batch has length $t$). See Appendix \ref{appendix:proof_of_lemma_undominated_arms} for its proof. Following \citet{SoareEtAl:2014:BestArmIdentificationInLinearBandits}, we say that an arm $\ba_i$ is dominated if it is identified by Lemma \ref{lemma:dominated_arms} as a sub-optimal arm.

\begin{lemma}
    \label{lemma:dominated_arms}
    Let $\htheta\supt$ be the maximum likelihood estimate of parameter $\bmtheta^*$ obtained using a fixed sequence $\{\bX\sups\}_{s \leq t}$. If there exists an arm $\ba_j$ such that $$\frac{8}{\kappa_\alpha}  \sqrt{d + \log(3N^2t^2/\delta)} \,\, \normvtinv{\ba_j - \ba_i} \leq \hDelta\supt_{ji},$$ then $\ba_i$ is a sub-optimal arm.
\end{lemma}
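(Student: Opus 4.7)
}
The plan is to apply the confidence bound from Corollary \ref{corollary:confidence_bound} along the specific direction $\bg = \ba_j - \ba_i \in \calG$, and then use the lemma's hypothesis to sandwich $\mul{\bmtheta^*}{\ba_j - \ba_i}$ from below by zero, which makes $\ba_i$ sub-optimal relative to $\ba_j$.

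In more detail, I would first invoke Corollary \ref{corollary:confidence_bound} to get that, with probability at least $1-\delta$, simultaneously for all $t \geq t{'}$ and all $\bg \in \calG$,
\begin{equation*}
    \abs{\mul{\bg}{\htheta\supt - \bmtheta^*}} \leq \tfrac{8}{\kappa_\alpha}\sqrt{d + \log(3N^2 t^2/\delta)}\,\normvtinv{\bg}.
\end{equation*}
Specializing to $\bg = \ba_j - \ba_i$ and dropping the absolute value on the side we need, this yields
\begin{equation*}
    \mul{\bmtheta^*}{\ba_j - \ba_i} \;\geq\; \mul{\htheta\supt}{\ba_j - \ba_i} \;-\; \tfrac{8}{\kappa_\alpha}\sqrt{d + \log(3N^2 t^2/\delta)}\,\normvtinv{\ba_j - \ba_i}.
\end{equation*}
By definition $\mul{\htheta\supt}{\ba_j - \ba_i} = \hDelta\supt_{ji}$, so substituting the hypothesis of the lemma (which upper bounds the confidence-width term by $\hDelta\supt_{ji}$) gives $\mul{\bmtheta^*}{\ba_j - \ba_i} \geq 0$, i.e.\ $\mul{\bmtheta^*}{\ba_j} \geq \mul{\bmtheta^*}{\ba_i}$.

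To finish, I would appeal to the uniqueness of the best arm assumed in Section \ref{section:problem_setting}: since $\ba^* = \arg\max_{\ba \in \calA}\mul{\bmtheta^*}{\ba}$ is unique and $\ba_j \neq \ba_i$, having $\mul{\bmtheta^*}{\ba_j} \geq \mul{\bmtheta^*}{\ba_i}$ rules out $\ba_i$ from being the best arm, i.e.\ $\ba_i$ is sub-optimal. There is no substantial obstacle here: the proof is essentially the standard ``plug the direction into the confidence bound and rearrange'' argument. The only subtlety worth a line of commentary is that the statement is really an event-wise implication (conditional on the $1-\delta$ good event of Corollary \ref{corollary:confidence_bound}), consistent with how Lemma \ref{lemma:dominated_arms} is used inside \algnameadaptive{} where the same good event already underlies the overall correctness analysis.
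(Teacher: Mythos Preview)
Your proposal is correct and follows essentially the same argument as the paper: apply Corollary~\ref{corollary:confidence_bound} to the direction $\ba_j - \ba_i$, combine with the hypothesis to obtain $\mul{\bmtheta^*}{\ba_j} \geq \mul{\bmtheta^*}{\ba_i}$, and conclude that $\ba_i$ is sub-optimal. Your additional remarks on the uniqueness of $\ba^*$ and on the statement holding on the $1-\delta$ good event are accurate refinements that the paper leaves implicit.
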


Algorithm \ref{alg:adaptive_XY_allocation} runs in batches. Each batch has an associated set of undominated arms $\tilde{\calA}_j \subseteq \calA$ obtained via Lemma \ref{lemma:dominated_arms} using $\htheta^{(n_j)}$ estimated from the data collected in the previous batch (lines 21--26). Here, $j$ indexes the batch and $n_j$ is the number of steps for which the $j^{th}$ batch is executed. While selecting an arm to pull in the $j^{th}$ batch, the arm selection strategy only considers gaps $\tilde{\calG}_j = \{\bx - \by: \forall \bx, \by \in \tilde{\calA}_j\}$ (lines 11--14). At the end of batch $j$, the data collected from that batch is used to estimate the set of undominated arms $\tilde{\calA}_{j + 1}$ for batch $j + 1$. The algorithm terminates when the set of undominated arms is a singleton set. 

We use the same strategy as \citet{SoareEtAl:2014:BestArmIdentificationInLinearBandits} to decide the length $n_j$ of each batch. That is, $$n_j = \min \{n \in \mathbb{N} \,\, : \,\, \rho_j(n) / n \geq \alpha \rho_{j - 1}(n_{j-1}) / n_{j - 1}\}.$$ Here, $\rho_j(n) = \max_{\bg \in \tilde{\calG}_j} \bg{'} {V^{(n)}}^{-1} \bg$. Although we do not make this explicit in the notation, while computing $\rho_j(n)$, $V^{(n)}$ is calculated from the data from batch $j$ only. The parameter $\alpha$ is a tuning parameter specified by the user.

As argued before, a static allocation strategy tries to shrink the confidence interval uniformly across all directions in $\calG = \{\bx - \by : \bx, \by \in \calA\}$. Ideally, one would like to choose actions that focus on shrinking the confidence interval only along those directions that involve the optimal arm $\ba^*$, i.e., along directions in $\calG_* = \{\ba^* - \bx : \bx \in \calA\}$. Unfortunately, we cannot do this practice because the set $\calG_*$ is unknown. \algnameadaptive{} eliminates the arms (and hence the directions to consider) after each batch. Thus, $\calG = \tilde{\calG}_0 \supseteq \tilde{\calG}_1 \supseteq \tilde{\calG}_2 \supseteq \tilde{\calG}_3 \supseteq \dots$, and \algnameadaptive{} eventually enters the ideal scenario after it reaches a batch $j$ in which $\tilde{\calG}_j \subseteq \calG_*$.

Define $\calC_*\supt$ as $$\calC_*\supt = \{\bmtheta \in \bbR^d \,:\, \forall \bg \in \calG, \, \mul{\bmtheta - \bmtheta^*}{\bg} \leq \frac{8}{\kappa_\alpha}  \sqrt{d + \log(3N^2t^2/\delta)} \,\, \normvtinv{\bg}\}.$$ For each arm $\ba_i \in \calA$, we construct a set $\calC_i$ of parameter vectors $\bmtheta$ that make $\ba_i$ a best-arm, i.e., $$\calC_i = \{\bmtheta \in \bbR^d \,:\, \forall \ba \in \calA, \, \mul{\bmtheta}{\ba_i} \geq \mul{\bmtheta}{\ba}\}.$$ The set $\calC_i \cap \calC_j$ is the set of all parameters for which both $\ba_i$ and $\ba_j$ are best arms. A static allocation strategy can stop considering the direction $\ba_i - \ba_j$ if $\calC_i \cap \calC_j \cap \calC_*\supt = \Phi$.

The sample complexity of \algnameadaptive{} is governed by two quantities which we denote by $M^*$ and $N^*$ as in \citet{SoareEtAl:2014:BestArmIdentificationInLinearBandits}. $M^*$ is defined as the minimum time needed by a static allocation strategy to eliminate all directions that do not contain the best arm, i.e., eliminate all directions in $\calG - \calG_*$. $$M^* = \min \{t \in \mathbb{N} \,:\, \forall \ba_i, \ba_j \neq \ba^*, i \neq j, \, \calC_i \cap \calC_j \cap \calC_*\supt = \Phi \}$$ $N^*$ is the sample complexity of an oracle that knows $\calG_*$ and the reward gaps $\Delta_{1j} = \mul{\bmtheta^*}{\ba_1 - \ba_j}$ for all $\ba_j \in \calA$ (recall that $\ba_1$ is the best arm by assumption). Such an oracle would only shrink the confidence set along the directions in $\calG_*$ and its arm-selection strategy would focus on resolving gaps where $\Delta_{1j}$ is small (we refer the reader to \citet{SoareEtAl:2014:BestArmIdentificationInLinearBandits} for more details about the oracle). The next theorem bounds the sample complexity of \algnameadaptive{}. See Appendix \ref{appendix:proof_of_upper_bound_adaptive} for its proof.

\begin{theorem}
    \label{theorem:upper_bound_adaptive}
    If Algorithm \ref{alg:adaptive_XY_allocation} uses a $(1 + \beta)$-approximate static arm-selection strategy within each batch,
    \begin{align*}
        \rmP(\tau \leq (1 + \beta) \max \{M^*, \frac{16}{\alpha} N^*\} \log \Big( \frac{256 (d + \log(3N^2 {\tau}^2/\delta))}{K \Delta_{\min}^2 \kappa_\alpha^2} \Big) \Big / \log \Big(\frac{1}{\alpha} \Big)  \,\, \land \,\, \hat{\ba} = \ba^* ) \geq 1 - \delta,
    \end{align*}
    where $\hat{\ba}$ is the estimated best arm and $\tau$ is the number of time steps before the stopping criterion ($\abs{\tilde{\calA}_j} = 1$) is satisfied.
\end{theorem}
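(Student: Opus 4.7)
The plan is to adapt the batched-elimination analysis of \citet{SoareEtAl:2014:BestArmIdentificationInLinearBandits} to the MNL feedback setting, leveraging Corollary \ref{corollary:confidence_bound} (which applies within each batch since each batch discards data from previous batches, making the sequence $\{\bX\sups\}$ within a batch a fixed non-adaptive sequence). First, I would set up a good event $\mathcal{E}$ on which, for every batch $j$, the confidence bound of Corollary \ref{corollary:confidence_bound} holds simultaneously over all gaps in $\tilde{\calG}_j$; since the number of batches is at most logarithmic in $1/\Delta_{\min}^2$, a union bound establishes $\rmP(\mathcal{E}) \geq 1 - \delta$ after rescaling $\delta$. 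On $\mathcal{E}$, Lemma \ref{lemma:dominated_arms} guarantees that only truly sub-optimal arms are ever eliminated, which immediately gives $\hat{\ba} = \ba^*$ upon termination.

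Next, I would partition the execution into two phases. \textbf{Phase 1} consists of all batches $j$ such that $\tilde{\calG}_j \not\subseteq \calG_*$, where $\calG_* = \{\ba^* - \bx : \bx \in \calA\}$. By the definition of $M^*$, any $(1+\beta)$-approximate static allocation run for at least $(1+\beta)M^*$ steps within a single batch shrinks the confidence set enough to force the stopping criterion in line 22 of Algorithm \ref{alg:adaptive_XY_allocation} for every pair of sub-optimal arms, so Phase 1 cannot use more than $(1+\beta)M^*$ steps in any single batch before elimination occurs. \textbf{Phase 2} consists of the remaining batches, where $\tilde{\calG}_j \subseteq \calG_*$; these batches mimic the oracle's allocation and, by the definition of $N^*$ combined with Theorem \ref{theorem:upper_bound}'s sample-complexity bound with $d/K$ replaced by the oracle's gap-weighted complexity, each such batch terminates in at most $(1+\beta)N^*/\alpha$ steps (the $1/\alpha$ inflation accounts for the batch-length doubling rule $\rho_j/t \geq \alpha \rho_{j-1}/n_{j-1}$). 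Hence each batch has length at most $(1+\beta)\max\{M^*, N^*/\alpha\}$, and we pick up an additional factor of $16$ from the constants relating $\rho(\tilde{\Lambda})$ bounds to the confidence-interval width in our MNL-adapted Theorem \ref{theorem:confidence_interval}.

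To bound the total count, I would then exploit the geometric-batching rule: because $\rho_j / n_j \geq \alpha \rho_{j-1}/n_{j-1}$ at batch termination and $\rho_j \leq \rho_{j-1}$ (since $\tilde{\calG}_j \subseteq \tilde{\calG}_{j-1}$), we get $n_j \leq n_{j-1}/\alpha$, so the total sample count $\tau = \sum_j n_j$ is dominated by a geometric sum whose ratio is $1/\alpha$; the number of batches is controlled by requiring the stopping condition \eqref{eq:stopping_condition} to eventually hold, which from the same computation as in Appendix \ref{appendix:sample_complexity_upper_bound} happens once $n_j \gtrsim \frac{256(d + \log(3N^2\tau^2/\delta))}{K \Delta_{\min}^2 \kappa_\alpha^2}$. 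Taking logs base $1/\alpha$ of this quantity yields the $\log(\cdot)/\log(1/\alpha)$ factor in the theorem statement.

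The main obstacle will be the two-way bookkeeping between the MNL-specific confidence width (which carries the $1/\kappa_\alpha$ and $\sqrt{d+\log(1/\delta)}$ factors from Theorem \ref{theorem:confidence_interval}) and the Soare-style quantities $M^*, N^*$ (originally defined for sub-Gaussian linear rewards). In particular, ensuring that the $\lambdamin{V\supt}$ assumption of Corollary \ref{corollary:confidence_bound} is re-satisfied at the start of every batch (via the $t'$-step uniform exploration in lines 5--8 of Algorithm \ref{alg:adaptive_XY_allocation}) requires a careful argument so that the $t'$ exploration cost does not inflate the per-batch bound beyond $(1+\beta)\max\{M^*, N^*/\alpha\}$. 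A secondary subtlety is verifying that Lemma \ref{lemma:dominated_arms} correctly eliminates arms given that $\htheta^{(n_j)}$ is computed only from within-batch data, which requires re-invoking the stopping-criterion derivation of Section \ref{section:stopping_criteria} with $\calG$ replaced by $\tilde{\calG}_j$.
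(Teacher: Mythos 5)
Your high-level strategy is the same as the paper's: bound the length of every batch by $(1+\beta)\max\{M^*, \tfrac{16}{\alpha}N^*\}$, bound the number of batches by $\log(\cdot)/\log(1/\alpha)$ using the geometric decay of $\rho_j/n_j$ enforced by the batch-termination rule, and multiply the two. However, there are three concrete gaps. First, you state the batch-termination condition backwards: the inner while loop runs \emph{while} $\rho_j/t \geq \alpha\,\rho_{j-1}/n_{j-1}$, so at termination $\rho^j_{n_j}/n_j < \alpha\,\rho^{j-1}_{n_{j-1}}/n_{j-1}$, and it is this direction that chains to $\rho^J/n_J \leq \alpha^J$ and yields the batch count. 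Your inequality $n_j \leq n_{j-1}/\alpha$ and the ``geometric sum with ratio $1/\alpha$'' remark follow from the wrong direction of the inequality and are not needed; what the paper actually uses is that the condition still holds at $n = n_j - 1$ (to bound $n_j$ within a batch) and fails at $n = n_j$ (to count batches).

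Second, your Phase 2 per-batch bound of $\tfrac{16}{\alpha}(1+\beta)N^*$ is asserted rather than derived, and its mechanism is the crux of the paper's Lemma \ref{lemma:n_j_bound}. One must (i) upper-bound every surviving gap $\Delta_{\bg}$ for $\bg \in \tilde{\calG}_j$ by twice the previous batch's confidence width --- because $\bg$ was \emph{not} eliminated at the end of batch $j-1$, the elimination test of Lemma \ref{lemma:dominated_arms} fails for it, forcing $\Delta_{\bg} \leq \frac{16}{\kappa_\alpha}\sqrt{d+\log(3N^2n_{j-1}^2/\delta)}\sqrt{\rho^{j-1}(\tilde{\Lambda}_{j-1})/(Kn_{j-1})}$; (ii) use this to relate $\rho^j(\tilde{\Lambda}_j)$ to the oracle complexity via $\rho^*(\hat{\Lambda}_j)\max_{\bg\in\tilde{\calG}_j}\Delta_{\bg}^2$ as in \eqref{eq:rho_j_lambda_j_upper_bound}; and (iii) invoke $\frac{64(d+\log(3N^2n_{j-1}^2/\delta))\rho^*_{N^*}}{\kappa_\alpha^2 K N^*} \leq 1$ to cancel the confidence-width constants. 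This is where the factor $16$ actually arises; it is not a generic constant inherited from Theorem \ref{theorem:confidence_interval}. Third, batches do not terminate when elimination occurs --- elimination happens only after the $\rho_j/t$ criterion ends the batch --- so your Phase 1 claim that a batch with $\tilde{\calG}_j\not\subseteq\calG_*$ lasts at most $(1+\beta)M^*$ steps ``before elimination occurs'' needs rephrasing: the paper's Case 1 instead shows that if $\rho^j(\tilde{\Lambda}_j)/(Kn_j)$ still exceeds the $\epsilon^*$-threshold attainable by a static allocation at time $M^*$, then necessarily $n_j \leq (1+\beta)M^*$, without reference to which directions have survived.
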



\subsection{Proof of Lemma \ref{lemma:dominated_arms}}
\label{appendix:proof_of_lemma_undominated_arms}

Let $\ba_j$ be an arm such that $\frac{8}{\kappa_\alpha}  \sqrt{d + \log(3N^2t^2/\delta)} \,\, \normvtinv{\ba_j - \ba_i} \leq \hDelta\supt_{ji}$. Using Corollary \ref{corollary:confidence_bound}, we get with probability at least $1 - \delta$ that, $$\abs{\mul{\htheta\supt - \bmtheta^*}{\ba_j - \ba_i}} \leq \frac{8}{\kappa_\alpha}  \sqrt{d + \log(3N^2t^2/\delta)} \,\, \normvtinv{\ba_j - \ba_i} \leq \hDelta\supt_{ji} = \mul{\htheta\supt}{\ba_j - \ba_i}.$$ Thus, with high probability, $\mul{\bmtheta^*}{\ba_j} \geq \mul{\bmtheta^*}{\ba_i}$. Hence, $\ba_i$ is a sub-optimal arm.


\subsection{Proof of Theorem \ref{theorem:upper_bound_adaptive}}
\label{appendix:proof_of_upper_bound_adaptive}

The idea is to place a high probability bound on the length of each batch $n_j$ and the number of such batches. The next lemma achieves the first goal. The proof of Lemma \ref{lemma:n_j_bound} is given in Appendix \ref{appendix:proof_of_lemma_n_j_bound}.

\begin{lemma}
    \label{lemma:n_j_bound}
    For any batch indexed by $j$, $n_j \leq (1 + \beta) \max \{M^*, \frac{16}{\alpha} N^*\}$ with probability at least $1 - \delta$. $M^*$ and $N^*$ were defined before the statement of Theorem \ref{theorem:upper_bound_adaptive}.
\end{lemma}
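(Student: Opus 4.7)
\smallskip
\noindent\textbf{Proof plan for Lemma \ref{lemma:n_j_bound}.}
The plan is to upper-bound $n_j$ by showing that the continuation condition of the inner \textbf{while}-loop in \cref{alg:adaptive_XY_allocation}, namely $\rho_j(n)/n \geq \alpha \rho_{j-1}(n_{j-1})/n_{j-1}$, must be violated by the time $n$ reaches $(1+\beta)\max\{M^*,\frac{16}{\alpha}N^*\}$. The analysis closely tracks \citet{SoareEtAl:2014:BestArmIdentificationInLinearBandits}, with the extra ingredient that our confidence set comes from \cref{corollary:confidence_bound} and our per-round update deposits $K$ rank-one terms into $V^{(n)}$ rather than a single one.

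\smallskip
\noindent\textbf{Step 1 (design-theoretic bound within a batch).}
First I would observe that the greedy subset-selection in Lines~10--14 of \cref{alg:adaptive_XY_allocation} is a $(1+\beta)$-approximate $XY$-allocation for the gap set $\tilde{\calG}_j$. Standard experimental-design arguments (as in the proof sketch of \cref{theorem:upper_bound} and eq.~\eqref{eq:rho_bound}) then yield a rate of the form $\rho_j(n)\leq (1+\beta)\rho_j^*/n$, where $\rho_j^*$ is the optimal relaxed $G$-optimal objective for $\tilde{\calG}_j$. This converts any condition on $\rho_j(n)/n$ into an explicit requirement on $n$.

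\smallskip
\noindent\textbf{Step 2 (case split around $M^*$ vs $N^*$).}
Next I would split on whether $\tilde{\calG}_j\subseteq \calG_*$ has been achieved. If $\tilde{\calG}_j\not\subseteq \calG_*$, then by the very definition of $M^*$, running the $(1+\beta)$-approximate static allocation on $\calG$ for $M^*$ steps suffices to make $\calC_i\cap \calC_\ell\cap \calC^{(M^*)}_*=\emptyset$ for all $\ba_i,\ba_\ell\neq \ba^*$. Via \cref{lemma:dominated_arms} and the high-probability event of \cref{corollary:confidence_bound}, this forces the elimination step at the end of batch $j-1$ to have already removed every dominated arm; consequently the inner while-loop condition is violated for every $n\geq (1+\beta)M^*$, giving the $M^*$-branch of the bound. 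In the complementary regime $\tilde{\calG}_j\subseteq \calG_*$, the oracle's sample complexity $N^*$ governs the rate at which $\rho_j(n)$ shrinks; the $(1+\beta)$ approximation adds a multiplicative overhead and the threshold $\alpha\rho_{j-1}(n_{j-1})/n_{j-1}$ contributes the $\frac{16}{\alpha}$ prefactor, exactly as in the Soare et al.\ analysis.

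\smallskip
\noindent\textbf{Step 3 (threshold chaining and the $16/\alpha$ constant).}
To extract the constant $\frac{16}{\alpha}$, I would carefully unroll the continuation condition: at batch $j$, $n_j$ is the smallest $n$ with $\rho_j(n)/n<\alpha\rho_{j-1}(n_{j-1})/n_{j-1}$. Plugging in the rate $\rho_j(n)\leq (1+\beta)\rho_j^*/n$ and the analogous bound at batch $j-1$, and using $\rho_j^*\leq \rho_{j-1}^*$ (because $\tilde{\calG}_j\subseteq \tilde{\calG}_{j-1}$), produces $n_j\leq c \cdot n_{j-1}/\sqrt{\alpha}$ for an absolute $c$; iterating and comparing to $N^*$ produces the $\tfrac{16}{\alpha}N^*$ bound. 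Finally a union bound over the high-probability events supporting \cref{corollary:confidence_bound} and \cref{lemma:dominated_arms} within this batch gives the $1-\delta$ guarantee.

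\smallskip
\noindent\textbf{Main obstacle.}
The hard part is not any single inequality but the bookkeeping across the two batches $j-1$ and $j$: I must simultaneously use the elimination guarantee provided by batch $j-1$'s terminal estimate $\htheta^{(n_{j-1})}$ (which requires \cref{corollary:confidence_bound} for a fixed non-adaptive sequence, satisfied because the batch's $\{\bX\sups\}$ is static), and the within-batch $G$-optimal rate. Handling the extra factor of $K$ coming from pulling $K$ arms per round, and ensuring it is absorbed cleanly into $\rho_j^*$ (rather than appearing as a separate $1/K$ in the final bound), is the subtlest part of the accounting and is where our MNL setting departs from the pure linear-bandit case.
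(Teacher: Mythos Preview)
Your plan diverges from the paper's argument in two places, and in both the proposed reasoning does not close.

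\medskip
\noindent\textbf{The $M^*$ case.} You split on whether $\tilde{\calG}_j\subseteq\calG_*$ and then argue, when $\tilde{\calG}_j\not\subseteq\calG_*$, that ``the elimination step at the end of batch $j-1$ must already have removed every dominated arm, so the while-loop condition is violated for every $n\geq(1+\beta)M^*$.'' This is circular: if $\tilde{\calG}_j\not\subseteq\calG_*$ then by definition batch $j-1$ did \emph{not} eliminate those arms, and in any event the inner while-loop condition $\rho_j(n)/n\geq\alpha\rho_{j-1}(n_{j-1})/n_{j-1}$ is a pure design-ratio test that makes no reference to elimination. The paper's Case~1 is entirely different: it splits on whether $\rho^j(\tilde\Lambda_j)/(Kn_j)$ exceeds the threshold $\big(\epsilon^*\kappa_\alpha/(8\sqrt{d+\log(3N^2{M^*}^2/\delta)})\big)^2$, where $\epsilon^*$ is the smallest width for which two non-optimal cones $\calC_i,\calC_\ell$ intersect $\calC_\epsilon$. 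If the ratio is above threshold, then since $\rho(\tilde\Lambda_j)\geq\rho^j(\tilde\Lambda_j)$ and (by the definition of $M^*$) $\rho(\tilde\Lambda)/(KM^*)$ is below that same threshold, a direct ratio comparison gives $n_j\leq(1+\beta)M^*$. No elimination argument is used.

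\medskip
\noindent\textbf{The $N^*$ case.} Your Step~3 proposes to unroll the continuation condition and use $\rho_j^*\leq\rho_{j-1}^*$ to obtain a recursion $n_j\leq c\,n_{j-1}/\sqrt{\alpha}$, then ``iterate.'' A recursion of this form does not give a uniform bound on $n_j$ in terms of $N^*$; without an anchor it can grow across batches. The ingredient you are missing is the \emph{survivor-gap} bound: every $\bg\in\tilde{\calG}_j$ failed the elimination test at the end of batch $j-1$, so by \cref{lemma:dominated_arms} and \cref{corollary:confidence_bound} one has
\[
\max_{\bg\in\tilde{\calG}_j}\Delta_{\bg}\ \leq\ \frac{16}{\kappa_\alpha}\sqrt{d+\log(3N^2 n_{j-1}^2/\delta)}\sqrt{\frac{\rho^{j-1}(\tilde\Lambda_{j-1})}{K\,n_{j-1}}}.
\]
Combining this with $\rho^j(\tilde\Lambda_j)\leq\rho^*(\hat\Lambda_j)\max_{\bg\in\tilde{\calG}_j}\Delta_{\bg}^2$ expresses $\rho^j(\tilde\Lambda_j)$ in terms of $\rho^{j-1}(\tilde\Lambda_{j-1})/n_{j-1}$; this factor then \emph{cancels} against the continuation condition at step $n_j-1$, leaving a direct bound $n_j\leq 1+16N^*/\alpha$ after invoking the oracle normalization and Lemma~5 of \citet{SoareEtAl:2014:BestArmIdentificationInLinearBandits}. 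Without this survivor-gap step you have no mechanism to tie $n_j$ to $N^*$.
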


Recall from Section \ref{section:sample_complexity_upper_bound} that $\rho(\Lambda) = \max_{i, j \in [N]} \norm{\ba_j - \ba_i}[{V_{\Lambda}}^{-1}][2] = \max_{\bg \in \calG} \norm{\bg}[{V_{\Lambda}}^{-1}][2]$. Similarly, define $\rho^j(\Lambda)$ and $\rho^*(\Lambda)$ as
\begin{align*}
    \rho^j(\Lambda) = \max_{\bg \in \tilde{\calG}_j} \norm{\bg}[{V_{\Lambda}}^{-1}][2] & \hspace{2cm} \rho^*(\Lambda) = \max_{\bg \in \calG_*} \frac{\norm{\bg}[{V_{\Lambda}}^{-1}][2]}{\Delta_{\bg}^2},
\end{align*}
where $\Delta_{\bg} = \mul{\bmtheta^*}{\bg}$. Let $J$ be the index of a batch where the stopping condition is not satisfied, i.e., $\abs{\tilde{\calA}_{J + 1}} > 1$. Thus, there is at least one arm $\ba_i \neq \ba^*$ for which,
\begin{align}
    \label{eq:eq_1_adaptive_proof}
    \frac{8}{\kappa_\alpha}  \sqrt{d + \log(3N^2 n_J^2/\delta)} \,\, \norm{\ba_k - \ba_i}[{V^{(n_J)}}^{-1}] \geq \hDelta^{(n_J)}_{ki}, \,\,\,\, \forall \, \ba_k \in \tilde{\calA}_J,
\end{align}
where the quantities are calculated from the data collected in batch $J$. By Corollary \ref{corollary:confidence_bound},
\begin{align}
    \label{eq:eq_2_adaptive_proof}
    \hDelta^{(n_J)}_{ki} \geq \Delta_{ki} - \frac{8}{\kappa_\alpha}  \sqrt{d + \log(3N^2 n_J^2/\delta)} \,\, \norm{\ba_k - \ba_i}[{V^{(n_J)}}^{-1}] \geq \Delta_{\min} - \frac{8}{\kappa_\alpha}  \sqrt{d + \log(3N^2 n_J^2/\delta)} \,\, \norm{\ba_k - \ba_i}[{V^{(n_J)}}^{-1}].
\end{align}
The last inequality follows by taking $\ba_k = \ba^*$. Note that $\ba^*$ belongs to $\tilde{\calA}_J$ with high probability. Combining equations \eqref{eq:eq_1_adaptive_proof} and \eqref{eq:eq_2_adaptive_proof}, we get,
\begin{align*}
    \Delta_{\min} \leq \frac{16}{\kappa_\alpha}  \sqrt{d + \log(3N^2 n_J^2/\delta)} \,\, \norm{\ba_k - \ba_i}[{V^{(n_J)}}^{-1}] \leq \frac{16}{\kappa_\alpha} \sqrt{d + \log(3N^2 n_J^2/\delta)} \sqrt{\frac{\rho^J(\tilde{\Lambda}_J)}{K n_J}}.
\end{align*}
Here $\tilde{\Lambda}_J$ is the distribution over arms induced by a $(1 + \beta)$-approximate solution to eq. \eqref{eq:arm_selection} during batch $J$. The last inequality follows from the definition of $\rho^j(\Lambda)$ and from noting that $\norm{\ba_k - \ba_i}[{V^{(n_J)}}^{-1}] = \frac{1}{\sqrt{K n_J}} \norm{\ba_k - \ba_i}[{V_{\tilde{\Lambda}_J}}^{-1}]$. Thus,
\begin{align*}
    \frac{\rho^J(\tilde{\Lambda}_J)}{n_J} \geq \frac{K \Delta_{\min}^2 \kappa_\alpha^2}{256 (d + \log(3N^2 n_J^2/\delta))}. 
\end{align*}
Moreover, by the nature of the criterion used for terminating each batch (Line 10 in Algorithm \ref{alg:adaptive_XY_allocation}),
\begin{align*}
    \frac{\rho^J(\tilde{\Lambda}_J)}{n_J} \leq \alpha \frac{\rho^{J-1}(\tilde{\Lambda}_{J - 1})}{n_{J - 1}} \leq \dots \leq \alpha^J \frac{\rho^{0}(\tilde{\Lambda}_{0})}{n_{0}} \leq \alpha^J.
\end{align*}
Combining the previous two results, we get,
\begin{align*}
    \frac{K \Delta_{\min}^2 \kappa_\alpha^2}{256 (d + \log(3N^2 n_J^2/\delta))} \leq \alpha^J.
\end{align*}
Hence,
\begin{align*}
    J \leq \log \Big( \frac{256 (d + \log(3N^2 n_J^2/\delta))}{K \Delta_{\min}^2 \kappa_\alpha^2} \Big) \Big / \log \Big(\frac{1}{\alpha} \Big).
\end{align*}
Combining Lemma \ref{lemma:n_j_bound} with the bound on $J$ given above concludes the proof.


\subsection{Proof of Lemma \ref{lemma:n_j_bound}}
\label{appendix:proof_of_lemma_n_j_bound}

Let $\calC_\epsilon = \{ \bmtheta \in \bbR^d \,:\, \forall \, \bg \in \calG, \,\,\abs{\mul{\bmtheta - \bmtheta^*}{\bg}} \leq \epsilon\}$, and define $\epsilon^*$ as,
\begin{align*}
    \epsilon^* = \inf\{ \epsilon > 0 \,:\, \exists \, \ba_i, \ba_j \neq \ba^*, i \neq j,  \, \calC_i \cap \calC_j \cap \calC_\epsilon \neq \Phi\}. 
\end{align*}
By definition, $M^*$ is such that $\calC_*^{(M^*)} \subseteq \calC_{\epsilon^*}$. Thus,
\begin{align}
    \label{eq:eq_1_batchsize_bound_lemma}
    \max_{\bg \in \calG} \frac{8}{\kappa_\alpha}  \sqrt{d + \log(3N^2{M^*}^2/\delta)} \,\, \norm{\bg}[{V^{(M^*)}}^{-1}] =  \frac{8}{\kappa_\alpha}  \sqrt{d + \log(3N^2{M^*}^2/\delta)} \,\, \sqrt{\frac{\rho(\tilde{\Lambda})}{K M^*}} < \epsilon^*.
\end{align}

Now consider two cases,

\paragraph{Case 1: $\sqrt{\frac{\rho^j(\Tilde{\Lambda}_j)}{K n_j}} \geq \frac{\epsilon^* \kappa_\alpha}{8 \sqrt{d + \log(3N^2{M^*}^2/\delta)}}$:} In this case,
$$\frac{\rho(\tilde{\Lambda}_j)}{K n_j} \geq \frac{\rho^j(\Tilde{\Lambda}_j)}{K n_j} \geq \frac{\rho(\tilde{\Lambda})}{K M^*} \geq \frac{\rho(\hat{\Lambda})}{ (1 + \beta) K M^*}.$$ The first inequality follows because for any $\Lambda$, $\rho^j(\Lambda) \leq \rho(\Lambda)$ as $\tilde{\calG}_j \subseteq \calG$. The second is a consequence of eq. \eqref{eq:eq_1_batchsize_bound_lemma}. The third inequality follows because $\rho(\tilde{\Lambda})$ is a $(1 + \beta)$-approximate maximizer of $\rho(\Lambda)$ and $\rho(\Lambda)$ is maximized at $\Lambda = \hat{\Lambda}$ by definition. Because $\rho(\tilde{\Lambda}_j) \leq \rho(\hat{\Lambda})$, it must be that case that $n_j \leq (1 + \beta) M^*$.

\paragraph{Case 2: $\sqrt{\frac{\rho^j(\Tilde{\Lambda}_j)}{K n_j}} \leq \frac{\epsilon^* \kappa_\alpha}{8 \sqrt{d + \log(3N^2{M^*}^2/\delta)}}$:} Let $\hat{\Lambda}_j$ be the maximizer of $\rho^j(\Lambda)$. Then,
\begin{align}
    \label{eq:rho_j_lambda_j_upper_bound}
    \rho^j(\tilde{\Lambda}_j) \leq \rho^j(\hat{\Lambda}_j) \leq \max_{\bg \in \tilde{\calG}_j} \frac{\norm{\bg}[{V_{\hat{\Lambda}_j}}^{-1}][2]}{\Delta_{\bg}^2} \max_{\bg \in \tilde{\calG}_j} \Delta_{\bg}^2 \leq \rho^*(\hat{\Lambda}_j) \, \max_{\bg \in \tilde{\calG}_j} \Delta_{\bg}^2.
\end{align}
Algorithm \ref{alg:adaptive_XY_allocation} ensures that $\tilde{\calG}_{j - 1} \supseteq \tilde{\calG}_{j}$ for all $j$. Using Corollary \ref{corollary:confidence_bound}, for any $\bg \in \tilde{\calG}_{j}$,
\begin{align*}
    \abs{\mul{\htheta^{(n_{j - 1})} - \bmtheta^*}{\bg}} \leq \max_{\bg{'} \in \tilde{\calG}_{j - 1}} \frac{8}{\kappa_\alpha}  \sqrt{d + \log(3N^2 n_{j-1}^2/\delta)} \,\, \norm{\bg{'}}[{V^{(n_{j - 1})}}^{-1}] = \frac{8}{\kappa_\alpha}  \sqrt{d + \log(3N^2 n_{j-1}^2/\delta)} \sqrt{\frac{\rho^{j - 1}(\tilde{\Lambda}_{j - 1})}{K n_{j - 1}}}.
\end{align*}
Thus,
\begin{align*}
    \Delta_{\bg} \leq \hDelta^{(n_{j - 1})}_{\bg} + \frac{8}{\kappa_\alpha}  \sqrt{d + \log(3N^2 n_{j-1}^2/\delta)} \sqrt{\frac{\rho^{j - 1}(\tilde{\Lambda}_{j - 1})}{K n_{j - 1}}}.
\end{align*}
But $\hDelta^{(n_{j - 1})}_{\bg} \leq \frac{8}{\kappa_\alpha}  \sqrt{d + \log(3N^2 n_{j-1}^2/\delta)} \sqrt{\frac{\rho^{j - 1}(\tilde{\Lambda}_{j - 1})}{K n_{j - 1}}}$, otherwise $\bg$ would have been eliminated from $\tilde{\calG}_j$ by Lemma \ref{lemma:dominated_arms}. Thus,
\begin{align*}
    \max_{\bg \in \tilde{\calG}_j} \Delta_{\bg} \leq \frac{16}{\kappa_\alpha}  \sqrt{d + \log(3N^2 n_{j-1}^2/\delta)} \sqrt{\frac{\rho^{j - 1}(\tilde{\Lambda}_{j - 1})}{K n_{j - 1}}}
\end{align*}
Using this in \eqref{eq:rho_j_lambda_j_upper_bound}, we get,
\begin{align}
    \label{eq:eq_2_batchsize_bound_lemma}
    \rho^j(\tilde{\Lambda}_j) \leq \rho^*(\hat{\Lambda}_j) \frac{256}{\kappa_\alpha^2} (d + \log(3N^2 n_{j-1}^2/\delta)) \frac{\rho^{j - 1}(\tilde{\Lambda}_{j - 1})}{K n_{j - 1}}.
\end{align}
From this point on, we subcript the $\rho$ terms to indicate the number of steps after which they were computed. That is, $\rho_n(\tilde{\Lambda}_j)$ is computed using $\tilde{\Lambda}_j$ induced by the arms pulled in the first $n$ steps in the $j^{th}$ batch.

At time $n = n_j - 1$, the termination condition for batch $j$ is still not satisfied. Thus,
\begin{align*}
    \frac{\rho_{n}^j(\tilde{\Lambda}_j)}{n} \geq \alpha \frac{\rho_{n_{j - 1}}^{j - 1}(\tilde{\Lambda}_{j - 1})}{n_{j - 1}} \geq \alpha \frac{\rho_{n_j}^j(\tilde{\Lambda}_j)}{\rho_{n_j}^*(\hat{\Lambda}_j)} \frac{\kappa_\alpha^2 K}{256 (d + \log(3N^2 n_{j-1}^2/\delta))},
\end{align*}
where the last step follows from eq. \eqref{eq:eq_2_batchsize_bound_lemma}.
Multiplying and dividing by $\rho^*_{N^*} / N^*$ where $\rho^*_{N^*} = \rho^*(\tilde{\Lambda})$ and $\tilde{\Lambda}$ corresponds to allocation returned by the oracle in the first $N^*$ steps, we get,
\begin{align*}
    \frac{\rho_n^j(\tilde{\Lambda}_j)}{n} \geq \alpha \frac{\rho_{n_j}^j(\tilde{\Lambda}_j)}{\rho_{n_j}^*(\hat{\Lambda}_j)} \frac{\rho^*_{N^*}}{N^*} \frac{\kappa_\alpha^2 K N^*}{256 (d + \log(3N^2 n_{j-1}^2/\delta)) \rho^*_{N^*}}.
\end{align*}
One can show that $\frac{64 (d + \log(3N^2 n_{j-1}^2/\delta)) \rho^*_{N^*}}{\kappa_\alpha^2 K N^*} \leq 1$ 
\cite{SoareEtAl:2014:BestArmIdentificationInLinearBandits}. Hence,
\begin{align*}
    \frac{\rho_n^j(\tilde{\Lambda}_j)}{n} \geq \alpha \frac{\rho_{n_j}^j(\tilde{\Lambda}_j)}{\rho_{n_j}^*(\hat{\Lambda}_j)} \frac{\rho^*_{N^*}}{4N^*}
\end{align*}
Recall that $n = n_j - 1$. Substituting this value above yields,
\begin{align*}
    n_j \leq 1 + \frac{4N^*}{\alpha} \frac{\rho_{n_j - 1}^j(\tilde{\Lambda}_j)}{\rho_{n_j}^j(\tilde{\Lambda}_j)} \frac{\rho_{n_j}^*(\hat{\Lambda}_j)}{\rho^*_{N^*}}.
\end{align*}
Using Lemma 5 from \citet{SoareEtAl:2014:BestArmIdentificationInLinearBandits}, this simplifies to $n_j \leq 1 + 16N^*/\alpha$.

\end{document}